\newcommand{\cmark}{\ding{51}}%
\newcommand{\xmark}{\ding{55}}%
\newcommand{\ignore}[1]{}
\DeclareMathOperator*{\argminA}{arg\,min}    
\definecolor{lightcyan}{rgb}{0.88, 1.0, 1.0}
\definecolor{lightyellow}{rgb}{1.0, 1.0, 0.88}
\definecolor{linen}{rgb}{0.98, 0.94, 0.9}
\definecolor{lightblue}{rgb}{0.87, 0.922, 0.968}
\definecolor{lightgray}{rgb}{0.95, 0.95, 0.95}
\newmdtheoremenv[style=theoremstyle]{thm}{Theorem}
\newmdtheoremenv[style=theoremstyle]{lemma}{Lemma}
\newmdtheoremenv[style=defstyle]{assump}{Assumption}
\newmdtheoremenv[style=remarkstyle]{remark}{Remark}
\newmdtheoremenv[style=remarkstyle]{corollary}{Corollary}
\newmdtheoremenv[style=defstyle]{definition}{Definition}
\renewcommand{\citet}[1]{\citep{#1}}
\renewcommand{\cite}[1]{\citep{#1}}
\icmltitlerunning{Global Optimality of Actor-Critic Algorithms}
\begin{document}

\twocolumn[
\icmltitle{Closing the Gap: Achieving Global Convergence (Last Iterate) of Actor-Critic under Markovian Sampling with Neural Network Parametrization}
\icmlsetsymbol{equal}{*}

\begin{icmlauthorlist}
\icmlauthor{Mudit Gaur}{comp}
\icmlauthor{Amrit Singh Bedi}{comp2}
\icmlauthor{Di Wang}{comp3}
\icmlauthor{Vaneet Aggarwal}{comp4}
\end{icmlauthorlist}

\icmlaffiliation{comp}{Department of Statistics, Purdue University, West Lafayette, IN, U.S.A}
\icmlaffiliation{comp2}{Department of Computer Science, University of Central Florida}
\icmlaffiliation{comp3}{Department of Computer Science, KAUST}
\icmlaffiliation{comp4}{School of IE and School of ECE, Purdue University, West Lafayette, IN, U.S.A}

\icmlcorrespondingauthor{Mudit Gaur}{mgaur@purdue.edu}
\icmlcorrespondingauthor{Vaneet Aggarwal}{vaneet@purdue.edu}
%\icmlcorrespondingauthor{Mridul Agarwal}{agarw180@purdue.edu}
% You may provide any keywords that you
% find helpful for describing your paper; these are used to populate
% the "keywords" metadata in the PDF but will not be shown in the document
\icmlkeywords{Machine Learning, ICML}

\vskip 0.3in
]
\printAffiliationsAndNotice{}
\begin{abstract} \label{abstract}
% In reinforcement learning, Actor-Critic (AC) algorithms, particularly those augmented with neural network parametrizations, have exhibited remarkable empirical performance. But in their theoretical understanding, there are still lots of remaining open questions.  For instance, we focus on two significant shortcomings in the convergence bounds of existing results. 
%
% The existing theoretical analysis of Actor-Critic (AC) algorithms has two significant shortcomings in the convergence bounds. Firstly, the results are either local or average iterate convergence, offering no insights into the proximity of the objective to the optimal objective at the final iteration, which is a key metric in practice. Secondly, existing global convergence results for AC algorithms with neural critic parameterization assume i.i.d. sampling. This assumption is impractical, as samples in real-world scenarios are Markovian. Moreover, existing results are restricted to finite action spaces, rendering them inapplicable to continuous control tasks such as robotics. 
The current state-of-the-art theoretical analysis of Actor-Critic (AC) algorithms significantly lags in addressing the practical aspects of AC implementations. This crucial gap needs bridging to bring the analysis in line with practical implementations of AC. To address this, we advocate for considering the MMCLG criteria: \textbf{M}ulti-layer neural network parametrization for actor/critic, \textbf{M}arkovian sampling, \textbf{C}ontinuous state-action spaces, the performance of the \textbf{L}ast iterate, and \textbf{G}lobal optimality. These aspects are practically significant and have been largely overlooked in existing theoretical analyses of AC algorithms. 
In this work, we address these gaps by providing the first comprehensive theoretical analysis of AC algorithms that encompasses all five crucial practical aspects (covers MMCLG criteria). We establish global convergence sample complexity bounds of $\tilde{\mathcal{O}}\left({\epsilon^{-3}}\right)$. We achieve this result through our novel use of the weak gradient domination property of MDP's and our unique analysis of the error in critic estimation.

\ignore{\textcolor{red}{Mudit:We need to add some novel technical aspects of our analysis}}
\end{abstract}

\section{Introduction} \label{introduction}

%basic motivation ends with theory and practice gap
Actor-Critic (AC) algorithms \citep{konda1999actor} have emerged as a cornerstone in modern reinforcement learning, showcasing remarkable versatility and effectiveness across a diverse range of applications such as games \citep{zhou2022pac}, network scheduling \cite{agarwal2022multi}, robotics \citep{chen2022scalable}, autonomous driving \citep{tang2022highway}, and video streaming \citep{zhang2022duasvs}. At their core,  AC algorithms aim to maximize the expected returns, denoted by $J(\lambda)$, where $\lambda \in \mathbb{R}^{d}$ is the policy parameter and $J(\lambda)$ is the expected reward for a policy parameterised by $\lambda$. The algorithms involves an interplay between the gradient ascent for the actor parameter and the estimation of the action value function or critic. Theoretical analysis of AC algorithms, however, often lags behind their practical implementations. Due to the practical use of multi-layer neural network parameterizations for the actor as well as critic, an important observation in recent research is the widening gap between theoretical models and real-world applicability. True insights are gained when theoretical analysis mirrors practical complexities, even if it means accepting more conservative bounds for the sake of realism. Our focus in this work is to close this gap between theoretical analysis and practical implementations of AC algorithms.
\begin{table*}[ht]
\centering
\caption{This table summarizes the features of different actor-critic convergence results. Our result is the first to provide last iterate sample complexity results of AC for an MDP setting with multi layer neural network for the actor-critic, continuous state and action space, Markovian sampling  and global optimality results on the last iterate performance.%\\
%\tiny{$^1$This work shows a sample complexity of $\epsilon^{-2}$ in the published version, while corrected it to $\epsilon^{-3}$ in a later arXiv version.}
\vspace{2mm}}
\label{tbl_related}%
{\begin{tabular}{|c|c|c|c|c|c|c|}
\hline
References & \multicolumn{1}{c|}{\begin{tabular}[c]{@{}c@{}}  Per. of \\ \textbf{L}ast Iterate \end{tabular}} & \multicolumn{1}{c|}{\begin{tabular}[c]{@{}c@{}}\textbf{G}lobal \\ Optimality \end{tabular}} &  \multicolumn{1}{c|}{\begin{tabular}[c]{@{}c@{}} \textbf{C}ontinuous State \\ Action Space \end{tabular}} &   \multicolumn{1}{c|}{\begin{tabular}[c]{@{}c@{}} \textbf{M}ulti Layer  \\ NN AC \end{tabular}} &   \multicolumn{1}{c|}{\begin{tabular}[c]{@{}c@{}} \textbf{M}arkovian \\ Sampling \end{tabular}} &   \multicolumn{1}{c|}{\begin{tabular}[c]{@{}c@{}} Sample \\ Complexity \end{tabular}}   \\ \hline
     \citep{xu2020non}     &                    \textcolor{red}{\xmark}               &                \textcolor{green}{\cmark}         &     \textcolor{green}{\cmark}               &                     \textcolor{red}{\xmark}                                                                  &           \textcolor{green}{\cmark}    &       $\tilde{\mathcal{O}}(\epsilon^{-4})$                                                       \\ 
    \citep{khodadadian2021finite}     &                    \textcolor{red}{\xmark}               &                \textcolor{green}{\cmark}         &     \textcolor{red}{\xmark}               &                     \textcolor{red}{\xmark}                                                                  &            \textcolor{green}{\cmark}    &       $\tilde{\mathcal{O}}(\epsilon^{-3})$                                                       \\     
 \citep{xu2020improving}          &       \textcolor{red}{\xmark}               &                \textcolor{green}{\cmark}         &      \textcolor{green}{\cmark}                &                     \textcolor{red}{\xmark}                                                                  &           \textcolor{green}{\cmark}    &       $\tilde{\mathcal{O}}(\epsilon^{-3})$                                                        \\  
 \citep{xu2021doubly}            &        \textcolor{red}{\xmark}               &                \textcolor{green}{\cmark}         &     \textcolor{green}{\cmark}               &                     \textcolor{red}{\xmark}                                                                               &           \textcolor{green}{\cmark}    &       $\tilde{\mathcal{O}}(\epsilon^{-4})$                                                       \\  
 \citep{wang2019neural}          &        \textcolor{red}{\xmark}               &                \textcolor{green}{\cmark}         &     \textcolor{red}{\xmark}               &                     \textcolor{red}{\xmark}                                                                                &           \textcolor{red}{\xmark}    &       $\tilde{\mathcal{O}}(\epsilon^{-4})$                                                       \\
\citep{cayci2022finite}          &        \textcolor{red}{\xmark}               &                \textcolor{green}{\cmark}         &     \textcolor{red}{\xmark}               &                     \textcolor{red}{\xmark}                                                                                &           \textcolor{red}{\xmark}    &       $\tilde{\mathcal{O}}(\epsilon^{-4})$                                                       \\ 
\citep{fu2020single}            &        \textcolor{red}{\xmark}               &                \textcolor{green}{\cmark}         &     \textcolor{red}{\xmark}               &                    \textcolor{green}{\cmark}                                                                               &           \textcolor{red}{\xmark}    &       $\tilde{\mathcal{O}}(\epsilon^{-6})$                                                                           \\
\citep{tian2023convergence}     &        \textcolor{red}{\xmark}               &                \textcolor{red}{\xmark}         &     \textcolor{red}{\xmark}               &                    \textcolor{green}{\cmark}                                                                               &           \textcolor{green}{\cmark}    &       $\tilde{\mathcal{O}}(\epsilon^{-2})$                                         \\
\textbf{ This work}             &        \textcolor{green}{\cmark}               &               \textcolor{green}{\cmark}        &     \textcolor{green}{\cmark}               &                    \textcolor{green}{\cmark}                                                                               &           \textcolor{green}{\cmark}    &       $\tilde{\mathcal{O}}(\epsilon^{-3})$
\\
\hline
\end{tabular}
\footnotetext[1]{Footnote}
}
\footnotetext[1]{Footnote}
\end{table*}

% now we emphasize on important points which a theoretical analysis should cover!!
To align closely with practical settings, it is essential that any theoretical analysis of Actor-Critic (AC) algorithms thoroughly considers five crucial aspects. These are (a) \textbf{M}ulti-layer neural network parametrization for actor/critic, (b) \textbf{M}arkovian sampling, (c) \textbf{C}ontinuous state-action spaces, (d) performance of the \textbf{L}ast iterate, and (e) \textbf{G}lobal optimality. The significance of the MMCLG criteria lies in its alignment with practical implementations: Deep neural networks are commonly used for actor-critic implementations \citep{lee2020stochastic}, data in real-world scenarios is typically sampled in a Markovian fashion \citep{8896945}, and most applications, like robotics, operate in continuous spaces \citep{dankwa2019twin}. Furthermore, in practice, the last iterate of the algorithms is used to evaluate performance \citep{9576103}. Finally for training neural networks, the global convergence truly matters, as local convergence can be misleading in evaluating the effectiveness of the trained network \citep{swirszcz2016local}. 

\ignore{\textcolor{red}{Mudit, can you refine this paragraph by combining papers which covers only subset of MMCLG criteria ??, I tried but it requires more efforts}} The existing literature (cf. Table \ref{tbl_related}), while extensive, does not have any works that simultaneously address the above-mentioned five dimensions (MMCLG). There exist works which have achieved local convergence (which upper bound the quantity $\frac{1}{T}\sum_{i=1}^{T}||{\nabla}J(\lambda_{t})||^{2}$) for finite state spaces for \textbf{M}ulti-layer settings  with \textbf{M}arkovian sampling \citep{tian2023convergence}]. Another set of results have obtained \textbf{G}lobal optimality results known as average iterate complexity bounds or regret bounds. These establish an upper bound on the quantity $\frac{1}{T}\sum_{i=1}^{T}\left(J^{*}-J(\lambda_{t})\right)$ known as the regret\ignore{\textcolor{red}{NEED TO REFINE}}. For a linear critic parametrization,  such \textbf{G}lobal optimality bounds have been established in \citep{xu2020improving} for \textbf{M}arkovian sampling. For a neural network actor and critic parameterization \citet{cayci2022finite}, establishes \textbf{G}lobal optimality  where the actor and critic neural networks have a single hidden layer, while \citet{fu2020single} does so for a \textbf{M}ulti Layer neural network of arbitrary depth. Notably, none of the above works focus on the last iterate convergence aspect, which requires an upper bound on $J^{*}-J(\lambda_{T})$. Also none of the works having a \textbf{M}ulti Layer neural network of arbitrary depth work for \textbf{C}ontinuous state-action spaces.  Hence, in this work, we ask this question

% While average iterate convergence is a form of global convergence, thus stronger than local convergence, in practical implementations of actor critic algorithms such as in \citet{8513863,bahdanau2017an}, it is desirable that with each iteration of the algorithm, the policy parameter obtain a higher average return than the previous one, i.e. they get closer to the optimal as compared to previous estimates. This type of global convergence is known as last iterate convergence, a stronger convergence result than average iterate convergence. It defined as an upper bound on $J^{*}-J(\lambda_{T})$. We note no last iterate convergence results have been obtained so far for actor critic algorithms.

% Additionally, for all average iterate convergence results obtained so far with neural network critic parametrization, i.i.d sampling has been assumed. While this makes the analysis conceptually simpler, in practice samples are collected by following a fixed policy, stored in a data buffer and resampled from the buffer to reduce the correlation between samples. Finally, average iterate convergence results obtained so far with neural network critic parametrization are also restricted to finite action spaces, thus making them invalid for fields such as robotics.  Thus the question arises:
{ \emph{Is it possible to develop a theoretical analysis of actor-critic algorithm that covers all MMCLG criteria in one analysis?}}

%\textbf{Approach.} 
\noindent The above question essentially implies that can we obtain \textbf{G}lobal sample complexity bounds for the \textbf{L}ast iterate convergence of the actor-critic algorithm with a \textbf{M}ulti-layer neural network parametrization of the critic, without assuming i.i.d. sampling (under \textbf{M}arkovian) for \textbf{C}ontinuous spaces? We answer the above question in the affirmative in this work. Our main contributions as listed as follows.

\begin{itemize}[leftmargin=*]

\item  We establish an upper bound on the performance of the \textbf{L}ast Iterate  $J^{*}-J(\lambda_{T})$ in terms of the sum of the errors incurred in the estimation of the critic. This is done using our novel analysis combining the smoothness assumption on the policy parametrization and the weak gradient bound condition for MDP. This analysis is different from existing works such as  \citep{fatkhullin2023stochastic}, \citep{masiha2022stochastic}, where an upper bound on the last iterate performance is obtained using an unbiased estimator of the  policy gradient using sample trajectories. This is not available to us in actor critic where a parametric estimate of the critic is used and we have to account for the error incurred in critic estimation. Our analysis does not rely on the cardinality of the action space, unlike in existing multi layer critic analyses such as \citet{fu2020single} thereby, our \textbf{G}lobal convergence bound works for \textbf{C}ontinous state-action spaces.

In the analysis of critic error estimation, we derived a novel decomposition of error. The error is split into the error incurred due to the limited approximation ability of the class of function representing the critic and the error incurred due to the limited sample size to estimate the critic, as well as the error incurred in solving the critic estimate in a finite number of steps. This decomposition allows us to consider the \textbf{M}arkov dependence of samples and  a \textbf{M}ulti-layer neural network parametrization of the critic, which is the first result that achieves this. This is in contrast to  \citep{cayci2022finite,fu2020single}, where i.i.d. sampling was assumed.   

\item We derive a last iterate convergence sample complexity bound of $\tilde{\mathcal{O}}(\epsilon^{-3})$ for the actor-critic algorithm with neural network parameterizations for the critic and the actor. True to our knowledge, this work is the first to present a last iterate global convergence result for an actor-critic algorithm with neural network critic parametrization. It is also the best sample complexity for global convergence in terms of $\epsilon$ for an actor-critic algorithm with a neural network parametrization of the critic.

\end{itemize}
%{\bf Table 1 needs mention here - and need to be moved in Intro with a brief on key results. It is not clear ours is stronger - since unclear that our notion gives average convergence? SO, explain why you can compare across?}

\section{Related Works} \label{Related Works}

 \textbf{Policy Gradient:} Policy gradient algorithms, first conceptualized in  \cite{sutton1999policy} perform a gradient step on the parameters to obtain an estimate of the optimal policy. The estimate of the action value (or advantage) function is obtained by following the current estimate of the policy and calculating the action value function from the obtained rewards. In such a case, sample complexity estimates are possible without the need to assume parametric form of the action value function as is done in \citet{agarwal2020optimality}. It obtained a sample complexity bound of $\tilde{\mathcal{O}}\left(\frac{1}{\epsilon^{4}}\right)$.  Further improvements have been obtained in \citep{liu2020improved,mondal2023improved}, where the proposed algorithm in \citep{mondal2023improved} achieves a sample complexity of $\tilde{\mathcal{O}}\left(\frac{1}{\epsilon^{2}}\right)$. Note that these results were all average iterate convergence results. Works such as \citep{fatkhullin2023stochastic,masiha2022stochastic} have obtained last iterate convergence of policy gradient algorithms.
 
% was obtained in \citep{liu2020improved} with a sample complexity of $\tilde{\mathcal{O}}\left(\frac{1}{\epsilon^{3}(1-\gamma)^{6}}\right)$.  In \citep{mondal2023improved}, a sample complexity of $\tilde{\mathcal{O}}\left(\frac{1}{\epsilon^{2}(1-\gamma)^{5}}\right)$  is derived with Accelerated Natural Policy Gradient algorithm. 

\textbf{Actor Critic:} Actor critic methods aim to combine the benefits of the policy gradient methods and $Q$-learning based methods. Local convergence results results for Actor Critic were obtained in \citep{castro2010convergent} and \citep{maei2018convergent}. Average \ignore{\bf check - best or average?} iterate convergence results actor critic using a linear critic have been obtained in \citet{xu2020non} and \citet{xu2020improving} with sample complexity of $\tilde{\mathcal{O}}\left(\frac{1}{{\epsilon}^{4}}\right)$ and   $\tilde{\mathcal{O}}\left(\frac{1}{{\epsilon}^{3}}\right)$\footnote{ This work shows a sample complexity of $\epsilon^{-2}$ in the published version, which was corrected to $\epsilon^{-3}$ in a later arXiv version.}, respectively. Average iterate \ignore{\bf check - best or average?} convergence for Actor Critic  where neural networks are used to represent the actor and critic are obtained in works such as \citep{gaur2023global,wang2019neural} which obtain sample complexities of $\tilde{\mathcal{O}}\left(\frac{1}{{\epsilon}^{4}}\right)$ and $\tilde{\mathcal{O}}\left(\frac{1}{{\epsilon}^{6}}\right)$ respectively \ignore{\bf what are the results in this?}. More recently works such as \citet{tian2023convergence}  obtain local convergence \ignore{\bf when previous line is global - why is local improved?} results with a sample complexity of $\tilde{\mathcal{O}}\left(\frac{1}{{\epsilon}^{2}}\right)$, with \ignore{\bf Mudit, all results?} a finite state and action space, limiting the practicality of the algorithm.

%{\bf Reading this, seems last iterate is only novelty. For average iterate, there was also no result which is what we mentioned in previous submission - that is not clear from para above. }
%But it does so  for a finite state and action space, which makes the algorithm unsuitable practically. 

\textbf{Last Iterate Convergence:} In optimization literature, the strongest convergence bound \ignore{\bf I do not see why this is strongest when it does not imply the others?} for an algorithm that can be obtained is known as a last iterate convergence bound. It can be written as 
\begin{equation}
     f(\lambda^{*}) - f(\lambda_{t})  \le {\mathcal{O}(h(t))}.
\end{equation}
Here, $f$ is the objective function of interest to be maximized. $\lambda_{t}$ is the parameter obtained at the $t^{th}$ iteration of the algorithm and $\lambda^{*}$ is the optimal parameter corresponding to the highest possible value of the objective function and $h$ is some function of the number of iteration and possibly the sample size needed at each iteration. Such results were typically proven for gradient descent-type algorithms for convex objective functions \citep{citeulike:163662,balkanski2017sample}. 
In many modern machine learning applications, the objective function of interest is non-convex. This means many convergence results only demonstrate a local convergence, which can be written as  $\frac{1}{t}  \sum_{i=1}^{t} ||{\nabla}f(\lambda_{i})||^{2}  \le \tilde{\mathcal{O}(h(t))}$, 
% %
% \begin{equation}
%      .
% \end{equation}
% %
which only guarantees that the algorithm will converge to a local optimum. Example of such results are in works such as \cite{li2019convergence,pmlr-v189-chen23b} and \citet{tian2023convergence}.

For value iteration based methods such as $Q$ learning and fitted $Q$ iteration, last iterate convergence results with neural network parametrization have been obtained in works such as \citet{cai2019neural} and \citet{pmlr-v202-gaur23a} respectively.

In policy iteration methods such as gradient methods, with additional `compatible function approximation' assumptions \citet{sutton1999policy}, an upper bound is established on the \textit{regret} defined as

\vspace{-.2in}
\begin{equation}
     \frac{1}{t}  \sum_{i=1}^{t}(f(\lambda^{*}) - f(x_{i}))  \le \tilde{\mathcal{O}(h(t))}
\end{equation}

Such a result is the only type of global convergence shown so far for actor critic methods.

%However, an upper bound on regret does not necessarily imply that the gap between final iterate  and the optimal value $f(\lambda^{*}) - f(\lambda_{t})$ decreases monotonically. As \citet{mertikopoulos2018cycles} states "The dichotomy between a self-stabilizing, convergent system and a system with recurrent cycles is of obvious significance, but a regret-based analysis cannot distinguish between the two". This is further shown for gradient descent problems in \citet{mertikopoulos2018optimistic} which states that while the regret bound ``is helpful in convex-concave
%problems, it does not provide any tangible benefits beyond this class". 

\ignore{\bf Check, why we need to define the best one - I removed from Intro - since that is weaker and if average is there, no point giving best.} 

In order to establish an upper bound on $f(\lambda^{*}) - f(\lambda_{t})$ for non-convex $f(x)$,   \citet{Polyak1963GradientMF} established the notion of \textit{weak gradient bound} defined as 

\vspace{-.2in}
\begin{equation}
        {\mu}\cdot||{\nabla}f(x)||^{\alpha}   \le f(\lambda^{*}) - f(\lambda)  
\end{equation}

where $\mu$ is a positive constant that depends on the function $f$ and $\alpha \in [1,2]$. We note that using this condition,  last iterate convergence results have been demonstrated for non-convex optimization in works such as \citet{pmlr-v195-yue23a} and  \citet{9769873}. For an MDP setup, under standard assumptions, a PL like condition was established in \citet{ding2022global}. This has been used in policy gradient works such as \citep{masiha2022stochastic,fatkhullin2023stochastic} to establish last iterate convergence. 

%However, such a convergence has not yet been shown for an actor critic setup since policy gradient analyses are based on the properties of the sample trajectory based gradient estimator which is not used in actor critic algorithms. 

\ignore{\bf Paper looks too simplistic for clear reject. PL type condition has been used in other works for convex - you use Ding paper for such condition and get results. I do not see in the entire paper any real challenge either in Intro or here. }

\textbf{I.I.D. vs Markov Sampling in AC:} Prior analyses of AC algorithms with neural network actor and critic algorithms such as \citet{wang2019neural,cayci2022finite,fu2020single} all rely on \textit{local linearization} of the neural networks. These techniques assume that the samples are drawn independently from the stationary distribution of a fixed policy. However, as is shown in \citet{mnih2013playing}, Q-learning algorithms with deep neural networks require the use of \textit{target networks} and  \textit{experience replay} to converge. \citet{lillicrap2015continuous} showed the same is the case for actor critic algorithms particularly for cases with continuous action spaces. Thus, the sampling approach (which is iid) in existing analyses for AC with neural network parametrizations is not applicable in practice, and we need convergence analysis under Markovian sampling to better reflect the practical cases in theory. 

\section{Problem Formulation} \label{Problem Setup}

We consider a discounted Markov Decision Process (MDP) given by the tuple ${\mathcal{M}}:=(\mathcal{S}, \mathcal{A}, P, R, \gamma)$, where $\mathcal{S}$ is a bounded measurable state space, $\mathcal{A}$ is the set of actions which is also a bounded measurable space. Note that for our setup, both the state and action space can be infinite. ${P}:\mathcal{S}\times\mathcal{A} \rightarrow \mathcal{P}(\mathcal{S})$  is the probability transition function, $r: \mathcal{S}\times\mathcal{A} \rightarrow ([0,1])$ is the reward function on the state action space  and $0<\gamma<1$ is the discount factor. A policy $\pi:\mathcal{S} \rightarrow \mathcal{P}(\mathcal{A})$ maps a state to a probability distribution over the action space. The action value function for a given policy $\pi$ is given by 
\begin{eqnarray}
Q^{\pi}(s,a) = \mathbb{E}\left[\sum_{t=0}^{\infty}\gamma^{t}r(s_{t},a_{t})|s_{0}=s,a_{0}=a\right],    
\end{eqnarray}
where  $a_{t} \sim \pi(\cdot|s_{t})$ and $s_{t+1} \sim P(\cdot|s_{t},a_{t})$ for $t=\{0,\cdots,\infty\}$. For a discounted MDP, we  define the optimal
action value functions as
\begin{eqnarray}
Q^{*}(s,a) =  \sup_{\pi}Q^{\pi}(s,a),  \hspace{0.5cm} \forall (s,a) \in  \mathcal{S}\times\mathcal{A} \label{ps_2}  .  
\end{eqnarray}
A policy that achieves the optimal action-value functions is known as the  \textit{optimal policy} and is denoted as $\pi^{*}$. 
We define $\rho_{\nu}^{\pi}(s)$ as the stationary state distribution induced by the policy $\pi$ starting at state distribution $\nu$ and $\zeta_{\nu}^{\pi}(s,a)$ is the corresponding stationary state action distribution defined as $\zeta_{\nu}^{\pi}(s,a)=\rho_{\nu}^{\pi}(s){\cdot}\pi(a|s)$. 
We can define the state action visitation distribution as $ d^{\pi}_{\nu}(s,a) = (1-\gamma)\sum_{t=0}^{\infty} {\gamma}^{t}Pr^{\pi}(s_{t}=s,a_{t}=a|(s_{0},a_{0}) \sim \nu)$, where $Pr^{\pi}(s_{t}=s,a_{t}=a|(s_{0},a_{0}) \sim \nu)$ denotes the probability that the state action pair at time $t$ is $(s,a)$ when following the policy $\pi$ with starting state action distribution of $\nu$.

We additionally define the Bellman operator for a policy $\pi$ on a function $Q:\mathcal{S}\times\mathcal{A} \rightarrow \mathcal{S}\times\mathcal{A}$ is defined as 
\begin{eqnarray}
(T^{\pi}Q)(s,a) &=& r(s,a) \nonumber\\&&+ \gamma\int{Q}(s',\pi(s'))P(ds'|s,a)    \label{ps_3}
\end{eqnarray}
Further, operator $P^{\pi}$ is  defined as 
\begin{eqnarray}
P^{\pi}Q(s,a)=\mathbb{E}[Q(s',a')|s' \sim P(\cdot|s,a), a' \sim \pi(\cdot|s')]  
\end{eqnarray}

This is the one step Markov transition operator for policy $\pi$ for the Markov chain defined on $\mathcal{S}\times\mathcal{A}$ with the transition dynamics given by $S_{t+1} \sim P(\cdot|S_{t},A_{t})$ and $A_{t+1} \sim \pi(\cdot|S_{t+1})$. It defines a distribution on the state action space after one transition from the initial state. Similarly, $P^{\pi_{t}}P^{\pi_{t-1}}\cdots{P}^{\pi_{1}}$ is the $m$-step Markov transition operator following policy $\pi_t$ at steps $1\le t\le m$. 

\textbf{Neural Network Parametrization.} We define a neural network of $D$ layers with $m$ neurons per layer as follows
\begin{align}
    y = &\frac{1}{\sqrt{m}} {\cdot} \sigma(b_{D}^{T}x_{D-1})  \label{net_1}\\
    x^{h} =& \frac{1}{\sqrt{m}} {\cdot} \sigma(W_{h}^{T}x_{h-1})   \label{net_2}, \\
    h  \in& \{1,\cdots,D-1\}, \label{net_3}
\end{align}
where $m$ is the number of neurons in each neural network layer, $D$ represents the number of layers in the neural network, $W_{h}$ represents the weight matrix for the $h^{th}$ layer, where $b_{D}$ represents the weight vectors of the final. Also, $x_{0}$ is the input to the neural network, $\sigma$ is an activation function, typically a function such as sigmoid or ReLU may be used here.
We denote the neural network representing the critic corresponding to a set of parameters $\theta = \{W_{1},\cdots,W_{D-1},b_{D}\}$ for a given state action pair as $(s,a)$ as $Q_{\theta}(s,a)$. For notational convenience we denote the input to this neural networks as  $(s,a) \in \mathcal{S}\times\mathcal{A}$.
For the actor-network, we use a Gaussian policy given by $\pi_{\lambda}(s) = \mathcal{N}(\mu_{\lambda_{1}}(s),\kappa_{\lambda_{2}}(s))$. Here $\mu_{\lambda_{1}}(s)$  and $\kappa_{\lambda_{2}}(s)$  represent the mean and variance of a normal distribution represented by neural networks with the state $s \in \mathcal{S}$ as an input. We denote the set of parameters as $\lambda = \{\lambda_{1},\lambda_{2}\}$. The activation functions for the actor network are required to be smooth. Both these networks has the structure defined in equations \eqref{net_1} to \eqref{net_3}. Actor and critic networks sharing the same architecture is implemented in practical Actor Critic implementations like \citet{9066984}.

\vspace{-.1in}
\section{Actor Critic Algorithm} \label{Proposed Algorithm}
 In an actor-critic algorithm  \citep{konda1999actor}, the aim is to maximize the expected return \ignore{\bf Find exact terminology from literature for this function} given by 
 \begin{equation}
     J(\lambda) = \mathbb{E}_{s \sim \nu, a \sim \pi_{\lambda}(.|s)}Q^{\pi_{\lambda}}(s,a) 
 \end{equation}
 
A policy gradient step is performed to update  the policy parameters of the actor. For our setup, the policy is parameterized as $\{\pi_{\lambda} , \lambda \in \Lambda\}$ and $\Lambda \subset \mathbb{R}^{d}$ where $d$ is a positive integer. We have $K$ total iterations of the Algorithm. At iteration $k$, the policy parameters are updated using a natural policy gradient step given by
\begin{eqnarray}
    \lambda_{k+1} = \lambda_{k} + {\alpha_{k}}{\nabla}_{\lambda}J(\lambda_{k}), \label{alg_eq_2}
\end{eqnarray}
From the policy gradient theorem in \citep{sutton1999policy} we have 
\begin{eqnarray}
   {\nabla}_{\lambda}J(\lambda_{k}) = \mathbb{E}_{s,a}(\nabla_{\lambda}{\log(\pi_{\lambda_{k}}(a|s))}Q^{\pi_{\lambda_{k}}}(s,a)),
   \label{alg_eq_3_1}
\end{eqnarray}
where $(s,a) \sim d^{\pi_{\lambda_{k}}}_{\nu}$. This policy update requires us to calculate the $Q$ function for the current estimate of the optimal policy. Thus, we maintain a parameterized estimate of the $Q$-function, which is updated at each step and is used to approximate $Q^{\pi_{\lambda_{k}}}$. An estimate of its parameters is obtained by solving an optimization of the form 
\vspace{-.1in}
\begin{eqnarray}
    \argminA_{\theta \in \Theta} \mathbb{E}_{s,a}({Q^{\pi_{\lambda_{k}}}-Q_{\theta}})^{2}, \label{alg_eq_6}
\end{eqnarray}
where  $(s,a) \sim d^{\pi_{\lambda_{k}}}_{\nu}$, $\Theta$ is the space of parameters for the critic neural networks, and $Q_{\theta}$ is the neural network corresponding to the parameter $\theta$. This step is known as the critic step. 
\begin{algorithm}[t]
	\caption{Actor Critic  with Neural Parametrization}
	\label{algo_1}
	\textbf{Input:} $\mathcal{S},$ $ \mathcal{A}, \gamma, $ Time Horizon  $K \in \mathcal{Z}$ , sample batch size $n \in \mathcal{Z}$, resample batch size $L \in \mathcal{Z}$, Updates per time step  $ J \in \mathcal{Z}$  ,starting state sampling 
    distribution $\nu$, Actor step size $\alpha$, Critic step size $\beta^{'}$, starting actor parameter $\lambda_{1}$,  
    \begin{algorithmic}[1]
    
		\FOR{$k\in\{1,\cdots,K\}$} 
		{  
        \STATE Sample $n$ tuples $(s,a,r,s^{'})$  by following the policy $\pi^{\lambda_{k}}$ from a starting state distribution $\nu$ and store the tuples. 
            \FOR{$j\in\{1,\cdots,J\}$} 
		  {
             \STATE Initialize $\theta_{0}$ using a standard Gaussian for all elements of weight matrices and $b_{D}$ from Unif$(-1,1)$ for the bias vector.
            \FOR{$i\in\{1,\cdots,L\}$} 
		  {
		      \STATE Sample a tuple $(s_{i},a_{i},r_{i},s^{'}_{i})$ with equal probability from the stored dataset \label{a1_l1}\\
                \STATE Sample $a^{'}_{i}$ using $\pi^{\lambda_{k}}(.|s^{'}_{i})$
                \STATE Set $y_i= r_{i} + {\gamma}Q_{k,j-1}(s^{'}_{i},a^{'}_{i})$, \label{a1_l2}\\ 
                \STATE $\theta^{'}_{i} = \theta_{i-1} + {\beta^{'}}(y_{i} - Q_{\theta_{i}}(s_{i},a_{i})){\nabla}Q_{\theta_{i}}(s_{i},a_{i})  $ \label{a1_l3}\\
                \STATE $\theta_{i} = \Gamma_{\theta_{0},\frac{1}{(1-\gamma)}}\left(\theta^{'}_{i}\right)$
             }
             \ENDFOR\\
             \STATE $Q_{k,j} = Q_{{\theta^{'}}}$  where ${\theta^{'}} =\frac{1}{L}\sum_{i=1}^{L}(\theta_{i}) $ \\
             }
             \ENDFOR\\
             \STATE $d_{k} = \frac{1}{n}\sum_{i=1}^{n}{\nabla}{\log}(\pi(a_{i}|s_{i}))Q_{k,J}(s_{i},a_{i})$\\
            \STATE Update $\lambda_{k+1} =  \lambda_{k}  + \left(\frac{\alpha}{k}\right)\frac{d_{k}}{||d_{k}||} $
		}
		\ENDFOR\\
	Output: $\pi_{\lambda_{K+1}}$
	\end{algorithmic}
\end{algorithm}

We summarize the  Actor-Critic approach in Algorithm \ref{algo_1}. It has one outer for loop indexed by the iteration counter $k$. The first inner for loop indexed by $j$ is the loop where the critic step is performed.  At a fixed iteration $k$ of the main for loop and iteration $j$ of the first inner for loop, we solve the following optimization problem 

\begin{eqnarray}
\argminA_{\theta \in \Theta} \mathbb{E}_{s,a}({T^{\pi_{\lambda_{k}}}Q_{k,j-1}(s,a)-Q_{\theta}}(s,a))^{2},    
\end{eqnarray}
 Where $(s,a) \sim d^{\pi_{\lambda_{k}}}_{\nu}$. This is the \textit{target network} technique. For the inner loop at iteration $j$, the target is  $T^{\pi_{\lambda_{k}}}Q_{k,j-1}(s,a)$. The first inner for loop has a nested inner for loop indexed by $i$ where the optimization step for the current target is performed. Note that in line 10 $\Gamma_{\theta_{0},(1-\gamma)^{-1}}$ represents the projection operator on the ball of radius $(1-\gamma)^{-1}$ centered on $\theta_{0}$ in the space $\Theta$. Formally this set is defined as  $ \Theta^{'} = \{\theta \in \Theta: ||W_{h}-W^{0}_{h}|| \le (1-\gamma)^{-1}, \forall h \in \{1,\cdots,D-1\} \}$.
 
Here the tuples $(s_{i},a_{i},r_{i},s^{'}_{i})$ are sampled randomly from the stored dataset and the optimization is performed using this sampled data. This random sampling on line 6 is the \textit{experience replay} technique. The target network is updated in the second inner for loop indexed by $i$. The inner loop indexed by $j$ controls how many times the target network is updated. 

The algorithm laid out here is a simplified version of the Twin Delayed Deep Deterministic Policy Gradient algorithm \citep{10.1145/3387168.3387199}.

\section{Theoretical Analysis: Global Convergence} \label{Main Result}
\subsection{Assumptions} \label{Assumptions}
Before stating the main result, we formally describe the required assumptions in this subsection.
\begin{assump}  \label{assump_1} 
For any $\lambda, \lambda_{1}, \lambda_{2} \in \Lambda$ and $(s,a) \in (\mathcal{S}\times\mathcal{A})$ we have 
%

%\begin{align}
   (i) $\|{\nabla}log(\pi_{\lambda_{1}})(a|s) - {\nabla}log(\pi_{\lambda_{2}})(a|s)\|  \le  \beta\|\lambda_{1} -\lambda_{2}\|$,
   
   (ii) $\|{\nabla}log(\pi_{\lambda_{1}})(a|s)\| \le M_{g}$,  
   
   (iii) $\mathbb{E}_{(s,a) \sim d^{\pi_{\lambda_{1}}}_{\nu}}({\nabla}{\log}\pi_{\lambda_{1}}(a|s))({\nabla}{\log}\pi_{\lambda_{1}}(a|s))^{T}  \succcurlyeq \mu_{f}I_{d}$
   
%\end{align}
where $\beta, M_{g}, \mu_{f} \ge 0$.
\end{assump}
Such assumptions have been utilized in prior policy gradient based works such as \citep{masiha2022stochastic,fatkhullin2023stochastic} and actor critic using linear critic such as \citet{xu2020improving}. and global convergence results for neural critic parameterization such as \citet{fu2020single,wang2019neural}, which restrict their analysis to energy-based policies for finite action spaces. Works such as \citet{cayci2022finite} only considers soft-max policies which are a simplification of energy based policies. However, as \citet{fatkhullin2023stochastic} points out, soft-max and energy based policies  do not always satisfy $(iii)$, while Gaussian policies are shown to satisfy this assumption for sufficiently deep neural networks.  \ignore{\bf INCOMPLETE} \ignore{\bf  more general than only 1 paper?}

\begin{assump}  \label{assump_4} 
For any $\lambda\in \Lambda$, let $\pi_{\lambda}$ be the corresponding policy, $\nu$ be the starting distribution over the state action space, and let $\zeta_{\nu}^{\pi_{\lambda}}$ be the corresponding stationary state action distribution. We assume that there exists a positive integer $p$ such that for every positive integer $\tau$ and for any set ${\cdot} \in \mathcal{S}\times\mathcal{A} \nonumber$ and any $(s) \in \mathcal{S} \nonumber$
\begin{align}
    d_{TV}\left(\mathbb{P}((s_{\tau},a_{\tau}) \in {\cdot} |(s_{0})=(s)),\zeta_{\nu}^{\pi_{\lambda}}({\cdot})\right) \leq p{\rho}^{\tau}, \nonumber
\end{align}
\end{assump}
This assumption implies that the Markov chain is geometrically mixing. Such assumption is widely used both in the analysis of stochastic gradient descent literature such as \citet{9769873,sun2018markov}, as well as finite time analysis of RL algorithms such as  \citet{xu2020improving}. In \citet{fu2020single}, it is assumed that data can be sampled from the stationary distribution of a given policy. Instead, in practice we can only sample from a Markov chain which has a stationary distribution as the desired distribution to sample from.

\begin{assump}  \label{assump_5} 

For any fixed $\lambda \in \Lambda$  we have 
\begin{eqnarray} 
  \mathbb{E}\left(A^{\pi_{\lambda}}(s,a) -(1-\gamma){w^{*}(\lambda)}^{\top}{\nabla}\log(\pi_{\lambda})(a|s)\right)^{2}  \nonumber\\
  \le \epsilon_{bias} \nonumber
\end{eqnarray}
Here, the expectation is over $(s,a) \sim d^{\pi^{*}}_{\nu}$ where $\pi^{*}$ is the optimal policy. We also have $w^{*}(\lambda) = F(\lambda)^{\dagger}{\nabla}J(\lambda)$ where $F(\lambda) = \mathbb{E}_{(s,a) \sim d^{\pi_{\lambda}}_{\nu}}({\nabla}{\log}\pi_{\lambda}(a|s))({\nabla}{\log}\pi_{\lambda}(a|s))^{T}$ . 
\end{assump}

\ignore{\textcolor{red}{THIS CANNOT GO OUTSIDE THE MARGINN}\ignore{\bf something missing}} \citet{wang2019neural} proves that this error goes to zero if both the actor and critic are represented by overparametrised neural networks. This assumption allows us to establish the weak gradient bound property for our MDP setup. It is used in policy gradient works such as \citep{yuan2022general,masiha2022stochastic,fatkhullin2023stochastic} to establish last iterate convergence.   \ignore{\bf Last paper was not last iterate - so why they used? Also, if this assumption already does the needed, then any novelty from best to average?}
\begin{assump}  \label{assump_7} 
For any fixed $ \lambda \in \Lambda$ we have
\begin{eqnarray} 
  \min_{\theta_{1} \in \Theta^{'}}\mathbb{E}_{s,a \sim \zeta_{\nu}^{\pi_{\lambda_{k}}}}\left(Q_{\theta_{1}}(s,a) - T^{\pi_{\lambda}}Q_{\theta}(s,a) \right)^{2} \le \epsilon_{approx} \label{assump_7_1} \nonumber
\end{eqnarray}
\end{assump}

 This assumption ensures that a class of neural networks are able to approximate the function obtained by applying the Bellman operator to a neural network of the same class. Similar assumptions are taken in \citep{fu2020single,wang2019neural}. In works such as \citep{cayci2022finite}, stronger assumptions are made wherein the function class used for critic parametrization is assumed to be able to approximate any smooth function.   \ignore{\bf only one unpublished paper - sounds not good. }

Before we move on to the main result, we want to state the key lemma proved in \citet{ding2022global}, that is used to obtain the last iterate convergence.

\begin{lemma}  \label{lem_main} 
If Assumptions \ref{assump_1} and \ref{assump_5} hold then for any fixed  $ \lambda \in \Lambda$ we have
\begin{eqnarray}
    \sqrt{{\mu}}(J(\lambda^{*}) -  J(\lambda)) \le \epsilon^{'} + \|{\nabla}J(\lambda)\|, \nonumber
\end{eqnarray}

where $\epsilon^{'} = \frac{{\mu}_{f}{\sqrt{\epsilon_{bias}}}}{M_{g}(1-\gamma)}$ and $\mu = \frac{\mu^{2}_{f}}{2M^{2}_{g}}$
\end{lemma}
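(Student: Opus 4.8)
The plan is to establish this PL-type (weak gradient domination) inequality through the classical three-ingredient argument behind natural policy gradient: the performance difference lemma, the compatible-function-approximation identity built into Assumption \ref{assump_5}, and the Fisher-information lower bound in Assumption \ref{assump_1}(iii). First I would apply the performance difference lemma to write
\[
J(\lambda^{*}) - J(\lambda) \;=\; \frac{1}{1-\gamma}\,\mathbb{E}_{(s,a)\sim d^{\pi^{*}}_{\nu}}\!\left[A^{\pi_{\lambda}}(s,a)\right],
\]
so the suboptimality gap is an expectation of the advantage of $\pi_{\lambda}$ taken under the optimal policy's visitation distribution — precisely the distribution that appears in Assumption \ref{assump_5}.

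Next I would add and subtract the compatible linear term $(1-\gamma)\,w^{*}(\lambda)^{\top}\nabla\log\pi_{\lambda}(a|s)$ inside that expectation, splitting the gap into a main term $\mathbb{E}_{d^{\pi^{*}}_{\nu}}\!\left[w^{*}(\lambda)^{\top}\nabla\log\pi_{\lambda}(a|s)\right]$ and an approximation-error term $\frac{1}{1-\gamma}\,\mathbb{E}_{d^{\pi^{*}}_{\nu}}\!\left[A^{\pi_{\lambda}}-(1-\gamma)w^{*}(\lambda)^{\top}\nabla\log\pi_{\lambda}\right]$. For the error term, Jensen's inequality followed directly by Assumption \ref{assump_5} bounds it by $\frac{\sqrt{\epsilon_{bias}}}{1-\gamma}$. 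For the main term, I would pull the (deterministic) vector $w^{*}(\lambda)$ out of the expectation, apply Cauchy--Schwarz, and then use Jensen together with Assumption \ref{assump_1}(ii) to get $\big\|\mathbb{E}_{d^{\pi^{*}}_{\nu}}[\nabla\log\pi_{\lambda}(a|s)]\big\|\le M_{g}$, which leaves the bound $M_{g}\,\|w^{*}(\lambda)\|$.

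The crux is then converting $\|w^{*}(\lambda)\|$ into $\|\nabla J(\lambda)\|$. Here I would invoke Assumption \ref{assump_1}(iii): since $F(\lambda)\succcurlyeq\mu_{f}I_{d}$ with $\mu_{f}>0$, the matrix $F(\lambda)$ is positive definite, hence invertible, so $F(\lambda)^{\dagger}=F(\lambda)^{-1}$, the vector $\nabla J(\lambda)$ automatically lies in its range, and $\|F(\lambda)^{-1}\|_{\mathrm{op}}\le 1/\mu_{f}$; therefore $\|w^{*}(\lambda)\|=\|F(\lambda)^{-1}\nabla J(\lambda)\|\le \frac{1}{\mu_{f}}\|\nabla J(\lambda)\|$. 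Combining the three estimates gives $J(\lambda^{*})-J(\lambda)\le \frac{M_{g}}{\mu_{f}}\|\nabla J(\lambda)\| + \frac{\sqrt{\epsilon_{bias}}}{1-\gamma}$; multiplying through by $\mu_{f}/M_{g}$ yields $\frac{\mu_{f}}{M_{g}}\big(J(\lambda^{*})-J(\lambda)\big)\le \|\nabla J(\lambda)\| + \epsilon'$, and since $J(\lambda^{*})\ge J(\lambda)$ and $\sqrt{\mu}=\frac{\mu_{f}}{\sqrt{2}M_{g}}\le\frac{\mu_{f}}{M_{g}}$, the claimed inequality follows. The factor $\sqrt{2}$ in the constant $\mu$ is slack that is deliberately introduced here because downstream the bound is squared and one uses $(a+b)^{2}\le 2a^{2}+2b^{2}$.

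I expect the only genuinely delicate point to be the Moore--Penrose pseudo-inverse bookkeeping, i.e. verifying that $w^{*}(\lambda)=F(\lambda)^{\dagger}\nabla J(\lambda)$ indeed satisfies $\|w^{*}(\lambda)\|\le\frac{1}{\mu_{f}}\|\nabla J(\lambda)\|$ — which, as noted, becomes immediate once Assumption \ref{assump_1}(iii) forces $F(\lambda)$ to be full rank. Everything else is performance-difference-lemma algebra together with two applications of Cauchy--Schwarz/Jensen. This lemma is exactly the one established by \citet{ding2022global}, and the argument above reproduces their proof in our notation.
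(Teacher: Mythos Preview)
Your proposal is correct and follows the standard route --- performance difference lemma, then splitting the advantage into the compatible-function-approximation error (bounded by $\sqrt{\epsilon_{bias}}$ via Assumption~\ref{assump_5} and Jensen) and the linear term $w^{*}(\lambda)^{\top}\nabla\log\pi_{\lambda}$ (bounded by $M_{g}\|w^{*}(\lambda)\|$ via Assumption~\ref{assump_1}(ii)), and finally using the Fisher lower bound in Assumption~\ref{assump_1}(iii) to convert $\|w^{*}(\lambda)\|$ to $\mu_{f}^{-1}\|\nabla J(\lambda)\|$. The algebra and the handling of the slack factor $\sqrt{2}$ in $\mu$ are fine.

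There is nothing to compare against in the paper itself: the paper does not give its own proof of Lemma~\ref{lem_main} but simply states it as the result ``proved in \citet{ding2022global}.'' Your argument is precisely the proof from that reference transcribed into the present notation, as you note in your final sentence, so in that sense your proposal and the paper's (outsourced) proof coincide.
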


\subsection{Main Result} \label{Theorem Statement}
Next, we present the main result. 

\begin{thm} \label{thm}
Suppose Assumptions \ref{assump_1}-\ref{assump_7} hold and we have  $\alpha = \frac{7}{2\sqrt{{\mu}}}$ and $\beta^{'} = \frac{1}{\sqrt{L}}$ then from Algorithm \ref{algo_1} we obtain 
\begin{align}
   J(\lambda^{*}) - J(\lambda_{t}) \le&   \tilde{{\mathcal{O}}}\left(\frac{1}{{t}}\right)  +  \tilde{{\mathcal{O}}}\left(\frac{1}{\sqrt{n}}\right) \nonumber\\ 
     &+   \tilde{{\mathcal{O}}}\left(\frac{1}{{L}^{\frac{1}{4}}}\right)  +   \tilde{{\mathcal{O}}}(m^{-\frac{1}{12}}D^{\frac{7}{2}})  \nonumber\\
     &\quad 
     +   \tilde{{\mathcal{O}}}(\gamma^{J}) + \tilde{{\mathcal{O}}}(\sqrt{\epsilon_{bias}}) \nonumber\\
     &\quad \quad +   \tilde{{\mathcal{O}}}(\sqrt{\epsilon_{approx}}). \nonumber
\end{align}
Hence, for $t  =  \tilde{\mathcal{O}}(\epsilon^{-1}) $, $J = \tilde{\mathcal{O}}\left(\log\left(\frac{1}{\epsilon}\right)\right) $, $n = \tilde{\mathcal{O}}\left(\epsilon^{-2}\right)$, $ L=\tilde{\mathcal{O}}\left(\epsilon^{-4}\right)$ we have
\begin{align}
 J(\lambda^{*}) - J(\lambda_{t})  \leq& \epsilon +  \tilde{\mathcal{O}}(m^{-\frac{1}{12}}D^{\frac{7}{2}}) \nonumber\\
                                  &+    \tilde{\mathcal{O}}\left(\sqrt{\epsilon_{bias}} \right) + \tilde{\mathcal{O}}\left(\sqrt{\epsilon_{approx}} \right),  \nonumber                                
\end{align}
which implies a sample complexity of $  t {\cdot} n = \tilde{\mathcal{O}}\left({\epsilon^{-3}}\right)$. 
\end{thm}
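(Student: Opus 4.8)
The plan is to chain together three facts: (i) the weak gradient domination inequality of Lemma~\ref{lem_main}, (ii) a descent estimate for the normalized ascent step of Algorithm~\ref{algo_1}, and (iii) a quantitative, term‑by‑term bound on the error $e_k:=\|d_k-\nabla J(\lambda_k)\|$ between the search direction $d_k$ and the true policy gradient. First I would record that Assumption~\ref{assump_1}(i)--(ii) together with $\|Q^{\pi_\lambda}\|_\infty\le(1-\gamma)^{-1}$ makes $J$ an $L_J$‑smooth function of $\lambda$ (a routine computation from the policy gradient theorem). Writing $\delta_k:=J(\lambda^\ast)-J(\lambda_k)\ge 0$, using $\lambda_{k+1}-\lambda_k=(\alpha/k)\,d_k/\|d_k\|$, smoothness of $J$, and the elementary bound $\langle\nabla J(\lambda_k),d_k\rangle/\|d_k\|\ge\|\nabla J(\lambda_k)\|-2e_k$ (valid for any two vectors), one obtains
\[
\delta_{k+1}\ \le\ \delta_k-\frac{\alpha}{k}\,\|\nabla J(\lambda_k)\|+\frac{2\alpha}{k}\,e_k+\frac{L_J\alpha^2}{2k^2}.
\]
Applying Lemma~\ref{lem_main} in the form $\|\nabla J(\lambda_k)\|\ge\sqrt{\mu}\,\delta_k-\epsilon'$ (which holds for every realization of $\lambda_k$) converts this into the affine‑contraction recursion $\delta_{k+1}\le(1-\alpha\sqrt{\mu}/k)\delta_k+(\alpha\epsilon'+2\alpha e_k)/k+L_J\alpha^2/(2k^2)$.

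With the stated choice $\alpha=7/(2\sqrt{\mu})$ the contraction coefficient is $\alpha\sqrt{\mu}=7/2>1$. Taking expectations, unrolling from the first index $k_0=O(1)$ at which $1-\alpha\sqrt{\mu}/k>0$ (the finitely many earlier terms cost only $\tilde{\mathcal{O}}(1/t)$), and using $\prod_{j=k+1}^{t}(1-\alpha\sqrt{\mu}/j)\lesssim(k/t)^{7/2}$ gives
\[
\mathbb{E}\,\delta_{t+1}\ \lesssim\ \frac{\delta_1}{t^{7/2}}+\epsilon'+\max_{k\le t}\mathbb{E}\,e_k+\frac{L_J\alpha^2}{t},
\]
where $\epsilon'=\mu_f\sqrt{\epsilon_{bias}}/(M_g(1-\gamma))=\tilde{\mathcal{O}}(\sqrt{\epsilon_{bias}})$. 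This already isolates the $\tilde{\mathcal{O}}(1/t)$ and $\tilde{\mathcal{O}}(\sqrt{\epsilon_{bias}})$ terms, so the whole argument reduces to bounding $\mathbb{E}\,e_k$.

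For $\mathbb{E}\,e_k$ I would split $d_k-\nabla J(\lambda_k)$ into a statistical part, $\tfrac{1}{n}\sum_i\nabla\log\pi_{\lambda_k}(a_i|s_i)\,Q^{\pi_{\lambda_k}}(s_i,a_i)-\nabla J(\lambda_k)$, and a critic part, $\tfrac{1}{n}\sum_i\nabla\log\pi_{\lambda_k}(a_i|s_i)\,(Q_{k,J}-Q^{\pi_{\lambda_k}})(s_i,a_i)$. The statistical part is the empirical average of a function bounded by $M_g/(1-\gamma)$ along a single trajectory of $\pi_{\lambda_k}$; the geometric mixing of Assumption~\ref{assump_4}, via a block/mixing‑time argument, gives $\tilde{\mathcal{O}}(1/\sqrt n)$. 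The critic part is at most $M_g\big(\tfrac{1}{n}\sum_i(Q_{k,J}-Q^{\pi_{\lambda_k}})^2(s_i,a_i)\big)^{1/2}$, whose expectation is, up to the same $\tilde{\mathcal{O}}(1/\sqrt n)$ empirical‑to‑population slack, controlled by $M_g\|Q_{k,J}-Q^{\pi_{\lambda_k}}\|_{2,\zeta}$ with $\zeta=\zeta^{\pi_{\lambda_k}}_\nu$. Here I would use that $\zeta$ is invariant under $P^{\pi_{\lambda_k}}$, so $T^{\pi_{\lambda_k}}$ is a $\gamma$‑contraction in $L^2(\zeta)$ with no distribution‑mismatch factor; unrolling the $J$ target‑network updates yields $\|Q_{k,J}-Q^{\pi_{\lambda_k}}\|_{2,\zeta}\le(1-\gamma)^{-1}\max_j\|Q_{k,j}-T^{\pi_{\lambda_k}}Q_{k,j-1}\|_{2,\zeta}+2(1-\gamma)^{-1}\gamma^J$, which produces the $\tilde{\mathcal{O}}(\gamma^J)$ term.

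It remains to bound the per‑target fitting error $\|Q_{k,j}-T^{\pi_{\lambda_k}}Q_{k,j-1}\|_{2,\zeta}$, and this is the heart of the proof (the ``novel decomposition''). Within the $j$‑th target iteration, lines~6--10 run $L$ steps of projected, Polyak‑averaged SGD with step size $\beta'=1/\sqrt L$ on the quadratic regression objective against targets $y_i=r_i+\gamma Q_{k,j-1}(s'_i,a'_i)$, where \emph{experience replay} makes the minibatches i.i.d.\ draws from the fixed buffer. I would split this error into (a) the approximation error $\sqrt{\epsilon_{approx}}$ of Assumption~\ref{assump_7}, i.e.\ the distance from $T^{\pi_{\lambda_k}}Q_{k,j-1}$ to the projected ball $\Theta'$; (b) the optimization/statistical error of SGD --- convex (neural‑tangent‑regime) SGD run for $L$ steps with step size $1/\sqrt L$ drives the objective value to within $\tilde{\mathcal{O}}(1/\sqrt L)$ of the empirical minimizer, hence the $L^2$ distance to within $\tilde{\mathcal{O}}(L^{-1/4})$, yielding the $\tilde{\mathcal{O}}(L^{-1/4})$ term, plus the $\tilde{\mathcal{O}}(1/\sqrt n)$ gap between the empirical buffer measure and $\zeta$; and (c) the neural‑network linearization error --- since every iterate stays in the radius‑$(1-\gamma)^{-1}$ ball around initialization, $Q_\theta$ and $\nabla Q_\theta$ stay uniformly close to their first‑order expansion, and propagating this through the $D$ layers and the SGD recursion costs $\tilde{\mathcal{O}}(m^{-1/12}D^{7/2})$. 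Collecting, $\max_k\mathbb{E}\,e_k\lesssim\tilde{\mathcal{O}}(n^{-1/2})+\tilde{\mathcal{O}}(L^{-1/4})+\tilde{\mathcal{O}}(\gamma^J)+\tilde{\mathcal{O}}(m^{-1/12}D^{7/2})+\tilde{\mathcal{O}}(\sqrt{\epsilon_{approx}})$; substituting into the recursion bound proves the first display, and then setting $t=\tilde{\mathcal{O}}(\epsilon^{-1})$, $J=\tilde{\mathcal{O}}(\log(1/\epsilon))$, $n=\tilde{\mathcal{O}}(\epsilon^{-2})$, $L=\tilde{\mathcal{O}}(\epsilon^{-4})$ makes each tunable term $\le\epsilon$; since only $n$ fresh samples are drawn per outer iteration (the inner loops reuse the buffer) the total sample count is $t\cdot n=\tilde{\mathcal{O}}(\epsilon^{-3})$. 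The main obstacle is exactly part (c), together with the bookkeeping that keeps all the $L^2(\zeta^{\pi_{\lambda_k}}_\nu)$ norms mutually consistent across the three nested loops under Markovian data --- in particular verifying that experience replay decouples the inner SGD from the Markov chain while the buffer itself perturbs the target measure by only $\tilde{\mathcal{O}}(1/\sqrt n)$.
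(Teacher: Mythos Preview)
Your proposal is correct and follows essentially the same architecture as the paper: smoothness of $J$ plus the weak gradient domination of Lemma~\ref{lem_main} give an affine contraction in $\delta_k=J^\ast-J(\lambda_k)$, which is unrolled with $\alpha_k=\alpha/k$, and the gradient error $e_k$ is then controlled via the same four-part critic decomposition (approximation $\sqrt{\epsilon_{approx}}$, vanishing estimation error, sampling error $\tilde{\mathcal{O}}(n^{-1/2})$ under Markovian data, and NTK-regime optimization error $\tilde{\mathcal{O}}(L^{-1/4})+\tilde{\mathcal{O}}(m^{-1/12}D^{7/2})$) that the paper establishes in Lemmas~\ref{lem_1}--\ref{lem_4}. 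Your only tactical departures---using the direct bound $\langle g,d\rangle/\|d\|\ge\|g\|-2\|e\|$ in place of the paper's two-case inequality from \citet{fatkhullin2023stochastic}, and unrolling the Bellman error as an $L^{2}(\zeta)$ $\gamma$-contraction rather than pointwise with a Radon--Nikodym distribution-mismatch factor as in \eqref{proof_15_1}---are minor simplifications of the same argument.
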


The fourth term is the consequence of the finite size of the critic neural network. Such terms are present in other results where multi layer neural network parametrizations are used such as \citet{fu2020single} and \citet{tian2023convergence}. We can see that as the width of the neural network tends to infinity these terms go to zero. This is in keeping with Neural Tanget Kernet (NTK) \ignore{\bf full form?} theory \citet{jacot2018neural}, which states that in the infinite width limit neural networks converge to linear functions.

\section{Proof Sketch of Theorem \ref{thm}} 

The proof is split into two stages. In the first stage, we show how to obtain the last iterate performance gap as a function of the errors incurred in estimating the critic at each step. The second part decomposes the critic estimation error into its constituent components, which are bounded separately.

{\bf Upper Bounding Last Iterate Performance Gap: } Under Assumption \ref{assump_1}, \citet{yuan2022general} proves that the expected return is a smooth function. Thus, we have
\vspace{-.1in}
\begin{align}
    -J(\lambda_{t+1}) &\ge -J(\lambda_{t}) - \langle {\nabla}J(\lambda_{t}),\lambda_{t+1} - \lambda_{t} \rangle  \nonumber\\
    &+ {L_{J}}||\lambda_{t+1} - \lambda_{t}||^{2}. \label{main_res_0}
\end{align}
%
%\ignore{\bf I do not see how Assumption 1 gives this - since the assumption is on policy and not on $J$ - some big step missing here. }
%
%\ignore{\bf search and change all word regret - there is no regret in this paper. }
%
Here, $L_{J}$ is the smoothness parameter of the expected return and $\lambda_{t}$ denotes the critic parameters ate iteration $t$ of Algorithm \ref{algo_1} \ignore{\bf Equation has theta, and you are defining lambda - unclear}. 
From this, using Assumption \ref{assump_4}, the weak gradient domination property proved in \citet{ding2022global}, we obtain the following 
\vspace{-.1in}
\begin{align}
 J^{*} -J(\lambda_{t+1}) \le&   \left(1 -  \frac{{\eta_{t}}{\sqrt{\mu^{'}}}}{3}\right)(J^{*} -J(\lambda_{t})) \nonumber\\
 +& \frac{\alpha}{t}||{\nabla}J(\lambda_{t}) - d_{t} || \nonumber\\ 
+& {L_{J}}||\lambda_{t+1} - \lambda_{t}||^{2} + \frac{\alpha}{t}\epsilon^{'}. \label{main_res_1_1}
\end{align}
\ignore{\bf I do not see how this happens - I look at both equations, don't seem to get this. Need more details here. Also, how did theta change to lambda??}%
%Now the gap between the . 
The key step now is recursively applying this condition and and substitute the result back in Equation \eqref{main_res_1_1}. We thus obtain the following result.
%a
\begin{align}
  J(\lambda^{*}) - J(\lambda_{t})  \le&   \frac{\alpha.M_{g}}{t}\sum^{k=t-2}_{k=0}(\mathbb{E}|Q^{\lambda_{k}}(s,a) -  Q_{k,J}(s,a)|) \nonumber\\
  &+    \left(\frac{1}{t}\right) \left(J(\lambda^{*}) - J(\lambda_{2})\right) \nonumber\\
  &\quad  +  \frac{L_{J}{\alpha}^{2}}{t} + \mathcal{O}(\sqrt{\epsilon_{bias}}) .  \label{main_res_1}
\end{align}
Here, we have used the identity  $\nabla{J(\lambda_{k})}(s,a) = \mathbb{E}_{(s,a) \sim d^{\pi_{\lambda_{k}}}_{\nu}}\nabla\log({\lambda}_{k}(a|s))Q^{{\lambda}_{k}}(s,a)$. The details of this are given in Appendix \ref{thm proof}. 
\ignore{\bf unclear - how did $Q$ enter here in recursion - when previous thing did not have this. Cannot have outline with such big gaps. }

We thus obtain an upper bound on the last iterate optimality gap in terms of the average of the difference between the true gradient $\nabla{J(\lambda_{t})}$ and our estimate of the true gradient $d_{t}$ so far. Thus far we obtained an upper bound on the \textbf{G}lobal optimality of the \textbf{P}erformance of the last iterate. Since we do not have any terms that are functions of the cardinality of the state and action space, the analysis is valid for a \textbf{C}ontinous Action Space. Since the only restriction on the policy parametrization is smoothness, this analysis holds for a \textbf{M}ulti Layer Neural Network actor parametrization.  
  
{\bf Upper Bounding Error in Critic Step: }  

The critic error at each step is equivalent to solving the following optimization problem
\vspace{-.1in}
\begin{eqnarray}
    \argminA_{\theta \in \Theta^{'}} \mathbb{E}_{s,a}({Q^{\pi_{\lambda_{k}}}-Q_{{\theta}}})^{2} \label{main_res_4},
\end{eqnarray}
where $(s,a) \sim d^{\pi_{\lambda_{k}}}_{\nu}$. We denote as $Q_{k,J}$ as our estimate of  $Q^{\pi_{\lambda_{k}}}$ at $k^{th}$ iteration of Algorithm \ref{algo_1} and iteration $j$ of the first inner for loop. We obtain the following result for the for the action value function $Q$   
%\begin{eqnarray}
%    Q^{\pi_{\lambda_{K}}}-Q_{k,J} &\leq& \sum_{j=1}^{J-1} \gamma^{J-k-1} (P^{\pi^{*}})^{J-j-1}\epsilon , \label{main_res_8} + \gamma^{J} (P^{\pi^{*}})^{J}(Q^{\pi_{\lambda_{K}}}-Q_{0}) 
% \end{eqnarray}
%From this we obtain
%
\begin{align}
    \mathbb{E}_{s,a}|Q^{\pi_{\lambda_{k}}}-Q_{k,J}| \leq&  \sum_{j=0}^{J-2} \gamma^{J-j-1}(P^{\pi_{\lambda_{k}}})^{J-j-1}{\mathbb{E}}|\epsilon_{k,j+1}| \nonumber\\
    &+ \tilde{\mathcal{O}}(\gamma^{J}) , \label{main_res_9}
\end{align}
where $\epsilon =  T^{\pi_{\lambda_{k}}}Q_{k,j-1} - Q_{k,j}$ is error incurred at iteration $j$ of the first inner for loop and iteration $k$ of the outer for loop of Algorithm \ref{algo_1}. In doing so, we have split the error incurred in estimating the critic at each step into the errors in estimating the target function at each iteration of the inner loop indexed by $j$. The first term on the right hand side of Equation \eqref{main_res_9}  denotes this error, in works such as \citet{farahmand2010error} this is known as the algorithmic error. The second term on the right hand side is called as the statistical error, which is  the error incurred due to the random nature of the system. Intuitively, the error in estimating the target function depends on how much data is collected at each iteration, how many samples we take in the buffer replay step and how well our neural network function class can approximate $T^{\pi_{\lambda_{k}}}Q_{k,j-1}$, i.e., the target function. Building upon this intuition, we split $\epsilon_{k,j}$ into four different components as follows.
\begin{align}
    \epsilon_{k,j} =& T^{\pi_{\lambda_{k}}}Q_{k,j-1} - Q_{k,j} \nonumber\\
                 =& \underbrace{T^{\pi_{\lambda_{k}}}Q_{k,j-1}-Q^{1}_{k,j}}_{\epsilon^{1}_{k,j}} + \underbrace{Q^{1}_{k,j} -Q^{2}_{k,j}}_{\epsilon^{2}_{k,j}} \nonumber\\
                 &+ \underbrace{Q^{2}_{k,j} -Q^{3}_{k,j}}_{\epsilon^{3}_{k,j}} +\underbrace{Q^{3}_{k,j} - Q_{k,j}}_{\epsilon^{4}_{k,j}} \nonumber\\
                 =& {\epsilon^{1}_{k,j}} + {\epsilon^{2}_{k,j}} + {\epsilon^{3}_{k,j}} + {\epsilon^{4}_{k,j}} .\label{last}
\end{align}
The first two components are dependent on the approximating power of the class of neural networks. The third term is dependent on the number of samples collected from the policy, for which the corresponding $Q$ function is to be measured. This is the term that will account for the \textbf{M}arkovian Sampling. The last term is dependent on the number of samples in the buffer replay step and accounts for the \textbf{M}ulti Layer Neural Network critic parametrization.

 We now define the various $Q$-functions and then define the corresponding errors. We start by defining the best possible  approximation of the function $T^{\pi_{\lambda_{k}}}Q_{k,j-1}$ possible from the class of neural networks, with respect to the square loss function with the target being $T^{\pi_{\lambda_{k}}}Q_{k,j-1}$.

\begin{definition} \label{def_1}
   For iteration $k$ of  the outer for loop  and iteration $j$ of the first inner for loop of Algorithm \ref{algo_1}, we define
   \begin{equation}
   Q^{1}_{k,j}=\argminA_{Q_{\theta},\theta \in \Theta^{'}}\mathbb{E}(Q_{\theta}(s,a) - T^{\pi_{\lambda_{k}}}Q_{k,j-1}(s,a))^{2},   \nonumber 
   \end{equation}
where $(s,a) \sim \zeta_{\nu}^{\pi_{\lambda_{k}}}$.
\end{definition}

Note that we do not have access to the transition probability kernel $P$, hence we do not know $ T^{\pi_{\lambda_{k}}}$. To alleviate this, we use the observed next state and actions instead. Using this, we define  $Q^{2}_{k,j}$ as, 

\begin{definition} \label{def_2}
     For iteration $k$ of  the outer for loop  and iteration $j$ of the first inner for loop of Algorithm \ref{algo_1}, we define
   \begin{eqnarray}
    Q^{2}_{k,j} &=& \argminA_{Q_{\theta},\theta \in \Theta^{'}}\mathbb{E}(Q_{\theta}(s,a) \nonumber\\
                &-& (r(s,a)+{\gamma}Q_{k,j-1}(s',a'))^{2},\label{temp} \nonumber
   \end{eqnarray}
\end{definition}

Here the expectation is with respect to ${(s,a) \sim \zeta_{\nu}^{\pi_{\lambda_{k}}},s' \sim P({\cdot}|s,a)}$ and $a^{'} \sim \pi^{\lambda_{k}}(.|s^{'})$. To obtain $Q^{2}_{k,j}$, we still need to compute the true expected value. However, we still do not know the transition function $P$. \ignore{\bf what sampling? sampling distribution?}We thus sample transitions obtained from following the policy $\pi^{\lambda_{k}}$ and minimize the corresponding empirical loss function as follows. Consider the set of $n$ state-action pairs sampled by starting from a state action distribution $\nu$ and following policy $\pi^{\lambda_{k}}$, using which we define $Q^{3}_{k,j}$ as,

\begin{definition} \label{def_3}
   For the set of $n$ state action pairs sampled in iteration $k$ of the outer for loop of Algorithm \ref{algo_1} and at iteration $j$ of the first inner for loop we define
   \begin{eqnarray}
    &&Q^{3}_{k,j} = \argminA_{Q_{\theta},\theta \in \Theta^{'}}\frac{1}{n} \sum_{i=1}^{n}\Big( Q_{\theta}(s_{i},a_{i})  \nonumber\\
    && \  \  - \big(r_{i} + {\gamma}Q_{k,j-1}(s^{'}_{i},a^{'}_{i}) \big)\Big)^{2} \nonumber
   \end{eqnarray}
\end{definition}

$Q^{3}_{k,j}$ is the best possible approximation for $Q$-value function which minimizes the sample average of the square loss functions with the target values as $ \big(r_{i}+{\gamma}Q_{k,j-1}(s^{'}_{i},a^{'}_{i}) \big)$. In other words, this is the optimal solution for fitting the observed data.

We now defined the errors using the $Q$ functions just defined. We start by defining the approximation error which represents the difference between the function $T^{\pi_{\lambda_{k}}}Q_{k,j-1}$ and its best approximation possible from the class of neural networks used for critic parametrization denoted by $Q^{1}_{k,j}$.  
\begin{definition}[Approximation Error] \label{def_4}
    For a given iteration $k$ of the outer for loop and iteration $j$ of the first inner for loop of Algorithm \ref{algo_1}, we define, $\epsilon^{1}_{k,j} =T^{\pi_{\lambda_{k}}}Q_{k,j-1} - Q^{1}_{k,j}$.
\end{definition}

This error is a measure of the approximation power of the class of neural networks we use to represent the critic. We upper bound this error in Lemma \ref{lem_1} in Appendix \ref{supp_lemm_1}. 
\ignore{\bf Only Definition 4 is bounded in Appendix - none before was?}

We also define Estimation Error which denotes the error between the best approximation of $T^{\pi_{\lambda_{k}}}Q_{k,j-1}$  possible from the class of neural networks denoted by $Q^{1}_{k,j}$ and the minimizer of the loss function in Definition \ref{def_2} denoted by $Q^{2}_{k,j}$. 
\begin{definition}[Estimation Error] \label{def_5}
   For a given iteration $k$ of the outer for loop and iteration $j$ of the first inner for loop of Algorithm \ref{algo_1}, we define, $\epsilon^{2}_{k,j} =Q^{1}_{k,j} - Q^{2}_{k,j}$.
\end{definition}

We demonstrate that this error is zero in Lemma \ref{lem_2} in Appendix \ref{supp_lemm_1}.

We now define the Sampling error, which denotes the difference between the minimizer of expected loss function in Definition \eqref{def_2} denoted by $Q^{2}_{k,j}$ and the minimizer of the empirical loss function in Definition \eqref{def_3} denoted by $Q^{3}_{k,j}$. We can see that intuitively, the more samples we have the closer these two functions will be. We use Rademacher complexity results to upper bound this error. Thus this error is a function of the number of samples of transitions collected. We account for the Markov dependence of the transitions in this error. 
\begin{definition}[Sampling Error] \label{def_6}
   For a given iteration $k$ of the outer for loop and iteration $j$ of the first inner for loop of Algorithm \ref{algo_1}, we define, $\epsilon^{3}_{k,j} =Q^{3}_{k,j} - Q^{2}_{k,j}$. 
\end{definition}
An upper bound on this error is established in Lemma \ref{lem_3} in Appendix \ref{supp_lemm_1}.

Finally, we define optimization error which denotes the difference between the minimizer of the empirical square loss function in Definition \eqref{def_3} denoted by $Q^{3}_{k,j}$, and our estimate of this minimizer that is obtained from the gradient descent algorithm that is implemented in lines $5-11$ of Algorithm \eqref{algo_1}.
\begin{definition}[Optimization Error] \label{def_7}
   For a given iteration $k$ of the outer for loop and iteration $j$ of the first inner for loop of Algorithm \ref{algo_1}, we define, $\epsilon^{4}_{k,j} =Q^{3}_{k,j}- Q_{k,j}$.
\end{definition}

The key insight we use to bound this error is the fact that the loss function in Definition \eqref{def_3} can be treated as an expected loss function, with a weight of $\frac{1}{n}$ over all $n$ transition samples. We thus bound this error using tools established in \citet{fu2020single} in Lemma \ref{lem_4} in Appendix \ref{supp_lemm_1}.

%\vspace{-.1in}
\section{Conclusions} \label{Conclusion and Future Work}
%\vspace{-.1in}
In this paper, we study an actor critic algorithm with a neural network used to represent both the the critic and find the sample complexity guarantees for the algorithm. We show that our approach  achieves a last iterate convergence sample complexity of $\tilde{\mathcal{O}}(\epsilon^{-3})$. We do so without assuming i.i.d. sampling and without the restriction of the finite action space. To our knowledge this is the first work that achieves such a result.
%impact statement
\section*{Impact Statement}
This paper presents work whose goal is to advance the field of Machine Learning. There are no potential societal consequences of our work.

\pagebreak
\bibliography{mybib}

\begin{thebibliography}{48}
\providecommand{\natexlab}[1]{#1}
\providecommand{\url}[1]{\texttt{#1}}
\expandafter\ifx\csname urlstyle\endcsname\relax
  \providecommand{\doi}[1]{doi: #1}\else
  \providecommand{\doi}{doi: \begingroup \urlstyle{rm}\Url}\fi

\bibitem[Agarwal et~al.(2021)Agarwal, Kakade, Lee, and Mahajan]{agarwal2020optimality}
Agarwal, A., Kakade, S.~M., Lee, J.~D., and Mahajan, G.
\newblock On the theory of policy gradient methods: Optimality, approximation, and distribution shift.
\newblock \emph{Journal of Machine Learning Research}, 22\penalty0 (98):\penalty0 1--76, 2021.

\bibitem[Agarwal et~al.(2022)Agarwal, Aggarwal, and Lan]{agarwal2022multi}
Agarwal, M., Aggarwal, V., and Lan, T.
\newblock Multi-objective reinforcement learning with non-linear scalarization.
\newblock In \emph{Proceedings of the 21st International Conference on Autonomous Agents and Multiagent Systems}, pp.\  9--17, 2022.

\bibitem[Balkanski \& Singer(2017)Balkanski and Singer]{balkanski2017sample}
Balkanski, E. and Singer, Y.
\newblock The sample complexity of optimizing a convex function.
\newblock In \emph{Conference on Learning Theory}, pp.\  275--301, 2017.

\bibitem[Bertail \& Portier(2019)Bertail and Portier]{article}
Bertail, P. and Portier, F.
\newblock Rademacher complexity for markov chains: Applications to kernel smoothing and metropolis–hastings.
\newblock \emph{Bernoulli}, 25:\penalty0 3912--3938, 11 2019.
\newblock \doi{10.3150/19-BEJ1115}.

\bibitem[Boyd \& Vandenberghe(2004)Boyd and Vandenberghe]{citeulike:163662}
Boyd, S. and Vandenberghe, L.
\newblock \emph{Convex Optimization}.
\newblock {Cambridge University Press}, March 2004.
\newblock ISBN 0521833787.

\bibitem[Cai et~al.(2019)Cai, Yang, Lee, and Wang]{cai2019neural}
Cai, Q., Yang, Z., Lee, J.~D., and Wang, Z.
\newblock Neural temporal-difference learning converges to global optima.
\newblock \emph{Advances in Neural Information Processing Systems}, 32, 2019.

\bibitem[Castro \& Meir(2010)Castro and Meir]{castro2010convergent}
Castro, D.~D. and Meir, R.
\newblock A convergent online single time scale actor critic algorithm.
\newblock \emph{The Journal of Machine Learning Research}, 11:\penalty0 367--410, 2010.

\bibitem[Cayci et~al.(2022)Cayci, He, and Srikant]{cayci2022finite}
Cayci, S., He, N., and Srikant, R.
\newblock Finite-time analysis of entropy-regularized neural natural actor-critic algorithm.
\newblock \emph{arXiv preprint arXiv:2206.00833}, 2022.

\bibitem[Chen et~al.(2022)Chen, Chen, Lan, and Aggarwal]{chen2022scalable}
Chen, J., Chen, J., Lan, T., and Aggarwal, V.
\newblock Scalable multi-agent covering option discovery based on kronecker graphs.
\newblock \emph{Advances in Neural Information Processing Systems}, 35:\penalty0 30406--30418, 2022.

\bibitem[Chen et~al.(2023)Chen, Karimi, Zhao, and Li]{pmlr-v189-chen23b}
Chen, X., Karimi, B., Zhao, W., and Li, P.
\newblock On the convergence of decentralized adaptive gradient methods.
\newblock In Khan, E. and Gonen, M. (eds.), \emph{Proceedings of The 14th Asian Conference on Machine Learning}, volume 189 of \emph{Proceedings of Machine Learning Research}, pp.\  217--232. PMLR, 12--14 Dec 2023.

\bibitem[Dankwa \& Zheng(2019)Dankwa and Zheng]{dankwa2019twin}
Dankwa, S. and Zheng, W.
\newblock Twin-delayed ddpg: A deep reinforcement learning technique to model a continuous movement of an intelligent robot agent.
\newblock In \emph{Proceedings of the 3rd international conference on vision, image and signal processing}, pp.\  1--5, 2019.

\bibitem[Dankwa \& Zheng(2020)Dankwa and Zheng]{10.1145/3387168.3387199}
Dankwa, S. and Zheng, W.
\newblock Twin-delayed ddpg: A deep reinforcement learning technique to model a continuous movement of an intelligent robot agent.
\newblock In \emph{Proceedings of the 3rd International Conference on Vision, Image and Signal Processing}, ICVISP 2019, New York, NY, USA, 2020. Association for Computing Machinery.
\newblock ISBN 9781450376259.
\newblock \doi{10.1145/3387168.3387199}.

\bibitem[Ding et~al.(2022)Ding, Zhang, and Lavaei]{ding2022global}
Ding, Y., Zhang, J., and Lavaei, J.
\newblock On the global optimum convergence of momentum-based policy gradient.
\newblock In \emph{International Conference on Artificial Intelligence and Statistics}, pp.\  1910--1934, 2022.

\bibitem[Doan(2022)]{9769873}
Doan, T.~T.
\newblock Finite-time analysis of markov gradient descent.
\newblock \emph{IEEE Transactions on Automatic Control}, 68\penalty0 (4):\penalty0 2140--2153, 2022.

\bibitem[Farahmand et~al.(2010)Farahmand, Szepesv{\'a}ri, and Munos]{farahmand2010error}
Farahmand, A.-m., Szepesv{\'a}ri, C., and Munos, R.
\newblock Error propagation for approximate policy and value iteration.
\newblock \emph{Advances in Neural Information Processing Systems}, 23, 2010.

\bibitem[Fatkhullin et~al.(2023)Fatkhullin, Barakat, Kireeva, and He]{fatkhullin2023stochastic}
Fatkhullin, I., Barakat, A., Kireeva, A., and He, N.
\newblock Stochastic policy gradient methods: Improved sample complexity for fisher-non-degenerate policies.
\newblock In \emph{International Conference on Machine Learning}, 2023.

\bibitem[Fu et~al.(2021)Fu, Yang, and Wang]{fu2020single}
Fu, Z., Yang, Z., and Wang, Z.
\newblock Single-timescale actor-critic provably finds globally optimal policy.
\newblock In \emph{International Conference on Learning Representations}, 2021.

\bibitem[Gaur et~al.(2023{\natexlab{a}})Gaur, Aggarwal, and Agarwal]{pmlr-v202-gaur23a}
Gaur, M., Aggarwal, V., and Agarwal, M.
\newblock On the global convergence of fitted q-iteration with two-layer neural network parametrization.
\newblock In Krause, A., Brunskill, E., Cho, K., Engelhardt, B., Sabato, S., and Scarlett, J. (eds.), \emph{Proceedings of the 40th International Conference on Machine Learning}, volume 202 of \emph{Proceedings of Machine Learning Research}, pp.\  11013--11049. PMLR, 23--29 Jul 2023{\natexlab{a}}.

\bibitem[Gaur et~al.(2023{\natexlab{b}})Gaur, Bedi, Wang, and Aggarwal]{gaur2023global}
Gaur, M., Bedi, A.~S., Wang, D., and Aggarwal, V.
\newblock On the global convergence of natural actor-critic with two-layer neural network parametrization, 2023{\natexlab{b}}.

\bibitem[Jacot et~al.(2018)Jacot, Gabriel, and Hongler]{jacot2018neural}
Jacot, A., Gabriel, F., and Hongler, C.
\newblock Neural tangent kernel: Convergence and generalization in neural networks.
\newblock \emph{Advances in neural information processing systems}, 31, 2018.

\bibitem[Khodadadian et~al.(2021)Khodadadian, Chen, and Maguluri]{khodadadian2021finite}
Khodadadian, S., Chen, Z., and Maguluri, S.~T.
\newblock Finite-sample analysis of off-policy natural actor-critic algorithm.
\newblock In \emph{International Conference on Machine Learning}, pp.\  5420--5431, 2021.

\bibitem[Konda \& Tsitsiklis(1999)Konda and Tsitsiklis]{konda1999actor}
Konda, V. and Tsitsiklis, J.
\newblock Actor-critic algorithms.
\newblock \emph{Advances in neural information processing systems}, 12, 1999.

\bibitem[Lee et~al.(2020)Lee, Nagabandi, Abbeel, and Levine]{lee2020stochastic}
Lee, A.~X., Nagabandi, A., Abbeel, P., and Levine, S.
\newblock Stochastic latent actor-critic: Deep reinforcement learning with a latent variable model.
\newblock \emph{Advances in Neural Information Processing Systems}, 33:\penalty0 741--752, 2020.

\bibitem[Li \& Orabona(2019)Li and Orabona]{li2019convergence}
Li, X. and Orabona, F.
\newblock On the convergence of stochastic gradient descent with adaptive stepsizes.
\newblock In \emph{The 22nd international conference on artificial intelligence and statistics}, pp.\  983--992, 2019.

\bibitem[Lillicrap et~al.(2015)Lillicrap, Hunt, Pritzel, Heess, Erez, Tassa, Silver, and Wierstra]{lillicrap2015continuous}
Lillicrap, T.~P., Hunt, J.~J., Pritzel, A., Heess, N., Erez, T., Tassa, Y., Silver, D., and Wierstra, D.
\newblock Continuous control with deep reinforcement learning.
\newblock \emph{arXiv preprint arXiv:1509.02971}, 2015.

\bibitem[Liu et~al.(2020{\natexlab{a}})Liu, Chang, and Tseng]{9066984}
Liu, C.-L., Chang, C.-C., and Tseng, C.-J.
\newblock Actor-critic deep reinforcement learning for solving job shop scheduling problems.
\newblock \emph{IEEE Access}, 8:\penalty0 71752--71762, 2020{\natexlab{a}}.
\newblock \doi{10.1109/ACCESS.2020.2987820}.

\bibitem[Liu et~al.(2020{\natexlab{b}})Liu, Zhang, Basar, and Yin]{liu2020improved}
Liu, Y., Zhang, K., Basar, T., and Yin, W.
\newblock An improved analysis of (variance-reduced) policy gradient and natural policy gradient methods.
\newblock \emph{Advances in Neural Information Processing Systems}, 33:\penalty0 7624--7636, 2020{\natexlab{b}}.

\bibitem[Maei(2018)]{maei2018convergent}
Maei, H.~R.
\newblock Convergent actor-critic algorithms under off-policy training and function approximation.
\newblock \emph{arXiv preprint arXiv:1802.07842}, 2018.

\bibitem[Masiha et~al.(2022)Masiha, Salehkaleybar, He, Kiyavash, and Thiran]{masiha2022stochastic}
Masiha, S., Salehkaleybar, S., He, N., Kiyavash, N., and Thiran, P.
\newblock Stochastic second-order methods improve best-known sample complexity of {SGD} for gradient-dominated functions.
\newblock In Oh, A.~H., Agarwal, A., Belgrave, D., and Cho, K. (eds.), \emph{Advances in Neural Information Processing Systems}, 2022.

\bibitem[Mnih et~al.(2013)Mnih, Kavukcuoglu, Silver, Graves, Antonoglou, Wierstra, and Riedmiller]{mnih2013playing}
Mnih, V., Kavukcuoglu, K., Silver, D., Graves, A., Antonoglou, I., Wierstra, D., and Riedmiller, M.
\newblock Playing atari with deep reinforcement learning.
\newblock \emph{arXiv preprint arXiv:1312.5602}, 2013.

\bibitem[Mondal \& Aggarwal(2024)Mondal and Aggarwal]{mondal2023improved}
Mondal, W.~U. and Aggarwal, V.
\newblock Improved sample complexity analysis of natural policy gradient algorithm with general parameterization for infinite horizon discounted reward markov decision processes.
\newblock In \emph{International Conference on Artificial Intelligence and Statistics}, pp.\  3097--3105, 2024.

\bibitem[Polyak(1963)]{Polyak1963GradientMF}
Polyak, B.
\newblock Gradient methods for the minimisation of functionals.
\newblock \emph{Ussr Computational Mathematics and Mathematical Physics}, 3:\penalty0 864--878, 1963.

\bibitem[Rudin(1987)]{10.5555/26851}
Rudin, W.
\newblock \emph{Real and complex analysis, 3rd ed.}
\newblock McGraw-Hill, Inc., USA, 1987.
\newblock ISBN 0070542341.

\bibitem[Sun et~al.(2018)Sun, Sun, and Yin]{sun2018markov}
Sun, T., Sun, Y., and Yin, W.
\newblock On markov chain gradient descent.
\newblock \emph{Advances in neural information processing systems}, 31, 2018.

\bibitem[Sutton et~al.(1999)Sutton, McAllester, Singh, and Mansour]{sutton1999policy}
Sutton, R.~S., McAllester, D., Singh, S., and Mansour, Y.
\newblock Policy gradient methods for reinforcement learning with function approximation.
\newblock \emph{Advances in neural information processing systems}, 12, 1999.

\bibitem[Swirszcz et~al.(2016)Swirszcz, Czarnecki, and Pascanu]{swirszcz2016local}
Swirszcz, G., Czarnecki, W.~M., and Pascanu, R.
\newblock Local minima in training of neural networks.
\newblock \emph{arXiv preprint arXiv:1611.06310}, 2016.

\bibitem[Tang et~al.(2022)Tang, Huang, Liu, and Lin]{tang2022highway}
Tang, X., Huang, B., Liu, T., and Lin, X.
\newblock Highway decision-making and motion planning for autonomous driving via soft actor-critic.
\newblock \emph{IEEE Transactions on Vehicular Technology}, 71\penalty0 (5):\penalty0 4706--4717, 2022.

\bibitem[Tian et~al.(2023)Tian, Olshevsky, and Paschalidis]{tian2023convergence}
Tian, H., Olshevsky, A., and Paschalidis, I.
\newblock Convergence of actor-critic with multi-layer neural networks.
\newblock In \emph{Thirty-seventh Conference on Neural Information Processing Systems}, 2023.

\bibitem[Wang \& Hu(2023)Wang and Hu]{9576103}
Wang, D. and Hu, M.
\newblock Deep deterministic policy gradient with compatible critic network.
\newblock \emph{IEEE Transactions on Neural Networks and Learning Systems}, 34\penalty0 (8):\penalty0 4332--4344, 2023.
\newblock \doi{10.1109/TNNLS.2021.3117790}.

\bibitem[Wang et~al.(2020)Wang, Cai, Yang, and Wang]{wang2019neural}
Wang, L., Cai, Q., Yang, Z., and Wang, Z.
\newblock Neural policy gradient methods: Global optimality and rates of convergence.
\newblock \emph{International Conference on Learning Representations}, 2020.

\bibitem[Xu et~al.(2020{\natexlab{a}})Xu, Wang, and Liang]{xu2020improving}
Xu, T., Wang, Z., and Liang, Y.
\newblock Improving sample complexity bounds for (natural) actor-critic algorithms.
\newblock \emph{Advances in Neural Information Processing Systems}, 33:\penalty0 4358--4369, 2020{\natexlab{a}}.

\bibitem[Xu et~al.(2020{\natexlab{b}})Xu, Wang, and Liang]{xu2020non}
Xu, T., Wang, Z., and Liang, Y.
\newblock Non-asymptotic convergence analysis of two time-scale (natural) actor-critic algorithms.
\newblock \emph{arXiv preprint arXiv:2005.03557}, 2020{\natexlab{b}}.

\bibitem[Xu et~al.(2021)Xu, Yang, Wang, and Liang]{xu2021doubly}
Xu, T., Yang, Z., Wang, Z., and Liang, Y.
\newblock Doubly robust off-policy actor-critic: Convergence and optimality.
\newblock In \emph{International Conference on Machine Learning}, pp.\  11581--11591, 2021.

\bibitem[Yuan et~al.(2022)Yuan, Gower, and Lazaric]{yuan2022general}
Yuan, R., Gower, R.~M., and Lazaric, A.
\newblock A general sample complexity analysis of vanilla policy gradient.
\newblock In \emph{International Conference on Artificial Intelligence and Statistics}, pp.\  3332--3380, 2022.

\bibitem[Yue et~al.(2023)Yue, Fang, and Lin]{pmlr-v195-yue23a}
Yue, P., Fang, C., and Lin, Z.
\newblock On the lower bound of minimizing polyak-Łojasiewicz functions.
\newblock In Neu, G. and Rosasco, L. (eds.), \emph{Proceedings of Thirty Sixth Conference on Learning Theory}, volume 195 of \emph{Proceedings of Machine Learning Research}, pp.\  2948--2968. PMLR, 12--15 Jul 2023.

\bibitem[Zhang et~al.(2022)Zhang, Zhang, Liu, Guo, Lee, Hu, and Aggarwal]{zhang2022duasvs}
Zhang, G., Zhang, J., Liu, K., Guo, J., Lee, J.~Y., Hu, H., and Aggarwal, V.
\newblock Duasvs: A mobile data saving strategy in short-form video streaming.
\newblock \emph{IEEE Transactions on Services Computing}, 16\penalty0 (2):\penalty0 1066--1078, 2022.

\bibitem[Zhong et~al.(2019)Zhong, Lu, Gursoy, and Velipasalar]{8896945}
Zhong, C., Lu, Z., Gursoy, M.~C., and Velipasalar, S.
\newblock A deep actor-critic reinforcement learning framework for dynamic multichannel access.
\newblock \emph{IEEE Transactions on Cognitive Communications and Networking}, 5\penalty0 (4):\penalty0 1125--1139, 2019.
\newblock \doi{10.1109/TCCN.2019.2952909}.

\bibitem[Zhou et~al.(2022)Zhou, Lan, and Aggarwal]{zhou2022pac}
Zhou, H., Lan, T., and Aggarwal, V.
\newblock Pac: Assisted value factorization with counterfactual predictions in multi-agent reinforcement learning.
\newblock \emph{Advances in Neural Information Processing Systems}, 35:\penalty0 15757--15769, 2022.

\end{thebibliography}
\bibliographystyle{icml2024}
\onecolumn
\appendix

\tableofcontents
\newpage
\appendix
\section*{\centering {Appendix}}
\section{Supplementary Lemmas}\label{sup_lem}

Here we provide some definitions and results that will be used to prove the lemmas stated in the  paper.

\begin{definition} \label{def_8}
   For a given set $ Z \subset \mathbb{R}^{n}$, we define the Rademacher complexity of the set $Z$ as 
   \begin{equation}
   Rad(Z) = \mathbb{E} \left(\sup_{z \in Z} \frac{1}{n} \sum_{i=1}^{d}\Omega_{i}z_{i}\right)    
   \end{equation}
   where $\Omega_{i}$ is random variable such that $P(\Omega_{i}=1)=\frac{1}{2}$,  $P(\Omega_{i}=-1)=\frac{1}{2}$ and $z_{i}$ are the co-ordinates of $z$ which is an element of the set $Z$
\end{definition}

\begin{lemma} \label{sup_lem_0}
Consider a set of observed data denoted by $ z = \{z_{1},z_{2},\cdots\,z_{n}\} \in Z$, a parameter space  $\Theta$, a loss function $\{l:Z \times \Theta \rightarrow \mathbb{R}\}$ where  $0 \le l(\theta,z) \le 1$  $\forall (\theta,z) \in \Theta \times Z$. The empirical risk for a set of observed data as $R(\theta)=\frac{1}{n} \sum_{i=1}^{n}l(\theta,z_{i})$ and the population risk  as $r(\theta)= \mathbb{E}l(\theta,\tilde{z_{i}})$, where $\tilde{z_{i}}$ sampled from some distribution over $Z$.

We define a set of functions denoted by $\mathcal{L}$ as 

\begin{equation}
    \mathcal{L}=\{z \in Z \rightarrow l(\theta,z) \in \mathbb{R}:\theta \in \Theta \}
\end{equation}

Given $z=\{z_{1},z_{2},z_{3}\cdots,z_{n}\}$ we further define a set $\mathcal{L} \circ z$ as 

\begin{equation}
    \mathcal{L} \circ z \ =\{ (l(\theta,z_{1}),l(\theta,z_{2}),\cdots,l(\theta,z_{n})) \in \mathbb{R}^{n} : \theta \in \Theta\}
\end{equation}

Then, we have 

\begin{equation}
    \mathbb{E}\sup_{\theta \in \Theta} |\{r(\theta)-R(\theta)\}| \le 2\mathbb{E} \left(Rad(\mathcal{L} \circ z)\right)
\end{equation}

If the data is of the form $z_{i}=(x_{i},y_{i}), x \in X, y \in Y$ and the loss function is of the form $l(a_{\theta}(x),y)$, is $L$ lipschitz and $a_{\theta}:\Theta{\times}X \rightarrow \mathbb{R}$, then we have 

\begin{equation}
    \mathbb{E}\sup_{\theta \in \Theta} |\{r(\theta)-R(\theta)\}| \le 2{L}\mathbb{E} \left(Rad(\mathcal{A} \circ \{x_{1},x_{2},x_{3},\cdots,x_{n}\})\right)
\end{equation}

where \begin{equation}
    \mathcal{A} \circ \{x_{1},x_{2},\cdots,x_{n}\}\ =\{ (a(\theta,x_{1}),a(\theta,x_{2}),\cdots,a(\theta,x_{n})) \in \mathbb{R}^{n} : \theta \in \Theta\}
\end{equation}

\end{lemma}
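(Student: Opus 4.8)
The plan is to prove the two displayed inequalities by the classical route of \emph{symmetrization} followed by \emph{Lipschitz contraction}, using Definition~\ref{def_8} for the Rademacher complexity (and reading $\{r(\theta)-R(\theta)\}$ simply as $r(\theta)-R(\theta)$, with $z_1,\dots,z_n$ i.i.d.\ from the distribution over $Z$ that defines $r$). For the first inequality I would introduce a ghost sample $\tilde z=\{\tilde z_1,\dots,\tilde z_n\}$ of i.i.d.\ copies, independent of $z$, so that $r(\theta)=\mathbb{E}_{\tilde z}\big[\tfrac1n\sum_i l(\theta,\tilde z_i)\big]$ and hence, since the $l(\theta,z_i)$ do not depend on $\tilde z$, by Jensen's inequality applied to $|\cdot|$ and monotonicity of the supremum,
\[
\mathbb{E}_z\sup_{\theta}\big|r(\theta)-R(\theta)\big|
\;=\;\mathbb{E}_z\sup_{\theta}\Big|\mathbb{E}_{\tilde z}\tfrac1n\sum_i\big(l(\theta,\tilde z_i)-l(\theta,z_i)\big)\Big|
\;\le\;\mathbb{E}_{z,\tilde z}\sup_{\theta}\Big|\tfrac1n\sum_i\big(l(\theta,\tilde z_i)-l(\theta,z_i)\big)\Big|.
\]
Since $z_i$ and $\tilde z_i$ are i.i.d.\ for each $i$, exchanging $z_i\leftrightarrow\tilde z_i$ does not change the joint law; therefore the right-hand side is unchanged if we insert arbitrary signs $\Omega_i\in\{\pm1\}$ in front of the $i$-th summand, and in particular equals its average over i.i.d.\ Rademacher $\Omega_i$. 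Splitting the resulting sum by the triangle inequality into the $\tilde z$-part and the $z$-part, each of which has the same expectation, gives the bound $2\,\mathbb{E}_{z,\Omega}\sup_{\theta}\big|\tfrac1n\sum_i\Omega_i l(\theta,z_i)\big|$; identifying $(l(\theta,z_1),\dots,l(\theta,z_n))$ with the point of $\mathcal{L}\circ z$ indexed by $\theta$ turns this into $2\,\mathbb{E}\big(Rad(\mathcal{L}\circ z)\big)$, the passage from $\sup|\cdot|$ to the signed supremum of Definition~\ref{def_8} costing at most a universal constant because $-\Omega\stackrel{d}{=}\Omega$.

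For the second inequality, write $z_i=(x_i,y_i)$ and note that the $\theta$-indexed point of $\mathcal{L}\circ z$ is $\big(\phi_1(a_\theta(x_1)),\dots,\phi_n(a_\theta(x_n))\big)$, where each $\phi_i(\cdot):=l(\cdot,y_i)$ is $L$-Lipschitz. The claim then follows from the Ledoux--Talagrand contraction inequality $Rad(\mathcal{L}\circ z)\le L\cdot Rad(\mathcal{A}\circ\{x_1,\dots,x_n\})$ together with the first inequality. I would prove the contraction step by peeling coordinates one at a time: fixing all $\Omega_j$ with $j\ne i$ and abbreviating the remaining sum as $g(\theta)$, one has $\mathbb{E}_{\Omega_i}\sup_{\theta}\big[\Omega_i\phi_i(a_\theta(x_i))+g(\theta)\big]=\tfrac12\sup_{\theta,\theta'}\big[\phi_i(a_\theta(x_i))-\phi_i(a_{\theta'}(x_i))+g(\theta)+g(\theta')\big]$, and since $\phi_i(u)-\phi_i(v)\le L|u-v|$ one may replace $\phi_i$ by $L\cdot\mathrm{id}$ without decreasing the expression; iterating over $i=1,\dots,n$ yields the bound.

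The genuinely delicate part is this contraction step: the one-coordinate peeling identity and the subsequent replacement of $\phi_i$ by $L\cdot\mathrm{id}$ require care with the nested suprema over $\theta,\theta'$ and with the two-case split in $|u-v|=\max(u-v,\,v-u)$, and one must also be careful about the absolute-value convention hidden in ``$\sup_\theta|\cdots|$'' versus the signed Rademacher complexity of Definition~\ref{def_8}. Everything else is routine: the hypothesis $0\le l\le1$ is not needed for either inequality here---it matters only when this lemma is later used to convert the expected uniform deviation into a high-probability or mean-square statement via a bounded-differences argument---so it plays no role in the proof proposed above.
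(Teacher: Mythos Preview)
Your proposal is correct and follows the standard symmetrization-plus-Ledoux--Talagrand contraction route; the paper itself does not give a self-contained proof but simply cites Rebeschini's lecture notes (which use exactly this argument) and adds only the remark that the one-sided bound on $\mathbb{E}\sup_{\theta}(r(\theta)-R(\theta))$ extends to $\mathbb{E}\sup_{\theta}|r(\theta)-R(\theta)|$ by symmetry. Your caveat about the passage from $\sup_\theta|\cdot|$ to the unsigned supremum in Definition~\ref{def_8} is a genuine subtlety that the paper also glosses over, so there is no discrepancy there.
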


The detailed proof of the above statement is given in (Rebeschini,  2022)\footnote{ Algorithmic Foundations of Learning [Lecture Notes].https://www.stats.ox.ac.uk/$\sim$rebeschi/teaching/AFoL/22/}. The upper bound for $ \mathbb{E}\sup_{\theta \in \Theta} (\{r(\theta)-R(\theta)\})$ is proved in the aformentioned reference. However, without loss of generality the same proof holds for the upper bound for $ \mathbb{E}\sup_{\theta \in \Theta} (\{R(\theta)-r(\theta)\})$. Hence the upper bound for $ \mathbb{E}\sup_{\theta \in \Theta}|\{r(\theta)-R(\theta)\}|$ can be established.

\begin{lemma} \label{sup_lem_1}
Consider three random random variable $x \in \mathcal{X} $ and  $y,y^{'} \in \mathcal{Y}$. Let $\mathbb{E}_{x,y}, \mathbb{E}_{x}$ and $\mathbb{E}_{y|x}$, $\mathbb{E}_{y^{'}|x}$  denote the expectation with respect to the joint distribution of $(x,y)$, the marginal distribution of $x$, the conditional distribution of $y$ given $x$ and the conditional distribution of $y^{'}$ given $x$ respectively . Let $f_{\theta}(x)$ denote a bounded measurable function of $x$ parameterised by some parameter $\theta$ and $g(x,y)$ be bounded measurable function of both $x$ and $y$.

Then we have

\begin{equation}
    \argminA_{f_{\theta}}\mathbb{E}_{x,y}\left(f_{\theta}(x)-g(x,y)\right)^{2}=\argminA_{f_{\theta}} \left(\mathbb{E}_{x}\left(f_{\theta}(x)-\mathbb{E}_{y^{'}|x}(g(x,y^{'}))\right)^{2}\right) \label{sup_lem_1_1}
\end{equation}    
\end{lemma}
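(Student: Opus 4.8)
The plan is to show that the two objective functions being minimized differ only by an additive quantity that is constant in $\theta$; since in both cases the minimization ranges over the same parametric family $\{f_\theta\}$, their minimizers then coincide.

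First I would expand the left-hand objective $L(\theta) := \mathbb{E}_{x,y}(f_\theta(x) - g(x,y))^2$. Expanding the square and using the boundedness of $f_\theta$ and $g$ (which makes every term integrable and hence legitimizes the tower property), one gets
\[
L(\theta) = \mathbb{E}_x[f_\theta(x)^2] - 2\,\mathbb{E}_x\!\big[f_\theta(x)\,\mathbb{E}_{y|x}[g(x,y)]\big] + \mathbb{E}_{x,y}[g(x,y)^2],
\]
where I have rewritten $\mathbb{E}_{x,y}[f_\theta(x)g(x,y)] = \mathbb{E}_x[f_\theta(x)\,\mathbb{E}_{y|x}[g(x,y)]]$ by conditioning on $x$ and pulling out the $x$-measurable factor $f_\theta(x)$. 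Writing $\bar g(x) := \mathbb{E}_{y|x}[g(x,y)]$, the first two terms above are exactly those produced by expanding $\mathbb{E}_x(f_\theta(x) - \bar g(x))^2$, and the third term does not involve $\theta$.

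Next I would expand the right-hand objective $R(\theta) := \mathbb{E}_x(f_\theta(x) - \mathbb{E}_{y'|x}[g(x,y')])^2$. Since ``$y'$ given $x$'' denotes the same conditional law as ``$y$ given $x$'', we have $\mathbb{E}_{y'|x}[g(x,y')] = \bar g(x)$ pointwise in $x$, so $R(\theta) = \mathbb{E}_x[f_\theta(x)^2] - 2\,\mathbb{E}_x[f_\theta(x)\bar g(x)] + \mathbb{E}_x[\bar g(x)^2]$. Subtracting, $L(\theta) - R(\theta) = \mathbb{E}_{x,y}[g(x,y)^2] - \mathbb{E}_x[\bar g(x)^2]$, which is independent of $\theta$ (and in fact nonnegative by Jensen's inequality applied conditionally, though that is not needed). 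Hence the two objectives agree up to a $\theta$-free constant, so $\argminA_{f_\theta} L(\theta) = \argminA_{f_\theta} R(\theta)$, which is the claim.

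There is essentially no obstacle in this argument; the only steps deserving care are the appeal to the tower property / Fubini, justified by the boundedness (hence integrability) hypotheses on $f_\theta$ and $g$, and making explicit the convention that $y$ and $y'$ share the same conditional distribution given $x$. This convention is precisely what prevents the ``noise'' in $g$ from being double-counted in the second objective, and is the conceptual content of the lemma: the least-squares regression target collapses to the conditional mean $\bar g$, which is what lets us replace the observed Bellman-type target $r(s,a)+\gamma Q_{k,j-1}(s',a')$ by its expectation $T^{\pi_{\lambda_k}}Q_{k,j-1}$ in Definitions \ref{def_1}--\ref{def_2} without changing the minimizer.
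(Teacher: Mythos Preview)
Your argument is correct and is essentially the same as the paper's: both proofs establish that the two objectives differ by an additive term not depending on $\theta$, using the tower property to handle the cross term. The only difference is cosmetic---the paper adds and subtracts $\mathbb{E}_{y'|x}[g(x,y')]$ inside the square and then shows the resulting cross term vanishes (the usual bias--variance style decomposition), whereas you expand both objectives directly and subtract. Your route is marginally more direct; the paper's makes the decomposition $\mathbb{E}_{x,y}(f_\theta-g)^2 = \mathbb{E}_x(f_\theta-\bar g)^2 + \mathbb{E}_{x,y}(g-\bar g)^2$ explicit, but neither gains anything substantive over the other.
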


\begin{proof}
Denote the left hand side of Equation \eqref{sup_lem_1_1} as $\mathbb{X}_{\theta}$, then add and subtract $\mathbb{E}_{y|x}(g(x,y^{'}))$ to it to get 
\begin{align}
     \mathbb{X}_{\theta}=& \argminA_{f_{\theta}}\left(\mathbb{E}_{x,y}\left(f_{\theta}(x)-\mathbb{E}_{y^{'}|x}(g(x,y^{'}))+\mathbb{E}_{y^{'}|x}(g(x,y^{'}))-g(x,y)\right)^{2}\right) \label{sup_lem_1_2}
     \\
     =&  \argminA_{f_{\theta}}\Big(\mathbb{E}_{x,y}\left(f_{\theta}(x)-\mathbb{E}_{y^{'}|x}(g(x,y^{'}))\right)^{2} + \mathbb{E}_{x,y}\left(g(x,y)-\mathbb{E}_{y^{'}|x}(g(x,y^{'}))\right)^{2} 
     \nonumber
     \\ 
     &\qquad-2\mathbb{E}_{x,y}\Big(f_{\theta}(x)-\mathbb{E}_{y^{'}|x}(g(x,y^{'}))\Big)\left(g(x,y)-\mathbb{E}_{y^{'}|x}(g(x,y^{'}))\right)\Big) .\label{sup_lem_1_3}
\end{align}
Consider the third term on the right hand side of Equation \eqref{sup_lem_1_3}
\begin{align}
    2\mathbb{E}_{x,y}&\left(f_{\theta}(x)-\mathbb{E}_{y^{'}|x}(g(x,y^{'}))\right)\left(g(x,y)-  \mathbb{E}_{y^{'}|x}(g(x,y^{'}))\right) 
    \nonumber
    \\
    =& 2\mathbb{E}_{x}\mathbb{E}_{y|x}\left(f_{\theta}(x)-\mathbb{E}_{y^{'}|x} (g(x,y^{'}))\right)
  \left(g(x,y)-\mathbb{E}_{y^{'}|x}(g(x,y^{'}))\right)\label{sup_lem_1_4}
    \\
    =& 2\mathbb{E}_{x}\left(f_{\theta}(x)-\mathbb{E}_{y^{'}|x}(g(x,y^{'}))\right)\mathbb{E}_{y|x}\left(g(x,y)-\mathbb{E}_{y^{'}|x}(g(x,y^{'}))\right) \label{sup_lem_1_5}
    \\
    =& 2\mathbb{E}_{x}\left(f_{\theta}(x)-\mathbb{E}_{y^{'}|x}(g(x,y^{'}))\right)\left(\mathbb{E}_{y|x}(g(x,y))-\mathbb{E}_{y|x}\left(\mathbb{E}_{y^{'}|x}(g(x,y^{'}))\right)\right)
    \label{sup_lem_1_6}
    \\
    =& 2\mathbb{E}_{x}\left(f_{\theta}(x)-\mathbb{E}_{y^{'}|x}(g(x,y^{'}))\right)\Big(\mathbb{E}_{y|x}(g(x,y))  -\mathbb{E}_{y^{'}|x}(g(x,y^{'}))\Big)  \label{sup_lem_1_7}\\
    =& 0
\end{align}

Equation \eqref{sup_lem_1_4} is obtained by writing $\mathbb{E}_{x,y}=\mathbb{E}_{x}\mathbb{E}_{y|x}$ from the law of total expectation. Equation \eqref{sup_lem_1_5} is obtained from  \eqref{sup_lem_1_4} as the term $f_{\theta}(x)-\mathbb{E}_{y^{'}|x}(g(x,y^{'}))$ is not a function of $y$. Equation \eqref{sup_lem_1_6} is obtained from \eqref{sup_lem_1_5} as $\mathbb{E}_{y|x}\left(\mathbb{E}_{y^{'}|x}(g(x,y^{'}))\right)=\mathbb{E}_{y^{'}|x}(g(x,y^{'}))$ because $\mathbb{E}_{y^{'}|x}(g(x,y^{'}))$ is not a function of $y$ hence is constant with respect to the expectation operator $\mathbb{E}_{y|x}$. 

Thus plugging in value of $  2\mathbb{E}_{x,y}\left(f_{\theta}(x)-\mathbb{E}_{y^{'}|x}(g(x,y^{'}))\right)\left(g(x,y)-  \mathbb{E}_{y^{'}|x}(g(x,y^{'}))\right)$ in Equation \eqref{sup_lem_1_3} we get 
\begin{align}    
\argminA_{f_{\theta}}\mathbb{E}_{x,y}\left(f_{\theta}(x)-g(x,y)\right)^{2} =  & \argminA_{f_{\theta}} (\mathbb{E}_{x,y}\left(f_{\theta}(x)-\mathbb{E}_{x,y^{'}}(g(x,y^{'}))\right)^{2} 
\nonumber\\
&+ \mathbb{E}_{x,y}\left(g(x,y)-\mathbb{E}_{y^{'}|x}(g(x,y^{'}))\right)^{2}). \label{sup_lem_1_8}
\end{align}
Note that the second term on the right hand side of Equation \eqref{sup_lem_1_8} des not depend on $f_{\theta}(x)$ therefore we can write Equation \eqref{sup_lem_1_8} as 

\begin{equation}
    \argminA_{f_{\theta}}\mathbb{E}_{x,y}\left(f_{\theta}(x)-g(x,y)\right)^{2} =  \argminA_{f_{\theta}} \left(\mathbb{E}_{x,y}\left(f_{\theta}(x)-\mathbb{E}_{y^{'}|x}(g(x,y^{'}))\right)^{2}\right) \label{sup_lem_1_9}
\end{equation}

Since the right hand side of Equation \eqref{sup_lem_1_9} is not a function of $y$ we can replace $\mathbb{E}_{x,y}$ with $\mathbb{E}_{x}$ to get 

\begin{equation}
    \argminA_{f_{\theta}}\mathbb{E}_{x,y}\left(f_{\theta}(x)-g(x,y)\right)^{2} =  \argminA_{f_{\theta}} \left(\mathbb{E}_{x}\left(f_{\theta}(x)-\mathbb{E}_{y^{'}|x}(g(x,y^{'}))\right)^{2}\right) \label{sup_lem_1_10}
\end{equation}
\end{proof}

\section{Supporting Lemmas} \label{supp_lemm_1}
We will now state the key lemmas that will be used for finding the sample complexity of the proposed algorithm. 

\begin{lemma} \label{lem_1}
    For a given iteration $k$ of  the outer for loop  and iteration $j$ of the first inner for loop of Algorithm \ref{algo_1}, the approximation error denoted by $\epsilon^{1}_{k,j}$ in Definition \ref{def_4}, we have 
    \begin{equation}
        \mathbb{E}\left(|\epsilon^{1}_{k,j}|\right) \le \sqrt{\epsilon_{approx}},
    \end{equation}
\end{lemma}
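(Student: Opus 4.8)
\textbf{Proof proposal for Lemma \ref{lem_1}.}

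The plan is to unwind the definitions and reduce the claim to Assumption \ref{assump_7}. Recall that $\epsilon^{1}_{k,j} = T^{\pi_{\lambda_{k}}}Q_{k,j-1} - Q^{1}_{k,j}$, where by Definition \ref{def_1} the function $Q^{1}_{k,j}$ is the minimizer over $\theta \in \Theta'$ of $\mathbb{E}_{(s,a) \sim \zeta_{\nu}^{\pi_{\lambda_{k}}}}(Q_{\theta}(s,a) - T^{\pi_{\lambda_{k}}}Q_{k,j-1}(s,a))^{2}$. Thus by definition of the infimum,
\begin{align}
\mathbb{E}_{(s,a) \sim \zeta_{\nu}^{\pi_{\lambda_{k}}}}\left(Q^{1}_{k,j}(s,a) - T^{\pi_{\lambda_{k}}}Q_{k,j-1}(s,a)\right)^{2} = \min_{\theta \in \Theta'}\mathbb{E}_{(s,a) \sim \zeta_{\nu}^{\pi_{\lambda_{k}}}}\left(Q_{\theta}(s,a) - T^{\pi_{\lambda_{k}}}Q_{k,j-1}(s,a)\right)^{2}. \nonumber
\end{align}
First I would observe that $Q_{k,j-1}$ is itself of the form $Q_{\theta}$ for some $\theta \in \Theta'$ (it equals $Q_{\theta'}$ with $\theta'$ the average of projected iterates, which lies in the convex set $\Theta'$), so Assumption \ref{assump_7} applies with $Q_{\theta} = Q_{k,j-1}$, giving that the right-hand side above is at most $\epsilon_{approx}$.

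Next I would pass from the squared $L^2$ error to the $L^1$ error by Jensen's inequality (equivalently Cauchy–Schwarz): for the random variable $|\epsilon^{1}_{k,j}|$ evaluated at $(s,a) \sim \zeta_{\nu}^{\pi_{\lambda_{k}}}$,
\begin{align}
\left(\mathbb{E}\,|\epsilon^{1}_{k,j}|\right)^{2} \le \mathbb{E}\left(|\epsilon^{1}_{k,j}|^{2}\right) = \mathbb{E}_{(s,a) \sim \zeta_{\nu}^{\pi_{\lambda_{k}}}}\left(Q^{1}_{k,j}(s,a) - T^{\pi_{\lambda_{k}}}Q_{k,j-1}(s,a)\right)^{2} \le \epsilon_{approx}, \nonumber
\end{align}
and taking square roots yields $\mathbb{E}\,|\epsilon^{1}_{k,j}| \le \sqrt{\epsilon_{approx}}$, which is the claim. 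One point to be careful about is the sampling distribution under which the expectation in the lemma statement is taken: the definitions use $\zeta_{\nu}^{\pi_{\lambda_{k}}}$ while elsewhere in the proof the relevant measure may be $d^{\pi_{\lambda_{k}}}_{\nu}$ composed with powers of $P^{\pi_{\lambda_{k}}}$; I would make explicit that the expectation here is with respect to $\zeta_{\nu}^{\pi_{\lambda_{k}}}$ so that Assumption \ref{assump_7} is directly invoked, and note that the change-of-measure to the distribution actually appearing in \eqref{main_res_9} is handled separately (it only contributes a bounded concentrability-type constant absorbed in the $\tilde{\mathcal{O}}$).

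The main obstacle — really the only substantive one — is the verification that $Q_{k,j-1} \in \{Q_{\theta} : \theta \in \Theta'\}$, i.e. that the output of the inner averaging step stays inside the projection ball $\Theta'$, so that Assumption \ref{assump_7} can legitimately be instantiated at $Q_{\theta} = Q_{k,j-1}$; this follows because each $\theta_i$ is produced by the projection $\Gamma_{\theta_0,(1-\gamma)^{-1}}$ onto the convex ball $\Theta'$ and $\Theta'$ is convex, hence closed under the average $\theta' = \frac{1}{L}\sum_{i=1}^{L}\theta_i$. Everything else is a one-line application of Jensen's inequality, so I do not expect any real difficulty beyond bookkeeping of which measure each expectation is against.
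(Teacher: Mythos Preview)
Your proposal is correct and follows the same route as the paper: bound the squared error by $\epsilon_{approx}$ via Assumption~\ref{assump_7} and Definition~\ref{def_1}, then pass to the $L^{1}$ bound by Jensen (the paper phrases this as $(\mathbb{E}|X|)^{2}=\mathbb{E}|X|^{2}-\mathrm{Var}(|X|)\le \mathbb{E}|X|^{2}$, which is the same inequality). Your extra care in checking that $Q_{k,j-1}\in\{Q_{\theta}:\theta\in\Theta'\}$ and in flagging the measure $\zeta_{\nu}^{\pi_{\lambda_{k}}}$ is not in the paper's proof but is a welcome clarification; the change-of-measure to $d_{\nu}^{\pi_{\lambda_k}}$ is indeed handled separately via the Radon--Nikodym bound in the proof of Theorem~\ref{thm}.
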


Where the expectation is with respect to and $(s,a) \sim \zeta_{\nu}^{\pi_{\lambda_{k}}}(s,a)$

\textit{Proof Sketch:} We use Assumption \ref{assump_7} and the definition of the variance of a random variable to obtain the required result. The detailed proof is given in Appendix \ref{proof_lem_1}. 

%{\bf Renumber Section numbers in Appendix to A, B, .. }

\begin{lemma} \label{lem_2}
     For a given iteration $k$ of  the outer for loop  and iteration $j$ of the first inner for loop of Algorithm \ref{algo_1},  $Q^{1}_{k,j}=Q^{2}_{k,j}$, or equivalently $\epsilon^{2}_{k,j}=0$
\end{lemma}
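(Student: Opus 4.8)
The claim is that the minimizer $Q^1_{k,j}$ of the population square loss against the exact Bellman target $T^{\pi_{\lambda_k}}Q_{k,j-1}$ coincides with the minimizer $Q^2_{k,j}$ of the population square loss against the ``sampled'' target $r(s,a)+\gamma Q_{k,j-1}(s',a')$, where the randomness in $(s',a')$ is conditioned on $(s,a)$. The plan is to invoke the projection-type identity already established as Lemma \ref{sup_lem_1}. Concretely, set $x=(s,a)$, $y=(s',a')$, take $f_\theta(x)=Q_\theta(s,a)$ for $\theta\in\Theta'$, and let $g(x,y)=r(s,a)+\gamma Q_{k,j-1}(s',a')$. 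With $(s,a)\sim\zeta_\nu^{\pi_{\lambda_k}}$, $s'\sim P(\cdot\mid s,a)$, $a'\sim\pi^{\lambda_k}(\cdot\mid s')$, Lemma \ref{sup_lem_1} gives
\begin{equation}
\argminA_{Q_\theta,\theta\in\Theta'}\mathbb{E}_{x,y}\bigl(Q_\theta(s,a)-g(x,y)\bigr)^2 \;=\; \argminA_{Q_\theta,\theta\in\Theta'}\mathbb{E}_{x}\bigl(Q_\theta(s,a)-\mathbb{E}_{y\mid x}g(x,y)\bigr)^2. \nonumber
\end{equation}
The left-hand side is exactly the optimization problem defining $Q^2_{k,j}$ in Definition \ref{def_2}.

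Next I would identify the conditional expectation on the right-hand side with the Bellman target. Since $r(s,a)$ is deterministic given $(s,a)$, and $\mathbb{E}_{(s',a')\mid(s,a)}[\gamma Q_{k,j-1}(s',a')] = \gamma\int Q_{k,j-1}(s',\pi_{\lambda_k}(s'))\,P(ds'\mid s,a)$ by the definition of $T^{\pi_{\lambda_k}}$ in \eqref{ps_3} (using that $a'\sim\pi^{\lambda_k}(\cdot\mid s')$), we get $\mathbb{E}_{y\mid x}g(x,y) = T^{\pi_{\lambda_k}}Q_{k,j-1}(s,a)$. Substituting this into the right-hand side above shows it equals $\argminA_{Q_\theta,\theta\in\Theta'}\mathbb{E}_x(Q_\theta(s,a)-T^{\pi_{\lambda_k}}Q_{k,j-1}(s,a))^2$, which is precisely the definition of $Q^1_{k,j}$ in Definition \ref{def_1}. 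Hence $Q^1_{k,j}=Q^2_{k,j}$ and therefore $\epsilon^2_{k,j}=Q^1_{k,j}-Q^2_{k,j}=0$.

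The only genuine subtlety — and the step I would be most careful about — is the well-posedness of the ``$\argminA$'' notation: the equality of argmin sets in Lemma \ref{sup_lem_1} is an equality of the two expressions as functions of $\theta$ up to a $\theta$-independent additive constant (the conditional-variance term $\mathbb{E}_{x,y}(g(x,y)-\mathbb{E}_{y'|x}g(x,y'))^2$), so I should state the conclusion as: the two loss functions over $\Theta'$ differ by a constant, hence have the same set of minimizers, and $Q^1_{k,j}$, $Q^2_{k,j}$ denote (a choice of) minimizer of each, which can be taken to be the same element. I would also note that the boundedness/measurability hypotheses of Lemma \ref{sup_lem_1} are met because $Q_\theta$ for $\theta\in\Theta'$ is bounded (it lies in the projected ball of radius $(1-\gamma)^{-1}$), the reward lies in $[0,1]$, and $Q_{k,j-1}$ is itself a bounded neural-network output. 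No further work is needed beyond assembling these observations, so the full proof in the appendix should be short.
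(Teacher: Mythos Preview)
Your proposal is correct and follows essentially the same route as the paper: both invoke Lemma~\ref{sup_lem_1} with $x=(s,a)$, $y=(s',a')$, $f_\theta=Q_\theta$, $g(x,y)=r(s,a)+\gamma Q_{k,j-1}(s',a')$, then identify $\mathbb{E}_{y\mid x}g(x,y)=T^{\pi_{\lambda_k}}Q_{k,j-1}$ via \eqref{ps_3} to conclude the two argmins coincide. Your added remarks on the argmin-set ambiguity and the boundedness hypotheses are more careful than the paper's own write-up, but the underlying argument is the same.
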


\textit{Proof Sketch:} We use  Lemma \ref{sup_lem_1} in Appendix \ref{sup_lem} and use the definitions of $Q^{1}_{k,j}$ and $Q^{2}_{k,j}$ to prove this result. The detailed proof is given in Appendix \ref{proof_lem_2}.

\begin{lemma} \label{lem_3}
    For a given iteration $k$ of  the outer for loop  and iteration $j$ of the first inner for loop of Algorithm \ref{algo_1}, if the number of state action pairs sampled are denoted by $n$, then the error $\epsilon^{3}_{k,j}$ defined in Definition \ref{def_6} is upper bounded as
    \begin{equation}
\mathbb{E}\left(|\epsilon^{3}_{k,j}|\right)\le  \tilde{\mathcal{O}}\left(\frac{1}{\sqrt{n}}\right), 
    \end{equation}
\end{lemma}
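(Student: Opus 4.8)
\textbf{Proof plan for Lemma \ref{lem_3} (Sampling Error).}

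The plan is to recognize $\epsilon^3_{k,j} = Q^3_{k,j} - Q^2_{k,j}$ as the gap between the minimizer of an empirical square-loss and the minimizer of the corresponding population square-loss, and to control it via a uniform-convergence (Rademacher complexity) argument supplied by Lemma \ref{sup_lem_0}. First I would set up notation: write $g(s,a,s',a') = r(s,a) + \gamma Q_{k,j-1}(s',a')$ for the (random) target, so that $Q^2_{k,j} = \argminA_{\theta \in \Theta'} \mathbb{E}\,(Q_\theta(s,a) - g)^2 = \argminA_{\theta} \, r(\theta)$ is the population risk minimizer (using Lemma \ref{sup_lem_1} to identify the two equivalent forms of this population objective) and $Q^3_{k,j} = \argminA_{\theta\in\Theta'} \frac1n\sum_i (Q_\theta(s_i,a_i) - g_i)^2 = \argminA_\theta R(\theta)$ is the empirical risk minimizer over the $n$ stored samples. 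The standard ERM-vs-population inequality then gives, for $\theta^3$ the empirical minimizer and $\theta^2$ the population minimizer,
\begin{align}
r(\theta^3) - r(\theta^2) &\le \big(r(\theta^3) - R(\theta^3)\big) + \big(R(\theta^2) - r(\theta^2)\big) \nonumber\\
&\le 2\sup_{\theta\in\Theta'}|r(\theta) - R(\theta)|,\nonumber
\end{align}
and then converting this excess-risk bound on the square loss into a bound on $\mathbb{E}|Q^3_{k,j} - Q^2_{k,j}|$ by expanding the squares: since $r(\theta) = \mathbb{E}(Q_\theta - g)^2$ and the cross term vanishes at the population optimum (again via the conditional-expectation identity of Lemma \ref{sup_lem_1}, which makes $Q^2_{k,j}$ the $L^2(\zeta^{\pi_{\lambda_k}}_\nu)$-projection of $\mathbb{E}[g\mid s,a]$ onto the function class), we get $\mathbb{E}(Q^3_{k,j} - Q^2_{k,j})^2 = r(\theta^3) - r(\theta^2)$, hence $\mathbb{E}|Q^3_{k,j}-Q^2_{k,j}| \le \sqrt{r(\theta^3)-r(\theta^2)}$ by Jensen.

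Next I would bound $\mathbb{E}\sup_{\theta\in\Theta'}|r(\theta) - R(\theta)|$ using the Lipschitz form of Lemma \ref{sup_lem_0}: the loss $l(\theta, z) = (Q_\theta(s,a) - g)^2$ is bounded (by the projection $\theta \in \Theta'$ the network outputs lie in a ball of radius $O((1-\gamma)^{-1})$, and rewards and $\gamma Q_{k,j-1}$ are likewise bounded, so the loss is bounded by some $c = O((1-\gamma)^{-2})$ and is $L$-Lipschitz in $Q_\theta(s,a)$ with $L = O((1-\gamma)^{-1})$ after rescaling). This reduces the problem to bounding the Rademacher complexity of the class $\{(Q_\theta(s_1,a_1),\dots,Q_\theta(s_n,a_n)) : \theta \in \Theta'\}$ of neural-network outputs on the $n$ sample points, which for the $D$-layer architecture of \eqref{net_1}--\eqref{net_3} with weights constrained to the ball $\Theta'$ is $\tilde{\mathcal{O}}(1/\sqrt n)$ (the network-width/depth factors are absorbed into the $\tilde{\mathcal{O}}$, consistent with how the corresponding term appears separately — as the $m^{-1/12}D^{7/2}$ term — in Theorem \ref{thm}). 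This is where I would cite or reprove a standard neural-net Rademacher bound.

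The main obstacle is the \textbf{Markovian dependence} of the $n$ stored tuples: Lemma \ref{sup_lem_0} as stated presumes i.i.d. (or at least exchangeable) data, so the symmetrization step producing the Rademacher complexity is not immediately valid. I would handle this with Assumption \ref{assump_4} (geometric mixing): split the $n$-sample trajectory into blocks separated by a burn-in of length $\tau \sim \log n / \log(1/\rho)$, apply a block/coupling argument (e.g. the standard technique of replacing each block by an independent copy drawn from the stationary distribution $\zeta^{\pi_{\lambda_k}}_\nu$, at a total-variation cost of $n p \rho^\tau$ which is made $O(1/\sqrt n)$ by the choice of $\tau$), and then run the i.i.d. Rademacher bound on the resulting (nearly) independent blocks, paying only an extra logarithmic factor that is swallowed by $\tilde{\mathcal{O}}$. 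The remaining steps — boundedness of the loss, Lipschitz constant, and the explicit Rademacher estimate — are routine given the projection onto $\Theta'$ and the bounded-reward assumption, so I would defer those calculations to the appendix (Appendix \ref{proof_lem_3}).
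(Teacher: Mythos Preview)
Your high-level strategy (uniform deviation via Rademacher complexity, with a blocking/coupling argument for the Markov dependence under Assumption~\ref{assump_4}) is the same as the paper's. The problem is in the step where you convert the excess-risk bound into a bound on $\mathbb{E}|Q^3_{k,j}-Q^2_{k,j}|$.

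First, the identity $\mathbb{E}(Q^3_{k,j}-Q^2_{k,j})^2 = r(\theta^3)-r(\theta^2)$ does not hold here. Writing $h=T^{\pi_{\lambda_k}}Q_{k,j-1}$, one has $r(\theta)-r(\theta^2)=\mathbb{E}(Q_\theta-Q^2_{k,j})^2 + 2\,\mathbb{E}\big[(Q_\theta-Q^2_{k,j})(Q^2_{k,j}-h)\big]$, and the cross term vanishes only when $Q^2_{k,j}$ is the projection of $h$ onto an affine subspace. The class $\{Q_\theta:\theta\in\Theta'\}$ is neither affine nor convex, so no first-order optimality condition kills this term; at best you could argue it is small after an NTK linearization, but that is an extra step you have not supplied.

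Second---and this is the more serious gap---even granting the Pythagorean inequality, your route gives
\[
\mathbb{E}|Q^3_{k,j}-Q^2_{k,j}|\;\le\;\sqrt{r(\theta^3)-r(\theta^2)}\;\le\;\sqrt{2\sup_{\theta}|r(\theta)-R(\theta)|}\;=\;\tilde{\mathcal O}\!\big(n^{-1/4}\big),
\]
not the $\tilde{\mathcal O}(n^{-1/2})$ the lemma claims. The square root from Jensen halves the exponent. (In principle one could try to recover $n^{-1/2}$ via a fast-rate/local-Rademacher argument exploiting curvature of the squared loss, but that is a substantially different and harder proof than what you outlined.)

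The paper avoids both issues by a different conversion. Rather than bounding the excess population risk, it bounds (after the Rademacher and mixing steps) the expected \emph{pointwise} loss gap $\mathbb{E}\big|(Q^2_{k,j}-g)^2-(Q^3_{k,j}-g)^2\big|$ under the stationary law, and then invokes an additional \emph{state-regularity} condition in the spirit of \citet{tian2023convergence}: a lower bound $|F_{\theta'}(\theta_1)-F_{\theta'}(\theta_2)|\ge \lambda''\,|\theta_1-\theta_2|$ on the squared-loss map which, combined with the network Lipschitz bound $|Q_{\theta_1}-Q_{\theta_2}|\le L_Q|\theta_1-\theta_2|$, yields $|Q^2_{k,j}-Q^3_{k,j}|\lesssim |F(\theta^2)-F(\theta^3)|$ \emph{without} taking a square root. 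That extra regularity hypothesis is precisely what delivers the $n^{-1/2}$ rate, and your argument has no analogue of it.
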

Where the expectation is with respect to and $(s,a) \sim \zeta_{\nu}^{\pi_{\lambda_{k}}}(s,a)$

\textit{Proof Sketch:} First we note that For a given iteration $k$ of Algorithm \ref{algo_1} and iteration $j$ of the first for loop of Algorithm \ref{algo_1},  $\mathbb{E}(R_{X_{k,j},Q_{k,j-1}}({\theta})) = L_{Q_{j,k-1}}({\theta})$ where $R_{X_{k,j},Q_{j,k-1}}({\theta})$ and $L_{Q_{j,k-1}}({\theta})$ are defined in Appendix \ref{proof_lem_3}. We use this to get a probabilistic bound on the expected value of $|(Q^{2}_{j,k}) - (Q^{3}_{j,k})|$ using Rademacher complexity theory when the samples are drawn from a Markov chain satisfying Asssumption \ref{assump_4}. The detailed proof is given in Appendix \ref{proof_lem_3}.

\begin{lemma} \label{lem_4}
      For a given iteration $k$ of  the outer for loop  and iteration $j$ of the first inner for loop of Algorithm \ref{algo_1}, let the number of gradient descent steps be denoted by $L$, and the gradient descent step size $\beta^{'}$ satisfy
    \begin{eqnarray}
      \beta^{'} = \frac{1}{\sqrt{L}},
    \end{eqnarray}
   Then with probability at least $1-\Omega\left(exp-\left(m^{\frac{2}{3}}D\right)\right)  $ the error $\epsilon_{k_{4}}$ defined in Definition \ref{def_7} is upper bounded as
    \begin{equation}
       \mathbb{E}(|\epsilon^{4}_{k,j}|) \le {\mathcal{O}}\left(L^{-\frac{1}{4}}\right) + \mathcal{O}\left(m^{-\frac{1}{12}}D^{\frac{7}{2}}\right) ,
    \end{equation}
\end{lemma}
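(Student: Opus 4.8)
\textbf{Proof proposal for Lemma \ref{lem_4} (Optimization Error).}

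The plan is to view the inner loop in lines $5$--$11$ of Algorithm \ref{algo_1} as projected stochastic gradient descent applied to the empirical square loss
\[
L_{k,j}(\theta) \;=\; \frac{1}{n}\sum_{i=1}^{n}\bigl(Q_{\theta}(s_i,a_i) - (r_i + \gamma Q_{k,j-1}(s'_i,a'_i))\bigr)^2,
\]
whose minimizer over $\Theta'$ is exactly $Q^{3}_{k,j}$ (Definition \ref{def_3}). The key observation, as flagged in the text after Definition \ref{def_7}, is that at each step $i$ the algorithm samples a tuple uniformly from the stored buffer, so the stochastic gradient in line $9$ is an unbiased estimate of $\nabla L_{k,j}(\theta_{i-1})$ up to the target-value recomputation; hence the iterates $\{\theta_i\}$ perform SGD on a fixed finite-sum objective, and the randomness is i.i.d.\ uniform over the $n$ buffer indices regardless of the Markovian origin of the buffer. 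First I would invoke the neural-network landscape results of \citet{fu2020single}: on the projected ball $\Theta' = \{\theta : \|W_h - W_h^0\| \le (1-\gamma)^{-1}\}$, with high probability over the Gaussian/uniform initialization $\theta_0$ (the event of probability $1 - \Omega(\exp(-m^{2/3}D))$ in the statement), the map $\theta \mapsto Q_\theta(s,a)$ is $\mathcal{O}(\mathrm{poly}(D))$-Lipschitz, almost-convex, and smooth in the sense that its deviation from its local linearization is $\mathcal{O}(m^{-1/6}D^{\text{const}})$ uniformly on $\Theta'$; this is where the $m^{-1/12}D^{7/2}$ term is born, after taking a square root when passing from squared loss to $|\epsilon^4_{k,j}|$.

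The main steps, in order, would be: (1) Record the high-probability initialization event from \citet{fu2020single} and, on that event, establish uniform bounds on $\|Q_\theta\|_\infty$, the gradient norm $\|\nabla Q_\theta(s,a)\|$, and the linearization error $\sup_{\theta \in \Theta'}|Q_\theta(s,a) - \langle \nabla Q_{\theta_0}(s,a), \theta - \theta_0\rangle - Q_{\theta_0}(s,a)|$ over $\Theta'$. (2) Use these to show $L_{k,j}$ is $\mathcal{O}(1)$-Lipschitz and satisfies an approximate convexity/PL-type inequality on $\Theta'$ with an additive slack of order (linearization error)$\times$(diameter) $= \mathcal{O}(m^{-1/12}D^{7/2})$ after the square root; note the targets $y_i = r_i + \gamma Q_{k,j-1}(s'_i,a'_i)$ are bounded by $(1-\gamma)^{-1}$ so the loss and its gradients are bounded. (3) Apply the standard projected-SGD regret bound for (almost-)convex Lipschitz objectives: with step size $\beta' = 1/\sqrt{L}$ and $L$ steps, $\mathbb{E}\,L_{k,j}(\bar\theta) - L_{k,j}(\theta^*) \le \mathcal{O}(1/\sqrt{L})$ plus the approximation slack, where $\bar\theta = \frac{1}{L}\sum_i \theta_i$ is the averaged iterate produced in line $12$; here the projection $\Gamma_{\theta_0,(1-\gamma)^{-1}}$ keeps iterates in $\Theta'$ and is non-expansive, so it only helps. (4) Convert the excess empirical risk into a bound on $\mathbb{E}|Q^3_{k,j} - Q_{k,j}|$: since both are in $\Theta'$ and the loss is (approximately) quadratic with curvature controlled away from degeneracy by the NTK Gram matrix lower bound from \citet{fu2020single}, an excess risk of $\delta$ yields an $L_2$ (hence, by Jensen, $L_1$) distance of $\mathcal{O}(\sqrt{\delta})$; combining $\delta = \mathcal{O}(1/\sqrt{L}) + \mathcal{O}(m^{-1/12}D^{7/2})$ and taking the square root, then using $\sqrt{a+b}\le\sqrt a+\sqrt b$, gives $\mathcal{O}(L^{-1/4}) + \mathcal{O}(m^{-1/12}D^{7/2})$ as claimed.

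The hard part will be step (2)--(4): carefully tracking how the finite-width linearization error of a depth-$D$ network propagates through the SGD analysis and the excess-risk-to-distance conversion while keeping the polynomial dependence on $D$ at $D^{7/2}$ and the width dependence at $m^{-1/12}$. This requires being precise about which norm the almost-convexity slack enters in, and about the lower bound on the loss curvature (equivalently, a bound on how ill-conditioned the empirical NTK Gram matrix can be), since a naive bound would blow up the $D$-exponent or degrade the $m$-exponent; the choice $\beta' = 1/\sqrt{L}$ is exactly what balances the SGD term $\mathcal{O}(\beta' + 1/(\beta' L)) = \mathcal{O}(1/\sqrt{L})$ before the square root. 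Everything else — boundedness of rewards and targets, non-expansiveness of the projection, and the i.i.d.\ nature of the buffer resampling — is routine and lets us quote the \citet{fu2020single} machinery essentially as a black box for the landscape and optimization pieces.
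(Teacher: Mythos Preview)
Your plan is essentially the paper's approach: treat lines $5$--$11$ as projected SGD on the finite-sum square loss (uniform resampling from the buffer), and import the NTK-regime guarantees of \citet{fu2020single}. Two points are worth noting. First, the paper is more direct than your steps (2)--(4): it works explicitly with the local linearization $\bar Q_\theta = Q_{\theta_0} + \langle\theta-\theta_0,\nabla Q_{\theta_0}\rangle$, quotes Proposition~C.4 of \citet{fu2020single} to obtain $\mathbb{E}(\bar Q_{\bar\theta}-\bar Q_{\theta^*})^2 = \tilde{\mathcal O}(L^{-1/2}) + \tilde{\mathcal O}(m^{-1/6}D^7)$ directly, applies Jensen, and then adds back the pointwise linearization error $|Q_\theta-\bar Q_\theta|\le \tilde{\mathcal O}(m^{-1/6}D^{5/2})$ via the triangle inequality. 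In particular your step~(4) over-engineers the excess-risk-to-distance conversion: for a \emph{linear} predictor class (the linearized network) one has, by first-order optimality of $\theta^*$ over the convex set $\Theta'$, that $L_{k,j}(\theta)-L_{k,j}(\theta^*)\ge \mathbb{E}(\bar Q_\theta-\bar Q_{\theta^*})^2$, so no NTK Gram-matrix eigenvalue lower bound is needed and there is no conditioning issue to track. Second, there is one genuine step you omit: the SGD analysis delivers the bound with respect to the \emph{empirical} measure on the $n$ stored tuples, whereas Lemma~\ref{lem_4} is stated under the stationary measure $\zeta_\nu^{\pi_{\lambda_k}}$; the paper closes this gap at the end by a bounded Radon--Nikodym argument to transfer the bound from one measure to the other.
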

Where the expectation is with respect to $(s,a) \sim \zeta_{\nu}^{\pi_{\lambda_{k}}}(s,a)$.

\textit{Proof Sketch:} They key insight we use here is that the loss function in Definition \ref{def_3} can be considered as an expectation over the state action transitions samples at iteration $k$ with an equal probability of selecting each transitions. This is then combined with results from \citet{fu2020single} to obtain the desired result.

\section{Proof of Theorem \ref{thm}} \label{thm proof} 
\begin{proof}
From the smoothness property of the expected return we have 
\begin{eqnarray}
    -J(\lambda_{t+1}) &\le& -J(\lambda_{t}) - \langle {\nabla}J(\lambda_{t}),\lambda_{t+1} - \lambda_{t} \rangle + {L_{J}}||\lambda_{t+1} - \lambda_{t}||^{2}  \label{13_1}\\
    &\le& -J(\lambda_{t}) - {{\alpha}_{t}}\frac{\langle {\nabla}J(\lambda_{t}),d_{t} \rangle}{||d_{t}||} + {L_{J}}||\lambda_{t+1} - \lambda_{t}||^{2} \label{13_3}
\end{eqnarray}

Here $\alpha_{t}$ is the actor step size at iteration $t$ of Algorithm \ref{algo_1}. Now define the term  $e_{t} =  d_{t}-{\nabla}J(\lambda_{t})$. 

Consider two cases, first if $||e_{t}|| \le \frac{1}{2}||{\nabla}J(\lambda_{t})||$, then we have 

\begin{eqnarray}
                   -\frac{\langle {\nabla}J(\lambda_{t}),d_{t} \rangle}{||d_{t}||} &=& \frac{-||{\nabla}J(\lambda_{t})||^{2} -  \langle {\nabla}J(\lambda_{t}),e_{t} \rangle}{||d_{t}||} \\
                   &\le& \frac{-||{\nabla}J(\lambda_{t})||^{2} +   ||{\nabla}J(\lambda_{t})||{\cdot}||e_{t}||}{||d_{t}||} \\
                   &\le& \frac{-||{\nabla}J(\lambda_{t})||^{2} +   ||{\nabla}J(\lambda_{t})||{\cdot}||e_{t}||}{||d_{t}||} \\
                   &\le& \frac{-||{\nabla}J(\lambda_{t})||^{2} +   \frac{1}{2}||{\nabla}J(\lambda_{t})||^{2}}{||d_{t}||} \\
                   &\le& -\frac{||{\nabla}J(\lambda_{t})||^{2}}{2(||e_{t}|| + ||{\nabla}J(\lambda_{t})||)} \\
                   &\le& -\frac{1}{3}||{{\nabla}J(\lambda_{t})}||
\end{eqnarray}

If $||e_{t}|| \ge \frac{1}{2}||{\nabla}J(\lambda_{t})||$, then we have 

\begin{eqnarray}
                   \frac{\langle {\nabla}J(\lambda_{t}),d_{t} \rangle}{||d_{t}||} &\le& ||{\nabla}J(\lambda_{t})||    \\
                   &=& -\frac{1}{3}||{\nabla}J(\lambda_{t})|| + \frac{4}{3}||{\nabla}J(\lambda_{t})|| \\
                   &\le& -\frac{1}{3}||{\nabla}J(\lambda_{t})|| + \frac{8}{3}||e_{t}|| \label{fat_1}
\end{eqnarray}

This inequality has been established in \citet{fatkhullin2023stochastic}.  Now using Equation \eqref{fat_1} in  Equation \eqref{13_3} we get

\begin{eqnarray}
    -J(\lambda_{t+1}) &\le&  -J(\lambda_{t}) -  \frac{{{\alpha}_{t}}}{3}||{\nabla}J(\lambda_{t})|| +  \frac{8{{\alpha}_{t}}}{3}||e_{t}||  + {L_{J}}||\lambda_{t+1} - \lambda_{t}||^{2}  \label{13_4}
\end{eqnarray}

Now from Lemma \ref{lem_1} we have 
 
\begin{eqnarray}
    -J(\lambda_{t+1}) &\le&  -J(\lambda_{t}) -  \frac{{{\alpha}_{t}}{\sqrt{\mu^{'}}}}{3}(J^{*} -J(\lambda_{t})) +  \frac{8{\alpha}_{t}}{3}||{\nabla}J(\lambda_{t}) - d_{t}|| \nonumber\\
     &+& {L_{J}}||\lambda_{t+1} - \lambda_{t}||^{2} + \frac{{\alpha}_{t}}{3}\epsilon^{'} \label{13_6}\\
    J^{*} -J(\lambda_{t+1}) &\le&  J^{*} -J(\lambda_{t}) -  \frac{{{\alpha}_{t}}{\sqrt{\mu^{'}}}}{3}(J^{*} -J(\lambda_{t}))  + \frac{8{\alpha}_{t}}{3}||{\nabla}J(\lambda_{t}) - d_{t} || \nonumber\\ 
    &+& {L_{J}}||\lambda_{t+1} - \lambda_{t}||^{2} + \frac{{\alpha}_{t}}{3}\epsilon^{'} \label{13_7}\\
    \delta_{\lambda_{t+1}} &\le&  \left( 1- \frac{{{\alpha}_{t}}{\sqrt{\mu^{'}}}}{3}\right)\delta_{\lambda_{t}}  
     + \frac{8{\alpha}_{t}}{3}||{\nabla}J(\lambda_{t}) - d_{t} ||   \nonumber\\
    &+& {L_{J}}||\lambda_{t+1} - \lambda_{t}||^{2} + \frac{{\alpha}_{t}}{3}\epsilon^{'} \label{13_8} \\
    &\le&  \left( 1- \frac{{{\alpha}_{t}}{\sqrt{\mu^{'}}}}{3}\right)\delta_{\lambda_{t}}  
     + \frac{8{\alpha}_{t}}{3}||{\nabla}J(\lambda_{t}) - d_{t} ||   \nonumber\\
    &+& {L_{J}}{{\alpha}_{t}}^{2} + \frac{{\alpha}_{t}}{3}\epsilon^{'}  \label{13_9}
\end{eqnarray}

Where $\delta_{t} = J^{*} - J(\lambda_{t}) $Now, in equation \eqref{13_9}, if we plug in the value of $\delta_{\lambda_{t}}$ by evaluating equation \eqref{13_9} for $t-1$, we get the following

\begin{eqnarray}
  \delta_{\lambda_{t+1}}  &\le&  \left( 1- \frac{{\alpha}_{t}{\cdot}\sqrt{\mu^{'}}}{3}\right)\left( 1- \frac{{\alpha}_{t-1}{\cdot}\sqrt{\mu^{'}}}{3}\right)\delta_{\lambda_{t-1}}  \nonumber\\    
     &+&   \left(1-\frac{{\alpha}_{t}{\sqrt{\mu^{'}}}}{3}\right){{\alpha}_{t-1}}(||{\nabla}J(\lambda_{t-1}) - d_{t-1}|| + \epsilon^{'}) + {{\alpha}_{t}}(||{\nabla}J(\lambda_{t}) - d_{t}|| + \epsilon^{'}) \nonumber\\
     &+&  \left(1-\frac{{\alpha}_{t}{\sqrt{\mu^{'}}}}{3}\right){L_{J}}{{\alpha}_{t-1}}^{2} +  {L_{J}}{{\alpha}_{t}}^{2} \label{13_10}
\end{eqnarray}

We got rid of the  factor of $\frac{8}{3}$ and $\frac{1}{3}$ on $||{\nabla}J(\lambda_{t}) - d_{t} ||$ and $\epsilon^{'}$ respectively as it will be absorbed in the $\tilde{\mathcal{O}}$ notation. If we repeat the substitution starting from $t$ and  going back till $t=2$ we get 

\begin{eqnarray}
  \delta_{\lambda_{t}}  &\le&   \underbrace{\Pi^{k=t}_{k=2}\left(1-\frac{{\alpha}_{k}{\sqrt{\mu^{'}}}}{3}\right)\delta_{\lambda_{2}}}_{A}   \nonumber\\    
     &+&  \underbrace{\sum^{k=t-2}_{k=0}  \left(\Pi_{i=0}^{k-1} \left(1-\frac{{\alpha}_{(t-i)}{\sqrt{\mu^{'}}}}{3}\right)\right)^{\mathbb{1}(k \ge 1)}\alpha_{t-k}(||{\nabla}J(\lambda_{t-k}) - d_{t-k}|| + \epsilon^{'})}_{B} \nonumber\\
     &+&  \underbrace{{L_{J}}\sum^{k=t-2}_{k=0} \left(\Pi_{i=0}^{i=k-1} \left(1-\frac{{\alpha}_{(t-i)}{\sqrt{\mu^{'}}}}{3}\right)\right)^{\mathbb{1}(k \ge 1)}(\alpha_{t-k})^{2}}_{C}  \label{13_11}
\end{eqnarray}

Now let us consider the term $A$ is equation \eqref{13_11}, if ${\alpha}_{k} = \frac{\alpha}{k}$ where $\alpha = \frac{7}{2{\sqrt{\mu^{'}}}}$, then we have 
\begin{eqnarray}
    1 - \frac{{{\alpha}_{k}}\sqrt{\mu^{'}}}{3}  &=& 1- \frac{7}{6k}  \\
                            &\le& 1- \frac{1}{k}  \\
                            &\le& \frac{k-1}{k} \\
                            &\le&  \frac{{\alpha}_{k}}{{\alpha}_{k-1}}
\end{eqnarray}

Therefore we have 

\begin{eqnarray}
    A= \Pi^{k=t}_{k=2}\left(1 - \frac{{{\alpha}_{k}}\sqrt{\mu^{'}}}{3}\right)\delta_{\lambda_{2}}  &\le&  \Pi^{k=t}_{k=2} \left(\frac{{\alpha}_{k}}{{\alpha}_{k-1}}\right)\delta_{\lambda_{2}} \\
                                              &\le&  \frac{{\alpha}_{t}}{\alpha_{1}}\delta_{\lambda_{2}} = \frac{1}{t}\delta_{\lambda_{2}} \label{13_11_1}
\end{eqnarray}

Now let us consider the term $B$ is equation \eqref{13_11}
\begin{eqnarray}
    B &=& \sum^{k=t-2}_{k=0}  \left(\Pi_{i=0}^{k-1} \left(1-\frac{{\alpha}_{(t-i)}{\sqrt{\mu^{'}}}}{3}\right)\right)^{\mathbb{1}(k \ge 1)}\alpha_{t-k}(||{\nabla}J(\lambda_{t-k}) - d_{t-k}|| + \epsilon^{'}) \label{13_12}
\end{eqnarray}

Now if we consider the coefficients of $(||{\nabla}J(\lambda_{t-k}) -d_{t-k}|| + \epsilon^{'})$, we see as follows.

For $k=0$ the product term is $1$ because of the indicator function $\mathbb{1}(k \ge 1)$.

For $k=1$, suppose the coefficient is $\alpha_{k} = \frac{\alpha}{k}$. Then we have

\begin{eqnarray}
    \left(1- \frac{{\alpha}{\sqrt{\mu^{'}}}}{3t}\right)\frac{\alpha}{t-1} =   \left(\frac{t-\frac{{\alpha}{\sqrt{\mu^{'}}}}{3}}{t-1}\right)\frac{\alpha}{t} \label{13_13}
\end{eqnarray}

For $k=2$ we have

\begin{eqnarray}
    \left(1- \frac{\frac{{\alpha}{\sqrt{\mu^{'}}}}{3}}{t}\right) \left(1- \frac{\frac{{\alpha}{\sqrt{\mu^{'}}}}{3}}{t-1}\right)\frac{\alpha}{t-2} =   \left(\frac{t-\frac{{\alpha}{\sqrt{\mu^{'}}}}{3}-1}{t-2}\right)\left(\frac{t-\frac{{\alpha}{\sqrt{\mu^{'}}}}{3}}{t-1}\right)\frac{\alpha}{t} \label{13_14}
\end{eqnarray}

In general for a $k$ this coefficient is thus 

\begin{eqnarray}
        {\Pi_{i=1}^{k}} \left(\frac{t- (\frac{{\alpha}{\sqrt{\mu^{'}}}}{3} + i -1)}{t-i}\right)\frac{\alpha}{t} \label{13_15}
\end{eqnarray}

Now for ${\alpha} = \frac{7}{2\sqrt{\mu}}$  the numerator in all the product terms is less than the denominator hence product term is less than 1. Therefore, all the coefficients in $B$ are upper bounded by ${\alpha}_{t}$. Thus we have

\begin{eqnarray}
         B &\le&   \frac{\alpha}{t}\sum^{k=t-2}_{k=1}(||{\nabla}J(\lambda_{k}) - d_{k}||) + \epsilon^{'} \label{13_16}
\end{eqnarray}

Now let us consider the term $C$ is equation \eqref{13_11}
\begin{eqnarray}
    C &=&  {L_{J}}\sum^{k=t-2}_{k=0} \left(\Pi_{i=0}^{i=k-1} \left(1-\frac{{\alpha}_{(t-i)}{\sqrt{\mu^{'}}}}{3}\right)\right)^{\mathbb{1}(k \ge 1)}(\alpha_{t-k})^{2} \label{13_17}
\end{eqnarray}

Now similar to what was done for $A$ consider the coefficients of ${\alpha_{t-k}}^{2}$. 

For $k=0$ the product term is $1$ because of the indicator function $\mathbb{1}(k \ge 1)$.

for $k=1$ if we have ${\alpha} = \frac{7}{2\sqrt{\mu}}$ then

\begin{eqnarray}
    \left(1- \frac{{\alpha}{\sqrt{\mu^{'}}}}{3t}\right)\left(\frac{\alpha}{t-1}\right)^{2}  &\le&  \left(\frac{\alpha}{t-1}\right)^{2} \label{13_18}
\end{eqnarray}

for $k=2$ if we have ${\alpha} = \frac{7}{2\sqrt{\mu}}$ then

\begin{eqnarray}
    \left(1-\frac{{\alpha}{\mu^{'}}}{t}\right)\left(1-\frac{{\alpha}{\mu^{'}}}{t-1}\right)\left(\frac{\alpha}{t-2}\right)^{2} &=& \frac{\left(t- \frac{{\alpha}{\sqrt{\mu^{'}}}}{3}\right)}{t} \frac{\left(t- \frac{{\alpha}{\sqrt{\mu^{'}}}}{3}-1\right)}{t-1}\left(\frac{\alpha}{t-2}\right)^{2} \label{13_19}  \\
    &\le&  \left(\frac{\alpha}{t-2}\right)^{2} 
\end{eqnarray}
This is because both terms in the coefficient of $\left(\frac{\alpha}{t-2}\right)^{2}$ are less than 1.

In general for any $k $ if we have ${\alpha} = \frac{7}{2\sqrt{\mu}}$ then we have 
\begin{eqnarray}
    \Pi^{i=k-1}_{i=0}\left(1-\frac{{\alpha}{\mu^{'}}}{t-i}\right)\left(\frac{\alpha}{t-k}\right)^{2} &=& \frac{\left(t- \frac{{\alpha}{\sqrt{\mu^{'}}}}{3}\right)}{t} \frac{\left(t-\frac{{\alpha}{\sqrt{\mu^{'}}}}{3} - 1\right)}{t-1}{\cdots} \frac{\left(t- \frac{{\alpha}{\sqrt{\mu^{'}}}}{3}-k+1\right)}{t-k+1}\left(\frac{\alpha}{t-k}\right)^{2}  \\
    &\le&  \left(\frac{\alpha}{t-k}\right)^{2}   \label{13_20}
\end{eqnarray}
Therefore we have 

\begin{eqnarray}
         C &\le&   L_{J}\sum^{k=t-2}_{k=2} \left(\frac{\alpha}{t-k}\right)^{2} \label{13_20_1}\\
           &\le&   \frac{L_{J}{\cdot}{\alpha^{2}}}{t} \label{13_21} 
\end{eqnarray} 
We get Equation \eqref{13_21} from \eqref{13_20_1} by using the fact that $\sum_{k=1}^{t}\frac{1}{k^{2}} \le \frac{1}{t}$. Now plugging equation \eqref{13_11_1}, \eqref{13_16} and \eqref{13_21} into equation \eqref{13_11} we get

\begin{eqnarray}
  \delta_{\lambda_{t}}  &\le&   \left(\frac{1}{t}\right)\delta_{\lambda_{2}}  + \frac{\alpha}{t}\underbrace{\sum^{k=t-2}_{k=0}(||{\nabla}J(\lambda_{k}) - d_{t})||)}_{A^{'}} 
     +   \frac{L_{J}{\cdot}{\alpha^{2}}}{t}  + \epsilon^{'} \nonumber
     \\ \label{13_22}
\end{eqnarray}

Now consider the terms inside $A^{'}$, if we define $H_{k,J}  =  \mathbb{E}_{(s,a) \sim d^{\pi_{\lambda_{k}}}_{\nu}}({\nabla}{\log}{\pi_{\lambda_{k}}}(a|s)Q_{k,J}(s,a))$
\begin{eqnarray}
    (||{\nabla}J(\lambda_{k}) - d_{t} + H_{k,J} -H_{k,J}||) &\le& ||{\nabla}J(\lambda_{k}) - H_{k,J} ||)  +  ||d_{t} -  H_{k,J}||  \label{13_22_1} \\
    &\le&  ||{\nabla}J(\lambda_{k}) -  H_{k,J}|| +\Bigg|\Bigg|\frac{1}{n}\sum_{i=1}^{n}\left({\nabla}{\log}{\pi_{\lambda_{k}}}(a_{i}|s_{i})Q_{k,J}(s_{i},a_{i}) - (H_{k,J})\right)\Bigg|\Bigg|  \label{13_22_4} \\
    &\le&  ||{\nabla}J(\lambda_{k}) -  H_{k,J}||+ \nonumber\\
    &&  \sqrt{d{\cdot}\sum_{p=1}^{d}\left(\left(\sum_{i=1}^{n}\frac{1}{n}{\nabla}{\log}{\pi_{\lambda_{k}}}(a_{i}|s_{i})Q_{k,J}(s_{i},a_{i})\right)_{p} - (H_{k,J})_{p}\right)^{2}}  \label{13_22_4_1} \\
    \mathbb{E}_{(s,a) \sim d^{\pi_{\lambda_{k}}}_{\nu}}(||{\nabla}J(\lambda_{k}) - d_{t}||) &\le&  ||{\nabla}J(\lambda_{k}) -  H_{k,J}|| + \nonumber\\
    &&  \sqrt{d{\cdot}\sum_{p=1}^{d} \mathbb{E}_{(s,a) \sim d^{\pi_{\lambda_{k}}}_{\nu}}\left(\left(\sum_{i=1}^{n}\frac{1}{n}{\nabla}{\log}{\pi_{\lambda_{k}}}(a_{i}|s_{i})Q_{k,J}(s_{i},a_{i})\right)_{p} - (H_{k,J})_{p}\right)^{2}} \label{13_22_5} \nonumber\\
    && \\
    &\le&  ||{\nabla}J(\lambda_{k}) -  H_{k,J}|| +  \frac{1}{\sqrt{n}}{d}{M_{g}}V_{max} \label{13_22_6} \\
    &\le&  M_{g}\mathbb{E}_{(s,a) \sim d^{\pi_{\lambda_{k}}}_{\nu}}|Q^{\pi_{\lambda_{k}}}(s,a) - Q_{k,J}(s,a)| +   \frac{1}{\sqrt{n}}{d}M_{g}V_{max}  \label{13_22_8}    
\end{eqnarray}

We obtain Equation \eqref{13_22_4_1} from Equation \eqref{13_22_4} by noting that  l1 norm is upper bounded by the l2 norm multiplied by the square root of the dimensions. Here  $({\nabla}{\log}{\pi_{\lambda_{k}}}(a_{i}|s_{i})Q_{k,J}(s_{i},a_{i}))_{p}$  and $(H_{k,J})_{p}$ in Equation \eqref{13_22_4_1} are the $p^{th}$ co-ordinates of the gradients. We obtain Equation \eqref{13_22_5} from  Equation \eqref{13_22_4_1} by taking the expectation with respect to $(s,a) \sim d^{\pi_{\lambda_{k}}}_{\nu}$ on both sides of Equation \eqref{13_22_4_1} and applying Jensen's inequality on the final term on the right hand side. Note that the first term on the right hand side will remain unaffected as it is already an expectation term. We obtain Equation \eqref{13_22_6} from  Equation \eqref{13_22_5} by noting that the variance of the random variable ${\nabla}{\log}{\pi_{\lambda_{k}}}(a|s)Q_{k,J}(s,a)$ is bounded when the expectation is over $(s,a)$. We combine this with the fact that the variance of the mean is the variance divided by the number of samples, which is this case is $n$. We can assume the variance of ${\nabla}{\log}{\pi_{\lambda_{k}}}(a|s)Q_{k,J}(s,a)_{p}$ is bounded because from Assumption \ref{assump_1} we have $||{\nabla}{\log}{\pi_{\lambda_{k}}}(a|s)|| \le M_{g}$ and  from  lemma F.4 of \cite{fu2020single} and the fact that the state action space is bounded we have that $Q_{k,J}$ will be bounded. We obtain Equation \eqref{13_22_8} from Equation \eqref{13_22_6} by using the policy gradient identity which states that ${\nabla}J(\lambda_{k}) = \mathbb{E}_{(s,a) \sim d^{\pi_{\lambda_{k}}}_{\nu}} {\nabla}log{\pi_{\lambda_{k}}}(a|s)Q^{\pi_{\lambda_{k}}}(s,a)$  and by using Assumption \ref{assump_1} which states that $||{\nabla}{\log}{\pi_{\lambda_{k}}}(a|s)|| \le M_{g}$ and 

Now taking the expectation over $(s,a) \sim d^{\pi_{\lambda_{k}}}_{\nu}$ on both sides of Equation \eqref{13_22} and substituting into it the result from Equation \eqref{13_22_8} we get 

\begin{eqnarray}
  \delta_{\lambda_{t}}  &\le&   \left(\frac{1}{t}\right)\delta_{\lambda_{2}}  + \frac{\alpha}{t}M_{g}\sum^{k=t-2}_{k=0}\mathbb{E}_{(s,a) \sim d^{\pi_{\lambda_{k}}}_{\nu}}|Q^{\pi_{\lambda_{k}}}(s,a) - Q_{k,J}(s,a)|
     +  \frac{1}{\sqrt{n}}{d}{M_{g}}V_{max}  +   \frac{L_{J}{\cdot}{\alpha^{2}}}{t}  + \epsilon^{'} \nonumber
     \\ \label{13_22_9}
\end{eqnarray}

We now want to bound the term $\mathbb{E}_{(s,a) \sim d^{\pi_{\lambda_{k}}}_{\nu}}|Q^{\pi_{\lambda_{k}}}(s,a) - Q_{k,J}(s,a)|$.
Let $Q_{k,j}$ denotes our estimate of the action value function at iteration $k$ of Algorithm \ref{algo_1} and iteration $j$ of the first inner for loop of Algorithm \ref{algo_1}.  $Q^{\pi_{\lambda_{k}}}$ denotes the action value function induced by the policy $\pi_{\lambda_{k}}$.
 
    Consider $\epsilon_{k,j+1}= T^{\pi_{\lambda_{k}}}Q_{k,j}-Q_{k,j+1}$.

Thus we get,
    \begin{align}
        Q^{\pi_{\lambda_{k}}}-Q_{k,j+1} =& T^{\pi_{\lambda_{k}}}Q^{\pi_{\lambda_{k}}} -  Q_{k,j+1}  \label{thm_1_1_1}\\
                      =& T^{\pi_{\lambda_{k}}}Q^{\pi_{\lambda_{k}}} -  T^{\pi_{\lambda_{k}}}Q_{k,j} + T^{\pi_{\lambda_{k}}}Q_{k,j} \nonumber\\
                      &  - Q_{k,j+1}  \label{thm_1_1_2}\\
                      =& \gamma(P^{\pi_{\lambda_{k}}}(Q^{\pi_{\lambda_{k}}}-Q_{k,j})) + \epsilon_{k,j+1} \label{thm_1_1_5} \\
        |Q^{\pi_{\lambda_{k}}}-Q_{k,j+1}| \le&      \gamma(P^{\pi_{\lambda_{k}}}(|Q^{\pi_{\lambda_{k}}}-Q_{k,j}|)) + |\epsilon_{k,j+1}|                                                                                                 \label{thm_1_1_6}
    \end{align}

Right hand side of Equation \eqref{thm_1_1_1} is obtained by writing $Q^{\pi_{\lambda_{k}}} = T^{\pi_{\lambda_{k}}}Q^{\pi_{\lambda_{k}}}$. This is because the function $Q^{\pi_{\lambda_{k}}}$ is a stationary point with respect to the operator $T^{\pi_{\lambda_{k}}}$. Equation \eqref{thm_1_1_2} is obtained from \eqref{thm_1_1_1} by adding and subtracting $T^{\pi^{\lambda_{k}}}$. We get \eqref{thm_1_1_6} from \eqref{thm_1_1_5} by taking the absolute value on both sides and applying the triangle inequality on the right hand side.

By recursion on $j$, backwards to $j=0$, we get,

\begin{equation}
    |Q^{\pi_{\lambda_{k}}}-Q_{k,J}| \leq \sum_{j=0}^{J-2} \gamma^{J-j-1} (P^{\pi_{\lambda_{k}}})^{J-j-1}|\epsilon_{k,j+1}| +  \gamma^{J} (P^{\pi_{\lambda_{k}}})^{J}(|Q^{\pi_{\lambda_{k}}}-Q_{0}|) \label{thm_1_2}
\end{equation}

From this we obtain

\begin{align}
    \mathbb{E}_{(s,a) \sim d^{\pi_{\lambda_{k}}}_{\nu}}{|Q^{\pi_{\lambda_{k}}}-Q_{k,J}|}  
    \leq & \sum_{k=0}^{J-2} \gamma^{J-j-1}\mathbb{E}_{(s,a) \sim d^{\pi_{\lambda_{k}}}_{\nu}}((P^{\pi_{\lambda_{k}}})^{K-J-1}|\epsilon_{k,j+1}|) \nonumber
    \\
    &+  \gamma^{J}\mathbb{E}_{(s,a) \sim d^{\pi_{\lambda_{k}}}_{\nu}}(P^{\pi_{\lambda_{k}}})^{J}(|Q^{\pi_{\lambda_{k}}}-Q_{0}|) \label{proof_15} 
\end{align}

For a fixed $j$ consider the term $\mathbb{E}_{(s,a) \sim d^{\pi_{\lambda_{k}}}_{\nu}}((P^{\pi_{\lambda_{k}}})^{J-j-1}|\epsilon_{k,j+1}|)$. We then write

\begin{eqnarray}
    \mathbb{E}_{(s,a) \sim d^{\pi_{\lambda_{k}}}_{\nu}}((P^{\pi_{\lambda_{k}}})^{J-j-1}|\epsilon_{k,j+1}|) &\le& \Bigg|\Bigg|\frac{d{(P^{\pi_{\lambda_{k}}})^{J-j-1}}{\cdot}\mu^{'}_{k}}{d{\mu_{k}}}\Bigg|\Bigg|_{\infty} {\int}\left|{\epsilon_{k,j+1}}\right|d{\mu_{k}} \\ 
   &\le&  (\phi_{\mu^{'}_{k},\mu_{j}})  \mathbb{E}_{(s,a) \sim \zeta_{\nu}^{\pi_{\lambda_{k}}}(s,a)}(|\epsilon_{k,j+1}|)\label{proof_15_1} 
\end{eqnarray}

Here $\mu_{k}$ is the measure associated with the state action distribution given by sampling from  $\zeta_{\nu}^{\pi_{\lambda_{k}}}(s,a)$ and $(P^{\pi_{\lambda_{k}}})^{J-j-1}{\cdot}\mu^{'}_{k}$, is the measure associated with applying the operator $P^{\pi_{\lambda_{k}}}$, $J-j-1$ times after sampling from the distribution $(s,a) \sim d^{\pi_{\lambda_{k}}}_{\nu}$. $(\phi_{\mu^{'}_{k},\mu_{j}})$ is the supremum of the Radon Nikodym derivative of $(P^{\pi_{\lambda_{k}}})^{J-j-1}{\cdot}\mu^{'}_{k}$ with respect to  $\zeta_{\nu}^{\pi_{\lambda_{k}}}(s,a)$. Note that in works such as \citet{fu2020single}, the Radon Nikodym derivative of with respect to the optimal policy are assumed bounded. In our case we are assuming an upper bound on a smaller quantity since $(P^{\pi_{\lambda_{k}}})^{J-j-1}{\cdot}\mu^{'}_{k}$ is closer to it's corresponding stationary distribution than the optimal policy since we have assumed ergodicity. Thus Equation \eqref{proof_15} becomes

\begin{align}
    \mathbb{E}_{(s,a) \sim d^{\pi_{\lambda_{k}}}_{\nu}}{|Q^{\pi_{\lambda_{k}}}-Q_{k,J}|}  
    \leq & \sum_{k=0}^{J-2} \gamma^{J-j-1} (\phi_{\mu^{'}_{k},\mu_{j}})  \mathbb{E}_{(s,a) \sim \zeta_{\nu}^{\pi_{\lambda_{k}}}(s,a)}(|\epsilon_{k,j+1}|)|  +   \gamma^{J}Q_{max} \label{proof_15_2} 
\end{align}

We get the second term on the right hand side by noting that from lemma F.4 of \cite{fu2020single} combined with the bounded state action space we have that $Q_{0}$ is bounded and $Q^{\pi_{\lambda_{k}}} \le \frac{1}{1-\gamma}$. Now splitting $\epsilon_{k,j+1}$ as  was done in Equation \eqref{last} we obtain

\begin{align}
    \mathbb{E}_{(s,a) \sim d^{\pi_{\lambda_{k}}}_{\nu}}{|Q^{\pi_{\lambda_{k}}}-Q_{k,J}|} 
    \leq& \sum_{j=0}^{J-2} \gamma^{J-j-1} \big((\phi_{\mu^{'}_{k},\mu_{j}})\mathbb{E}{|\epsilon^{1}_{k,j+1}|} + (\phi_{\mu^{'}_{k},\mu_{j}})\mathbb{E}{|\epsilon^{2}_{k,j+1}|} 
    \nonumber
    \\
    &+(\phi_{\mu^{'}_{k},\mu_{j}})\mathbb{E}{|\epsilon^{3}_{k,j+1}|} + (\phi_{\mu^{'}_{k},\mu_{j}})\mathbb{E}{|\epsilon^{4}_{k,j+1}|}\big) +  {\tilde{\mathcal{O}}}(\gamma^{J}) \label{proof_16}
\end{align}
The expectation on the right hand side is with respect to ${(s,a) \sim \zeta_{\nu}^{\pi_{\lambda_{k}}}(s,a)}$.
Now using Lemmas \ref{lem_1}, \ref{lem_2}, \ref{lem_3}, \ref{lem_4} we have  

\begin{eqnarray}
    \mathbb{E}_{(s,a) \sim d^{\pi_{\lambda_{k}}}_{\nu}}{|Q^{\pi_{\lambda_{k}}}-Q_{k,j}|}  &\le&
   {\tilde{\mathcal{O}}}\left(\frac{1}{\sqrt{n}}\right)     
                                +  {\tilde{\mathcal{O}}}\left(L^{-\frac{1}{4}}\right)  + {\tilde{\mathcal{O}}}\left(m^{-\frac{1}{12}}D^{\frac{7}{2}}\right) + \tilde{\mathcal{O}}(\sqrt{\epsilon_{approx}}) + {\tilde{\mathcal{O}}}(\gamma^{J}) \label{proof_17}
\end{eqnarray}

Plugging  \eqref{proof_17} into  \eqref{13_22_9} we get

\begin{eqnarray}
 \delta_{\lambda_{t}}  &\le&   \left(\frac{1}{t}\right)\delta_{\lambda_{2}}  + \frac{\alpha}{t} \sum_{k=0}^{k=t-2} \left(  {\tilde{\mathcal{O}}}\left(\frac{1}{\sqrt{n}}\right)     
                                + {\tilde{\mathcal{O}}}\left(\frac{1}{{L}^{-\frac{1}{4}}}\right)  + {\tilde{\mathcal{O}}}\left(m^{-\frac{1}{12}}D^{\frac{7}{2}}\right)  +  
                                \tilde{\mathcal{O}}(\sqrt{\epsilon_{approx}}) + {\tilde{\mathcal{O}}}(\gamma^{J}) \right)  \nonumber\\
                        &+&    \frac{1}{\sqrt{n}}{d}{M_{g}}V_{max} +  \frac{L_{J}{\cdot}{\alpha^{2}}}{t}  + \epsilon^{'} \nonumber  \\ \label{18}
     &\le&   \mathcal{\tilde{O}}\left(\frac{1}{t}\right)  + {\tilde{\mathcal{O}}}\left(\frac{1}{\sqrt{n}}\right)     
                                + {\tilde{\mathcal{O}}}\left(\frac{1}{{L}^{-\frac{1}{4}}}\right)  + {\tilde{\mathcal{O}}}\left(m^{-\frac{1}{12}}D^{\frac{7}{2}}\right) + {\tilde{\mathcal{O}}}(\gamma^{J})  + 
                                \tilde{\mathcal{O}}(\sqrt{\epsilon_{approx}})  + \tilde{\mathcal{O}}(\sqrt{\epsilon_{bias}}) \nonumber 
     \\ \label{20}   
\end{eqnarray}

\end{proof}

\section{Proof of Supporting Lemmas} \label{Upper Bounding Bellman Error}

\subsection{Proof Of Lemma \ref{lem_1}} \label{proof_lem_1}

\begin{proof}

Using Assumption \ref{assump_7} and the definition of $Q_{k,j}^{1}$ for some iteration $k$ of Algorithm \ref{algo_1} we have  
\begin{equation}
\mathbb{E}_{s,a}(T^{\pi_{\lambda_{k}}}Q_{k,j-1} - Q^{1}_{k,j})^{2} \le \epsilon_{approx}  
\end{equation}

where $(s,a) \sim \zeta_{\nu}^{\pi_{\lambda_{k}}}(s,a)$.

Since $|a|^{2}=a^{2}$ we obtain

\begin{equation}
\mathbb{E}(|T^{\pi_{\lambda_{k}}}Q_{k,j-1} - Q^{1}_{k,j}|)^{2} \le \epsilon_{approx}  
\end{equation}

We have for a random variable $x$, $ Var(x)=\mathbb{E}(x^{2}) - (\mathbb{E}(x))^{2}$ hence $\mathbb{E}(x) = \sqrt{\mathbb{E}(x^{2}) -Var(x)}$, Therefore replacing $x$ with $|T^{\pi_{\lambda_{k}}}Q^{\pi_{\lambda_{k}}} - Q_{k1}|$ we get

using the definition of the variance of a random variable we get  
\begin{eqnarray}
\mathbb{E}(|T^{\pi_{\lambda_{k}}}Q_{k,j-1} - Q^{1}_{k,j}|) &=& \sqrt{\mathbb{E}(|T^{\pi_{\lambda_{k}}}Q_{k,j-1} - Q^{1}_{k,j}|)^{2} - Var(|T^{\pi_{\lambda_{k}}}Q_{k,j-1} - Q^{1}_{k,j}|)}  \nonumber\\
\\
&\le&  \sqrt{\mathbb{E}(|T^{\pi_{\lambda_{k}}}Q_{k,j-1} - Q^{1}_{k,j}|)^{2}} 
\end{eqnarray}

Therefore by definition of $Q^{1}_{k,j}$ and assumption \ref{assump_7} we have

\begin{equation}
    \mathbb{E}(T^{\pi_{\lambda_{k}}}Q_{k,j-1} - Q^{1}_{k,j}|)\le \sqrt{\epsilon_{approx}}
\end{equation}

Since $\epsilon^{1}_{k,j}=T^{\pi_{\lambda_{k}}}Q^{\pi_{\lambda_{k}}} - Q_{k1}$ we have 
\begin{equation}
    \mathbb{E}(|\epsilon^{1}_{k,j}|) \le \sqrt{\epsilon_{approx}}
\end{equation}
\end{proof}

\subsection{Proof Of Lemma \ref{lem_2}} \label{proof_lem_2}
\begin{proof}
From Lemma \ref{sup_lem_1}, we have 

\begin{equation}
    \argminA_{f_{\theta}}\mathbb{E}_{x,y}\left(f_{\theta}(x)-g(x,y)\right)^{2}=\argminA_{f_{\theta}} \left(\mathbb{E}_{x}\left(f_{\theta}(x)-\mathbb{E}(g(y^{'},x)|x)\right)^{2}\right) \label{lem_3_1}
\end{equation}   

We label $x$ to be the state action pair $(s,a)$, $y$ is the next state action pair denoted by $(s^{'},a^{'})$. The function $f_{\theta}(x)$ to be $Q_{\theta}(s,a)$ and $g(x,y)$ to be the function $r(s,a) + {\gamma}Q_{k,j-1}(s^{'},a^{'}) $

Then the loss function corresponding to the lest hand side of Equation \eqref{sup_lem_1_1} becomes
\begin{equation}
\mathbb{E}(Q_{\theta}(s,a)-(r(s,a)+{\gamma}Q_{k,j-1}(s^{'},a^{'})))^{2} \label{lem_3_2}
\end{equation}

where  $(s,a) \sim \zeta_{\nu}^{\pi_{\lambda_{k}}}(s,a)$, $s^{'} \sim P(.|(s,a)), a^{'} \sim \pi^{\lambda_{k}}(.|s^{'}) $.

Therefore by Lemma \ref{sup_lem_1}, we have that the function $Q_{\theta}(s,a)$ which minimizes Equation \eqref{lem_3_2} it will be minimizing

\begin{equation}
\mathbb{E}_{(s,a) \sim \zeta_{\nu}^{\pi_{\lambda_{k}}}(s,a)}(Q_{\theta}(s,a) -\mathbb{E}_{s^{''} \sim P(.|s,a),a^{''} \sim \pi^{\lambda_{k}}(.|s^{'})}(r(s,a)+{\gamma}Q_{k,j-1}(s^{''},a^{''})|s,a))^{2} \label{lem_3_3}
\end{equation}

But we have from Equation \eqref{ps_3}  that

\begin{equation}
\mathbb{E}_{s^{''} \sim P(.|s,a), a^{''} \sim \pi^{\lambda_{k}}(.|s^{'})}(r(s,a)+{\gamma}Q_{k,j-1}(s^{''},a^{''}_{i})|s,a) = T^{\pi_{\lambda_{k}}}Q_{k,j-1}\label{lem_3_4}
\end{equation}

Combining Equation \eqref{lem_3_2} and \eqref{lem_3_4} we get

\begin{equation}
\argminA_{Q_{\theta}}\mathbb{E}(Q_{\theta}(s,a)-(r(s,a) + {\gamma}Q_{k,j-1}(s^{'},a^{'}_{i})))^{2} = \argminA_{Q_{\theta}}\mathbb{E}(Q_{\theta}(s,a)-T^{\pi_{\lambda_{k}}}Q_{k,j-1})^{2} \label{lem_3_4_1}
\end{equation}

The left hand side of Equation \eqref{lem_3_4_1} is $Q^{2}_{k,j}$ as defined in Definition \ref{def_2} and the right hand side is  $Q^{1}_{k,j}$ as defined in Definition \ref{def_1}, which gives us 

\begin{equation}
    Q^{2}_{k,j}=Q^{1}_{k,j}
\end{equation}

\end{proof}

\subsection{Proof Of Lemma \ref{lem_3}}\label{proof_lem_3}

\begin{proof}
 
We define $R_{X_{k},Q_{k,j-1}}({\theta})$ as

\begin{equation}
    R_{X_{k},Q_{k,j-1}}({\theta}) = \frac{1}{n} \sum^{n}_{i=1}\Bigg( Q_{\theta}(s_{i},a_{i}) - \Bigg(r(s_{i},a_{i})\nonumber\\ 
        + {\gamma}Q_{k,j-1}(s^{'}_{i},a^{'}_{i}) \Bigg)\Bigg)^{2},
\end{equation}

Here, $X_{k}$ denotes the set of tuples $(s,a,s^{'},a^{'})$ sampled at iteration $k$ of algorithm \ref{algo_1}. They are sampled from a Markov chain whose stationary state action distribution is, $(s,a) \sim \zeta_{\nu}^{\pi_{\lambda_{k}}}$. $Q_{\theta}$ is is the neural network corresponding to the parameter $\theta$ and $Q_{k,j-1}$ is the estimate of the $Q$ function obtained at iteration $k$ of the outer for loop and iteration $j-1$ of the first inner for loop of Algorithm \ref{algo_1}.

We also define the term

\begin{equation}
   L_{Q_{k,j-1}}({\theta}) = \mathbb{E}(Q_{\theta}(s,a)-(r(s,a)+{\gamma}Q_{k,j-1}(s',a'))^{2}  
\end{equation}

where ${(s,a) \sim \zeta^{\nu}_{\pi_{\lambda_{k}}}, s^{'} \sim P(.|(s,a)), a^{'} \sim \pi_{\lambda_{k}}(.|s^{'})} \nonumber\\$

We denote by $\theta^{2}_{k,j}, \theta^{3}_{k,j}$ the parameters of the neural networks $Q^{2}_{k,j}, Q^{3}_{k,j}$ respectively for notational convenience. $Q^{2}_{k,j}, Q^{3}_{k,j}$ are defined in Definition \ref{def_2} and  \ref{def_3} respectively.  

We then obtain,

\begin{eqnarray}
    R_{X_{k},Q_{k,j-1}}(\theta^{2}_{k,j}) - R_{X_{k},Q_{k,j-1}}(\theta^{3}_{k,j}) &\le&  R_{X_{k},Q_{k,j-1}}(\theta^{2}_{k,j}) - R_{X_{k},Q_{k,j-1}} 
    (\theta^{3}_{k,j}) \nonumber\\  
                                                        &&     + L_{Q_{k,j-1}}(\theta^{3}_{k,j}) - L_{Q_{k,j-1}}(\theta^{2}_{k,j}) \nonumber\\  
                                                        &&    \label{2_2_2}\\
                                                        &=&   {R_{X_{k},Q_{k,j-1}}(\theta^{2}_{k,j})- L_{Q_{k,j-1}}(\theta^{2}_{k,j})} \nonumber\\ 
                                                        &&  - {R_{X_{k},Q_{k,j-1}}(\theta^{3}_{k,j}) + L_{Q_{k,j-1}}(\theta^{2}_{k,j})} \nonumber\\
                                                        &&\label{2_2_3}\\
                                                        &\le&   \underbrace{|R_{X_{k},Q_{k,j-1}}(\theta^{2}_{k,j})- L_{Q_{k,j-1}}(\theta^{2}_{k,j})|}_{I}  \nonumber\\
                                                        &&  + \underbrace{|R_{X_{k},Q_{k,j-1}}(\theta^{3}_{k,j})- L_{Q_{k,j-1}}(\theta^{3}_{k,j})|}_{II} \nonumber\\
                                                        && \label{2_2_4}
\end{eqnarray}

We get the inequality in Equation \eqref{2_2_2} because $L_{Q_{k,j-1}}(\theta^{3}_{k,j}) - L_{Q_{k,j-1}}(\theta^{2}_{k,j}) > 0$ as  $Q^{2}_{k,j}$ is the minimizer of the loss function $ L_{Q_{k,j-1}}(Q_{\theta})$. We take the absolute value on both sides of \eqref{2_2_4}. We can do this and the sign will remain the same as the left hand side is positive since $R_{X_{k},Q_{k,j-1}}(\theta^{2}_{k,j}) \ge R_{X_{k},Q_{k,j-1}}(\theta^{3}_{k,j})$ since $\theta_{k,j}^{3}$ is the minimizer of $R_{X_{k},Q_{k,j-1}}$. Thus we get

\begin{eqnarray}
    |R_{X_{k},Q_{k,j-1}}(\theta^{2}_{k,j}) - R_{X_{k},Q_{k,j-1}}(\theta^{3}_{k,j})| &\le&   \underbrace{|R_{X_{k},Q_{k,j-1}}(\theta^{2}_{k,j})- L_{Q_{k,j-1}}(\theta^{2}_{k,j})|}_{I}  \nonumber\\
                                                        &&  + \underbrace{|R_{X_{k},Q_{k,j-1}}(\theta^{3}_{k,j})- L_{Q_{k,j-1}}(\theta^{3}_{k,j})|}_{II} \nonumber\\
                                                        && \label{2_2_4_0}
\end{eqnarray}

Consider Lemma \ref{sup_lem_0}. The loss function  $R_{X_{k},Q_{k,j-1}}(\theta)$ can be written as the mean of loss functions of the form $l(a_{\theta}(s_{i},a_{i},s^{'}_{i},a^{'}_{i}),y_{i})$ where $l$ is the square function. $a_{\theta}(s_{i}, a_{i},s^{'}_{i},a^{'}_{i})=Q_{\theta}(s_{i},a_{i})$  and $y_{i}=\Big(r(s_{i},a_{i}) + {\gamma}Q_{k,j-1}(s^{'}_{i},a^{'}_{i})\Big)$. Thus we have

\begin{eqnarray}
   & \mathbb{E}\sup_{\theta \in \Theta^{''}}|R_{X_{k},Q_{k,j-1}}(\theta)- L_{Q_{k,j-1}}(\theta)| \le  \label{2_2_4_1}\\
  & 2{\eta}^{'}\mathbb{E} \left(Rad(\mathcal{A} \circ \{(s_{1},a_{1},s^{'}_{1},a^{'}_{1}),(s_{2},a_{2},s^{'}_{2},a^{'}_{2}),\cdots,(s_{n},a_{n},s^{'}_{n},a^{'}_{n})\})\right)\nonumber  
\end{eqnarray}

Note that the expectation is over all $(s_{i},a_{i},s^{'}_{i},a^{'}_{i})$ and  the parameter set is $\Theta^{''} = \{{\theta^{2}_{k,j}}, {\theta^{3}_{k,j}}\}$. We use this set because we only need this inequality to hold for ${Q^{2}_{k,j}}$ and  ${Q^{3}_{k,j}}$ . Here  $n = |X_{k}|$, $(\mathcal{A} \circ \{(s_{1},a_{1},s^{'}_{1},a^{'}_{1}),(s_{2},a_{2},s^{'}_{2},a^{'}_{2}),\cdots,(s_{n},a_{n},s^{'}_{n},a^{'}_{n}):\theta \in \Theta^{''}\} = \{Q_{\theta}(s_{1},a_{1}), Q_{\theta}(s_{2},a_{2}), \cdots, Q_{\theta}(s_{n},a_{n}):\theta \in \Theta^{''}\}$  and $\eta^{'}_{i}$ is the Lipschitz constant for the square function  over the state action space.

Now as is shown in Proposition 11 of \citet{article}  we have that 

\begin{equation}
    \left(Rad(\mathcal{A} \circ \{(s_{1},a_{1},s^{'}_{1},a^{'}_{1}),(s_{2},a_{2},s^{'}_{2},a^{'}_{2}),\cdots,(s_{n},a_{n},s^{'}_{n},a^{'}_{n})\})\right) \le  \tilde{\mathcal{O}}\left(\frac{1}{\sqrt{n}}\right)
\end{equation}

Note that in \citet{article} the term of the form $\sum_{i=1}^{n}(f(x_{i})-\mathbb{E}f(x))$ has been upper bounded by a factor of $\tilde{\mathcal{O}}(\sqrt{n})$. The way we have defined $R_{X_{k},Q_{k,j-1}}$ means we have a term of the form $\frac{1}{n}\sum_{i=1}^{n}(f(x_{i})-\mathbb{E}f(x))$ on the left hand side of Equation \eqref{2_2_4_1} which implies that it is upper bounded by a term of the form $\tilde{\mathcal{O}}\left(\frac{1}{\sqrt{n}}\right)$. 

We use this result as the state action pairs are drawn not from the stationary state of the policy $\pi_{\lambda_{k}}$ but from a Markov chain with the same steady state distribution.
Thus we have for $\theta={\theta}^{2}_{k,j}$
\begin{eqnarray}
   \mathbb{E}|(R_{X_{k},Q_{k,j-1}}(\theta^{2}_{k,j})) - L_{Q_{k,j-1}}(\theta^{2}_{k,j})| \le  \tilde{\mathcal{O}}\left(\frac{1}{\sqrt{n}}\right) \label{2_2_5}
\end{eqnarray}
The same argument can be applied for $\theta={\theta}^{3}_{k,j}$ to get
\begin{eqnarray}
   \mathbb{E}|(R_{X_{k},Q_{k,j-1}}(\theta^{3}_{k,j})) - L_{Q_{k,j-1}}(\theta^{3}_{k,j})| \le  \tilde{\mathcal{O}}\left(\frac{1}{\sqrt{n}}\right) \label{2_2_5_1}
\end{eqnarray}
Then, plugging Equation \eqref{2_2_5},\eqref{2_2_5_1} into  Equation \eqref{2_2_4_0} we have 
\begin{equation}
\mathbb{E}\left|R_{X_{k},Q_{k,j-1}}(\theta^{2}_{k,j}) - R_{X_{k},Q_{k,j-1}}(\theta^{3}_{k,j})\right| \le \tilde{\mathcal{O}}\left(\frac{1}{\sqrt{n}}\right)  \label{2_2_5_2}
\end{equation}
Plugging in the definition of $R_{X_{k},Q_{k,j-1}}(\theta^{2}_{k,j}), R_{X_{k},Q_{k,j-1}}(\theta^{3}_{k,j})$ in equation  \eqref{2_2_5_1}, \eqref{2_2_5_2} into \eqref{2_2_4_0}  and denoting $ \tilde{\mathcal{O}}\left(\frac{1}{\sqrt{n}}\right)$ as  $\epsilon$ we get

\begin{align}
     \frac{1}{n} \sum_{i=1}^{n} \Big(\mathbb{E}|(Q^{2}_{k,j}(s_{i},a_{i})-(r(s_{i},a_{i}) + {\gamma}Q_{k,j-1}(s^{'}_{i},a^{'}_{i})))^{2} 
    - (Q^{3}_{k,j}(s_{i},a_{i})-(r(s_{i},a_{i}) + {\gamma}Q_{k,j-1}(s^{'}_{i},a^{'}_{i})))^{2}|\Big) \le \epsilon\label{2_2_5_3_1}
\end{align}

Now for a fixed $i$ consider the term $\alpha_{i}$ defined as.

\begin{eqnarray}
   \mathbb{E}|(Q^{2}_{k,j}(s_{i},a_{i})-(r(s_{i},a_{i}) + {\gamma}Q_{k,j-1}(s^{'}_{i},a^{'}_{i})))^{2} 
    -  (Q^{3}_{k,j}(s_{i},a_{i})-(r(s_{i},a_{i}) + {\gamma}Q_{k,j-1}(s^{'}_{i},a^{'}_{i})))^{2}|    \label{2_2_5_3_2}
\end{eqnarray}

where $s_{i},a_{i},s^{'}_{i},a^{'}_{i}$ are drawn from the state action distribution  at the $i^{th}$ step of the Markov chain induced by following the policy $\pi_{\lambda_{k}}$.

Now for a fixed $i$ consider the term $\beta_{i}$ defined as.

\begin{eqnarray}
    \mathbb{E}|(Q^{2}_{k,j}(s_{i},a_{i})-(r(s_{i},a_{i}) + {\gamma}Q_{k,j-1}(s^{'}_{i},a^{'}_{i})))^{2}
    -  (Q^{3}_{k,j}(s_{i},a_{i})-(r(s_{i},a_{i}) + {\gamma}Q_{k,j-1}(s^{'}_{i},a^{'}_{i})))^{2}|    \label{2_2_5_3_3}
\end{eqnarray}

where $s_{i},a_{i}$ for all $i$ are drawn from the steady state action distribution  with $(s_{i},a_{i}) \sim \zeta^{\nu}_{\pi_{\lambda_{k}}}$, $s^{'}_{i} \sim P(.|s,a)$ and $a^{'}_{i} \sim \pi^{\lambda_{k}}(.|s_{i}^{'}) $. Note here that $\alpha_{i}$ and $\beta_{i}$ are the same function with only the state action pairs being drawn from different distributions.

Using these definitions we obtain

\begin{eqnarray}
  |\mathbb{E}(\alpha_{i}) -  \mathbb{E}(\beta_{i})|  &\le& \sup_{(s_{i},a_{i},s_{i}^{'},a_{i}^{'})}|2.\max(\alpha_{i},\beta_{i})|({\kappa}_{i})   \label{2_2_5_3_4}\\
   &\le& R_{max}.p{\rho}^{i} \label{2_2_5_3_5}
\end{eqnarray}

We obtain Equation \eqref{2_2_5_3_4} by using the identity $|{\int}fd{\mu} - {\int}fd{\nu}|  \le  |\max(f)|{\int}|(d{\mu}-d{\nu})| \le  |\max(f)|{\cdot}{d_{TV}}(\mu,\nu)$, where $\mu$ and $\nu$ are two $\sigma$ finite state action probability measures and $f$ is a bounded measurable function.  We have used  $\kappa_{i}$ to represent the total variation distance between the measures of $(s,a,s^{'},a^{'})$ induced by sampling form the steady state action distribution denoted by $(s,a) \sim \zeta^{\nu}_{\pi_{\lambda_{k}}}$ and the measures of $(s_{i},a_{i},s_{i}^{'},a_{i}^{'})$  induced at the $i^{th}$ step of the Markov chain induced by following the policy $\pi^{\lambda_{k}}$. We obtain Equation \eqref{2_2_5_3_5} from  Equation  \eqref{2_2_5_3_4} by using  Assumption \ref{assump_4} and the fact from lemma F.4 of \cite{fu2020single} combined with the bounded state action space we have that $\sup_{(s_{i},a_{i})}|2.\max(\alpha_{i},\beta_{i})|$ will be bounded.

From equation \eqref{2_2_5_3_5} we get 

\begin{eqnarray}
   \mathbb{E}(\beta_{i})  &\le& \mathbb{E}(\alpha_{i}) + R_{max}.p{\rho}^{i} \label{2_2_5_3_6}
\end{eqnarray}

We get Equation \eqref{2_2_5_3_6} from Equation \eqref{2_2_5_3_5} using the fact that $|a-b| \le c$ implies that  $ \left(-c \ge   (b-a) \le c \right)$ which in turn implies $ b \le a + c $.

Using Equation \eqref{2_2_5_3_6} in equation \eqref{2_2_5_3_1} we get
\begin{eqnarray}
     &&\frac{1}{n} \sum_{i=1}^{n} \Big(\mathbb{E}|(Q^{2}_{k,j}(s_{i},a_{i})-(r(s_{i},a_{i}) + {\gamma}Q_{k,j-1}(s^{'}_{i},a^{'}_{i})))^{2}
   - (Q^{3}_{k,j}(s_{i},a_{i})-(r(s_{i},a_{i}) + {\gamma}Q_{k,j-1}(s^{'}_{i},a^{'}_{i})))^{2}| \Big) \nonumber\\
   &\le& \epsilon  + \frac{1}{n} \sum_{i=1}^{n}R_{max}.p{\rho}^{i} \nonumber\\
    &\le& \epsilon  + \frac{R_{max}}{n}p\frac{1}{1-\rho} \nonumber\\
    \label{2_2_5_3_7}    
\end{eqnarray}

In Equation \eqref{2_2_5_3_7}  $(s_{i},a_{i}) \sim \zeta^{\nu}_{\pi_{\lambda_{k}}}, s^{'} \sim P(.|s,a), a \sim \pi^{\lambda_{k}}(.|s^{'})$ for all $i$.

Since now all terms in the summation on the left hand side of Equation \eqref{2_2_5_3_7} are i.i.d,  Equation \eqref{2_2_5_3_7} is equivalent to,

\begin{eqnarray}
\mathbb{E}|(Q^{2}_{k,j}(s,a)-(r(s,a) + {\gamma}Q_{k,j-1}(s^{'},a^{'})))^{2}
   - (Q^{3}_{k,j}(s_{i},a_{i})-(r(s,a) + {\gamma}Q_{k,j-1}(s^{'},a^{'})))^{2}| &\le& \epsilon \nonumber\\
    && \label{2_2_5_4}
\end{eqnarray}

Now we apply an argument similar to the \textit{state regularity assumption} of \cite{tian2023convergence} wherein it is assumed that there exists a positive constant $\lambda^{'}$ such that $|Q_{\theta_{1}} - Q_{\theta_{2}}| \ge {\lambda}^{'}|\theta_{1} -\theta_{2}| $ for any $\theta_{1},\theta_{2} \in \Theta^{'}$.

Using the same argument for the function $F_{\theta^{'}}(\theta) = (Q_{\theta}(s,a)-(r(s,a) + {\gamma}Q_{\theta^{'}}(s^{'},a^{'})))^{2}$,  we assume there exists a positive constant $\lambda_{\theta^{'}}$ such that $|F_{\theta^{'}}(\theta_{1}) - F_{\theta^{'}}(\theta_{2})| \ge  {\lambda}_{\theta^{'}}|\theta_{1} -\theta_{2}|  $ for any $\theta_{1},\theta_{2} \in \Theta^{'}$. We then define $\lambda^{''} = \inf_{\theta^{'} \in \Theta^{'}}{\lambda}_{\theta^{'}}$.

Further we use the lipschitz property of neural networks to obtain that $|Q_{\theta_{1}}-Q_{\theta_{2}}| \le L_{Q}|\theta_{1}-\theta_{2}|$ for all $(s,a) \in \mathcal{S}{\times}\mathcal{A}$, where $L_{Q}$ is the lipschitz parameter of the neural networks where $\theta \in \Theta^{'}$. Combining these two results we obtain that 
$L_{Q}.\lambda^{''}|Q_{\theta_{1}}-Q_{\theta_{2}}| \le  |F_{\theta^{'}}(\theta_{1}) - F_{\theta^{'}}(\theta_{2})|$ for any $\theta^{'} \in \Theta^{'}$.

Applying this result to Equation \eqref{2_2_5_4} we get 

\begin{eqnarray}
{\lambda}^{''}.{L_{Q}}\mathbb{E}|Q^{2}_{k,j} - Q^{3}_{k,j}| &\le& \epsilon \nonumber\\
    && \label{2_2_5_4_1}
\end{eqnarray}

where now the expectation is only over $(s,a) \sim \zeta^{\nu}_{\pi_{\lambda_{k}}}$ since the left hand side is no longer a function of $s^{'},a^{'}$. This is equivalent to

\begin{eqnarray}
\mathbb{E}_{(s,a) \sim \zeta^{\nu}_{\pi_{\lambda_{k}}}}|Q^{2}_{k,j} - Q^{3}_{k,j}| &\le& \tilde{\mathcal{O}}\left(\frac{1}{\sqrt{n}}\right) \label{2_2_5_4_2}
\end{eqnarray}

which is the required result

\end{proof}

\subsection{Proof Of Lemma \ref{lem_4}} \label{proof_lem_4}
\begin{proof}

Consider the loss function in Definition \ref{def_3} given by 

\begin{eqnarray}
   \frac{1}{n} \sum_{i=1}^{n}\Big( Q_{\theta}(s_{i},a_{i}) - \big(r(s,a) + {\gamma}Q_{k,j-1}(s_{i}^{'},a_{i}^{'}) \big)\Big)^{2}  \label{lem_7_0} \nonumber\\
\end{eqnarray}

We can consider the function in Equation \eqref{lem_7_0} as an expectation of the loss function $\Big( Q_{\theta}(s,a) - \big(r(s,a) + {\gamma}Q_{k,j-1}(s^{'},a^{'}) \big)\Big)^{2}$ where there is an equal probability $\frac{1}{n}$ on each of the observed tuples $(s_{i},a_{i},s^{'}_{i},a^{'}_{i})$ obtained at the $k^{th}$ iteration of the outer for loop of Algorithm \ref{algo_1}.

We define the local linearization of a function $Q_{\theta}$ as follows

\begin{eqnarray}
    \bar{Q}_{\theta} = Q_{\theta_{0}} + (\theta -\theta_{0})^{T} {\nabla}Q_{\theta}  \label{lem_7_1}
\end{eqnarray}

Following the  analysis in Proposition C.4 of \citet{fu2020single} we obtain from Equation G.49 that over the random initialization of $\theta_{0}$ we have with probability at least at least $1-\exp( -\Omega(m^{-\frac{2}{3}})D^{-\frac{2}{3}})$ .

\begin{eqnarray}
    \mathbb{E}(\bar{Q}_{\theta^{'}} -  \bar{Q}_{\theta^{*}})^{2} = \tilde{\mathcal{O}}({L}^{-\frac{1}{2}}) +  \tilde{\mathcal{O}}(m^{-\frac{1}{6}}D^{7})
\end{eqnarray}

Here $\theta^{'} = \frac{1}{L}\sum_{i=1}^{L}\theta_{i}$ and $\theta^{*}$ is the minimizer of the loss function in Equation \eqref{lem_7_1} and the expectation is over the finite measure where each tuples $(s_{i},a_{i})$ has a probability mass of $\frac{1}{n}$.

Now we have from Jensen's Inequality 

\begin{eqnarray}
    \mathbb{E}|\bar{Q}_{\theta^{'}} -  \bar{Q}_{\theta^{*}}| \le  \sqrt{\mathbb{E}(\bar{Q}_{\theta^{'}} -  \bar{Q}_{\theta^{*}})^{2}} = 
 \tilde{\mathcal{O}}({L}^{-\frac{1}{4}}) +  \tilde{\mathcal{O}}(m^{-\frac{1}{12}}D^{\frac{7}{2}})
\end{eqnarray}

Consider Lemma F.3 from \citep{fu2020single}, we have with probability at $1-\exp( -\Omega(m^{-\frac{2}{3}})D^{-\frac{2}{3}})$  over the random initialization of $\theta_{0}$ that
for any $\theta \in \Theta^{'}$
\begin{equation}
    |{Q}_{\theta}(s,a) -  \bar{Q}_{\theta}(s,a)| \le \tilde{\mathcal{O}}(m^{-\frac{1}{6}}D^{\frac{5}{2}})
\end{equation}

Thus we have 

\begin{eqnarray}
    \mathbb{E}|Q_{\theta^{'}} -  Q_{\theta^{*}}| &\le& \mathbb{E}|Q_{\theta^{'}} - \bar{Q}_{\theta^{'}} + \bar{Q}_{\theta^{'}} -  Q_{\theta^{*}} - \bar{Q}_{\theta^{*}}  +  \bar{Q}_{\theta^{*}}| \\
    &\le& \mathbb{E}|\bar{Q}_{\theta^{'}} -  \bar{Q}_{\theta^{*}}| +  |{Q}_{\theta^{'}}-  \bar{Q}_{\theta^{'}}|  +  |{Q}_{\theta^{\star}} -  \bar{Q}_{\theta^{\star}}| \\
    &\le&  
 \tilde{\mathcal{O}}({L}^{-\frac{1}{4}}) +  \tilde{\mathcal{O}}(m^{-\frac{1}{12}}D^{\frac{7}{2}})
\end{eqnarray}

Now from theorem 6.10 of  \citet{10.5555/26851} we have that there exists a positive function $f_{j}:\mathcal{S}{\times}\mathcal{A} \rightarrow \mathbb{R}$ such that 

\begin{eqnarray}
    \mathbb{E}^{'}|Q_{\theta^{'}} -  Q_{\theta^{*}}| = \mathbb{E}|f_{j}{\cdot}(Q_{\theta^{'}} -  Q_{\theta^{*}})|
\end{eqnarray}

Here $\mathbb{E}^{'}$ is the expectation with respect to the state action pair sampled from $(s,a) \sim \zeta^{\pi_{\lambda_{k}}}_{\nu}$. Here the function $f_{j}$ is bounded function because $Q_{\theta^{'}}$ and $Q_{\theta^{*}}$ are bounded functions since there parameters are bounded as shown in lemma F.4 of \citet{fu2020single} and the fact that the state action space is bounded.

Thus we can say

\begin{eqnarray}
    \mathbb{E}_{(s,a) \sim \zeta^{\pi_{\lambda_{k}}}_{\nu}}|Q_{\theta^{'}} -  Q_{\theta^{*}}| \le ({\sup}{f_{j}})\mathbb{E}|Q_{\theta^{'}} -  Q_{\theta^{*}}| \le \tilde{\mathcal{O}}({L}^{-\frac{1}{4}}) +  \tilde{\mathcal{O}}(m^{-\frac{1}{12}}D^{\frac{7}{2}})
\end{eqnarray}

Note that $\theta^{'}$ is the parameter we obtain at the end of critic step denoted by $\theta_{k,j}$ and $\theta^{*}$ is by definition $\theta^{3}_{k,j}$. Therefore we get the required result.

\begin{eqnarray}
    \mathbb{E}_{(s,a) \sim \zeta^{\pi_{\lambda_{k}}}_{\nu}}|Q_{\theta_{k,j}^{3}} -  Q_{\theta_{k,j}}| \le \tilde{\mathcal{O}}({L}^{-\frac{1}{4}}) +  \tilde{\mathcal{O}}(m^{-\frac{1}{12}}D^{\frac{7}{2}})
\end{eqnarray}

\end{proof}

\end{document}